\def\1{\mathds{1}}
\def\vv{{\bm{v}}}
\def\vx{{\bm{x}}}
\def\mI{{\bm{I}}}
\DeclareMathAlphabet{\mathsfit}{\encodingdefault}{\sfdefault}{m}{sl}
\SetMathAlphabet{\mathsfit}{bold}{\encodingdefault}{\sfdefault}{bx}{n}
\newcommand{\E}{\mathbb{E}}
\newcommand{\R}{\mathbb{R}}
\DeclareMathOperator*{\argmax}{arg\,max}
\DeclareMathOperator*{\argmin}{arg\,min}
\definecolor{ckeyword}{HTML}{7F0055}
\definecolor{ccomment}{HTML}{3F7F5F}
\definecolor{cstring}{HTML}{2A0099}
\lstdefinestyle{numbers}{
	numbers=left,
	framexleftmargin=20pt,
	numberstyle=\tiny,
	firstnumber=auto,
	numbersep=1em,
	xleftmargin=2em
}
\lstdefinestyle{layout}{
	frame=none,
	captionpos=b,
}
\lstdefinestyle{comment-style}{
	morecomment=[l]//,
	morecomment=[s]{/*}{*/},
	commentstyle={\color{ccomment}\itshape},
}
\lstdefinestyle{string-style}{
	morestring=[b]",%
	morestring=[b]',%
	stringstyle={\color{cstring}},
	showstringspaces=false,%
}
\lstdefinestyle{keyword-style}{
	keywordstyle={\ttfamily\bfseries},
	morekeywords={
		function,
		constructor,
		int,
		bool,
		return,
		returns,
		uint
	},
	morekeywords = [2]{},
	keywordstyle = [2]{\text},
	sensitive=true,
}
\lstdefinestyle{input-encoding}{
	inputencoding=utf8,
	extendedchars=true,
	literate=
	{ℝ}{$\reals$}1%
	{→}{$\rightarrow$}1%
	{α}{$\alpha$}1%
	{β}{$\beta$}1%
	{λ}{$\lambda$}1%
	{θ}{$\theta$}1%
	{ϕ}{$\phi$}1%
}
\lstdefinestyle{escaping}{
	moredelim={**[is][\color{blue}]{\%}{\%}},
	escapechar=|,
	mathescape=true
}
\lstdefinestyle{default-style}{
	basicstyle=\fontencoding{T1}\ttfamily\footnotesize,
	style=numbers,
	style=layout,
	style=comment-style,
	style=string-style,
	style=keyword-style,
	style=input-encoding,
	style=escaping,
	tabsize=2,
	upquote=true
}
\lstdefinelanguage{BASIC}{
	language=C++,
	style=default-style
}[keywords,comments,strings]%
\newcommand{\crefrangeconjunction}{--}
\crefname{listing}{Lst.}{listings}
\crefname{line}{Lin.}{Lin.}
\crefname{appendix}{App.}{App.}
\newcommand{\app}[1]{%
	\ifbool{includeappendix}{\cref{#1}}{the appendix}%
}
\newcommand{\App}[1]{%
	\ifbool{includeappendix}{\cref{#1}}{The appendix}%
}
\theoremstyle{plain} %
\newtheorem{theorem}{Theorem}[section]
\newtheorem*{theorem*}{Theorem}
\newtheorem{lemma}{Lemma}
\newtheorem*{lemma*}{Lemma}
\newtheorem*{corollary*}{Corollary}
\newtheorem{proposition*}{Proposition*}
\theoremstyle{definition}
\renewcommand{\paragraph}[1]{\textbf{#1}$\:$}
\newcommand{\bb}[1]{\mathbb{#1}}
\newcommand{\bs}[1]{\mathds{#1}}
\newcommand{\bc}[1]{\mathcal{#1}}
\newcommand{\Z}{\bb{Z}}
\renewcommand{\P}{\bb{P}}
\newcommand{\N}{\bc{N}}
\newcommand{\imp}{H}
\newcommand{\dist}{\phi}
\newcommand{\fmnist}{\textsc{FMNIST}\xspace}
\newcommand{\diabetes}{\textsc{Diabetes}\xspace}
\newcommand{\breast}{\textsc{BreastCancer}\xspace}
\newcommand{\fmnists}{\fmnist-\textsc{Shoes}\xspace}
\newcommand{\mnistof}{\textsc{MNIST} $1$ \textsc{vs.} $5$\xspace}
\newcommand{\mnistts}{\textsc{MNIST} $2$ \textsc{vs.} $6$\xspace}
\newcommand{\adult}{\textsc{Adult}\xspace}
\newcommand{\credit}{\textsc{Credit}\xspace}
\newcommand{\mammal}{\textsc{Mammo}\xspace}
\newcommand{\bank}{\textsc{Bank}\xspace}
\newcommand{\mushroom}{\textsc{Mushroom}\xspace}
\newcommand{\spambase}{\textsc{Spambase}\xspace}
\newcommand{\tool}{\textsc{DRS}\xspace}
\newcommand{\RS}{\textsc{RS}\xspace}
\newcommand{\treeboost}{\textsc{TreeBoost}\xspace}
\newcommand{\robtreeboost}{\textsc{RobTreeBoost}\xspace}
\newcommand{\ada}{\textsc{AdaBoost}\xspace}
\newcommand{\robada}{\textsc{RobAdaBoost}\xspace}
\renewcommand{\th}{$^\text{th}$\xspace}
\title{(De-)Randomized Smoothing for \\ Decision Stump Ensembles}
\newcommand*\samethanks[1][\value{footnote}]{\footnotemark[#1]}
\author{%
Miklós Z. Horváth\thanks{Equal contribution},\, Mark Niklas Müller\samethanks,\, Marc Fischer,\, Martin Vechev\\
Department of Computer Science\\
ETH Zurich\\
Switzerland\\
\texttt{mihorvat@ethz.ch}, \texttt{\{mark.mueller,marc.fischer,martin.vechev\}@inf.ethz.ch}
}
\begin{document}

\maketitle

\begin{abstract}

Tree-based models are used in many high-stakes application domains such as finance and medicine, where robustness and interpretability are of utmost importance. Yet, methods for improving and certifying their robustness are severely under-explored, in contrast to those focusing on neural networks. Targeting this important challenge, we propose deterministic smoothing for decision stump ensembles. Whereas most prior work on randomized smoothing focuses on evaluating arbitrary base models approximately under input randomization, the key insight of our work is that decision stump ensembles enable exact yet efficient evaluation via dynamic programming. Importantly, we obtain deterministic robustness certificates, even jointly over numerical and categorical features, a setting ubiquitous in the real world. Further, we derive an MLE-optimal training method for smoothed decision stumps under randomization and propose two boosting approaches to improve their provable robustness. An extensive experimental evaluation on computer vision and tabular data tasks shows that our approach yields significantly higher certified accuracies than the state-of-the-art for tree-based models. We release all code and trained models at \url{https://github.com/eth-sri/drs}.
\end{abstract}

\section{Introduction}
Tree-based models have long been a favourite for making decisions in high-stakes domains such as medicine and finance, due to their interpretability and exceptional performance on tabular data \cite{SHWARTZZIV202284}. However, recent results have highlighted that tree-based models are, similarly to other machine learning models \citep{BiggioCMNSLGR13, szegedy2013intriguing}, also highly susceptible to adversarial examples \citep{ChenZBH19, CartellaAFYAE21, MathovLKSE22}, raising concerns about their use in high-stakes domains where errors can have dire consequences.

While the robustness of neural models has received considerable attention \citep{singh2019abstract,xu2020automatic, GehrMDTCV18, WangPWYJ18, WengZCSHDBD18, WongK18, singh2018fast,muller2021prima,tjeng2017evaluating,dathathri2020enabling,Ehlers17,MirmanGV18, BalunovicV20, RaghunathanSL18b, GowalDSBQUAMK18}, the challenge of obtaining robustness guarantees for ensembles of tree-based models has only been investigated recently \citep{ChenZBH19, Andriushchenko019, WangZCBH20}. However, these initial works only consider numerical features and are based on worst-case approximations, which do not scale well to the difficult $\ell_p$-norm setting.

\paragraph{This Work} In this work, we address this challenge and present \tool, a novel (\textbf{D}e-)\textbf{R}andomized \textbf{S}moothing approach, for constructing robust tree-based models with deterministic $\ell_p$-norm guarantees while supporting both categorical \emph{and} numerical variables. Unlike prior work, our method is based on Randomized Smoothing (\RS) \citep{CohenRK19}, an approach that obtains robustness guarantees by evaluating a general base model under an input randomization $\dist(\vx)$. However, in contrast to standard applications of \RS, which use costly and imprecise approximations via sampling and only obtain probabilistic certificates, we leverage the structure of decision stump ensembles to compute their exact output distributions for a given input randomization scheme and thus obtain deterministic certificates. Our key insight is that this distribution can be efficiently computed by aggregating independent distributions associated with the individual features used by the ensemble.

We illustrate this idea in \cref{fig:overview}: In (a), we show an ensemble of decision stumps over three features ($x_1, x_2, x_3$), aggregated to piecewise constant functions over one feature each (discussed in \cref{sec:det_smoothing}) and evaluated under the input randomization $\dist(\vx)$, here a Gaussian. We can compute the independent probability density functions of their outputs (PDFs) (shown in (b)) directly, by evaluating the (Gaussian) cumulative density function (CDF) over the constant regions. Aggregating the individual PDFs (discussed in \cref{sec:det_smoothing}), we can efficiently compute the exact PDF (c) and CDF (d) of the ensemble's output. To evaluate and certify the smoothed model, we can now simply look up the median prediction and success probability, respectively, in the CDF, without requiring sampling.

\tool combines $\ell_p$-norm certificates over numerical features, computed as described above, with an efficient worst-case analysis for $\ell_0$-perturbations of categorical features in order to, for the first time, provide joint certificates. To train models amenable to certification with \tool, we propose a robust MLE optimality criterion for training individual stumps and two boosting schemes targeting the certified robustness of the whole ensemble.
We show empirically that \tool significantly improves on the state-of-the-art, increasing certified accuracies on established benchmarks up to \emph{4-fold}.

\textbf{Main Contributions} Our key contributions are:
\vspace{-2mm}
\begin{itemize}
	\item \tool, a novel and efficient (De-)Randomized Smoothing approach for robustness certification, enabling joint deterministic certificates over numerical and categorical variables (\cref{sec:det_smoothing}).
	\item A novel MLE optimality criterion for training decision stumps robust under input randomization and two boosting approaches for certifiably robust stump ensembles (\cref{sec:training}).
	\item An extensive empirical evaluation, demonstrating the effectiveness of our approach and establishing a new state-of-the-art in a wide range of settings (\cref{sec:eval}).
\end{itemize}
\vspace{-2mm}
\begin{figure}[t]
\centering
\scalebox{1.00}{
\begin{tikzpicture}
	\def\linesep{-1.3}
	\def\colsep{0.5}
	\def\lineheight{3em}

	\foreach \i in {0,...,2}
	{
		\node (line\i) at (0,\i*\linesep) {};
	}

	\foreach \i in {1,...,3}
	{
		\pgfmathtruncatemacro{\j}{\i - 1};
		\node[inner sep=0pt, right=-0.6+0*\colsep of line\j] (fx\j) {$\tilde{f}_\i(x_\i)$};
		\node[inner sep=0pt, right=\colsep of line\j] (d\j) {\includegraphics[height=\lineheight]{figures/overview_dist_feature_\j}};
		\node[inner sep=0pt, right=-0.2  + \colsep of d\j ] (p\j) {\includegraphics[height=\lineheight]{figures/overview_pdf_feature_\j}};
	}

 	\node[right=1.6*\colsep of p0.north] (brace0) {};
 	\node[right=1.6*\colsep of p2.south] (brace1) {};
 	\draw [decorate,decoration={brace,amplitude=10pt},yshift=0pt] ($(brace0)+(0,0.0)$) -- ($(brace1)+(0,-0.1)$) node [black,midway,xshift=0.8cm] {};

  	\node[inner sep=0pt, anchor=south west] (pdf_final) at ($(p2.south east)+(2*\colsep,-0.35)$) {\includegraphics[width=.3\textwidth]{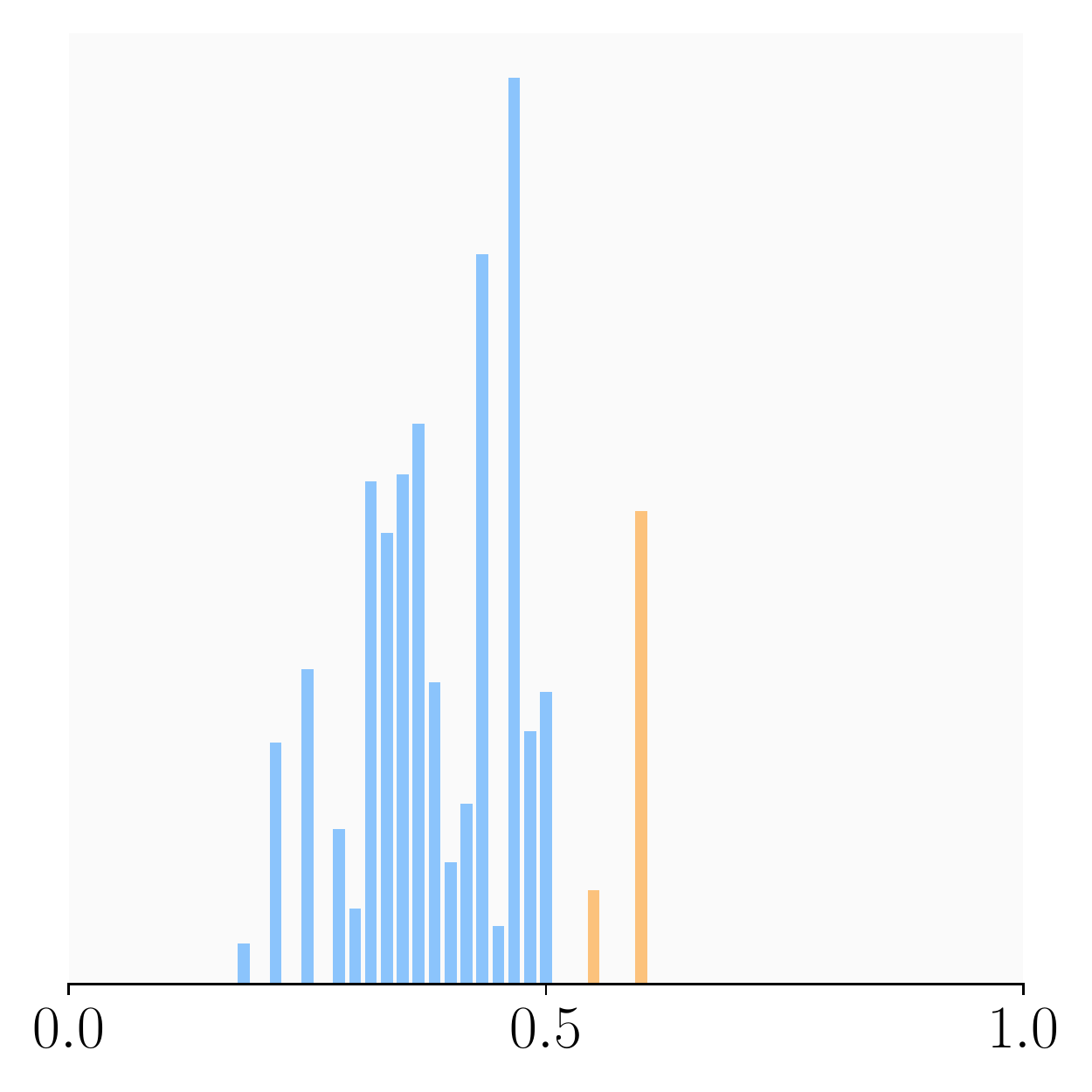}};
  	\node[inner sep=0pt, right=\colsep of pdf_final.south east, anchor=south west] (cdf_final) {\includegraphics[width=.3\textwidth]{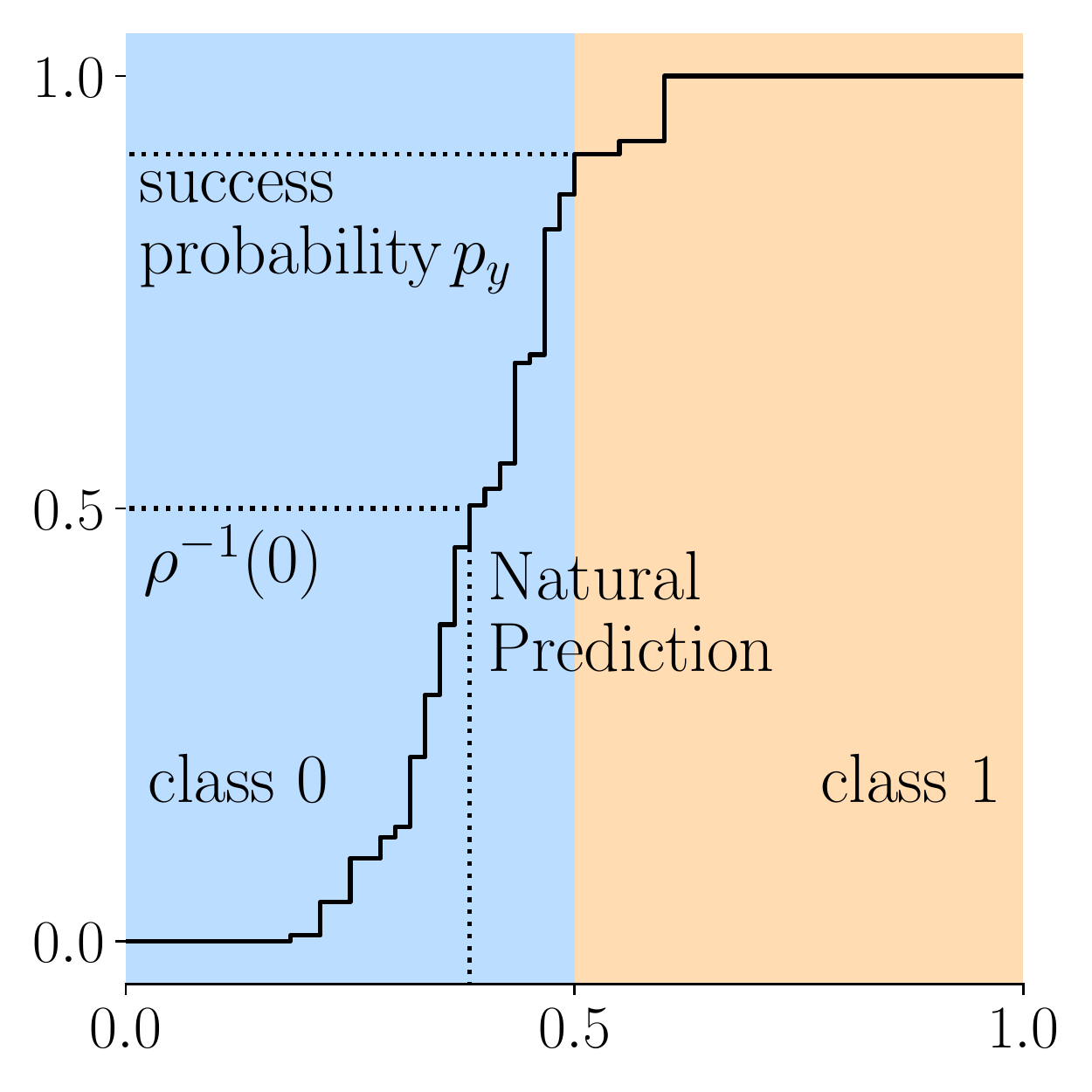}};

	\node[anchor=north] (a) at ($(d2.south)+(-0.5,-0.2)$) {(a) Meta-Stumps};
	\node[] (b) at (a -| p2) {(b) PDFs};
	\node[] (c) at (a -| pdf_final) {(c) Ensemble PDF};
	\node[] (d) at (a -| cdf_final) {(d) Ensemble CDF};

\end{tikzpicture}}
	\caption{Given an ensemble of $3$ meta-stumps $\tilde{f}_i$ (piecewise constant univariate functions), each operating on a different feature $x_i$ of an input $\vx$, we calculate the probability of every output under input randomization (a) to obtain a distribution over their outputs (b). 
	We aggregate these individual PDFs via dynamic programming to obtain the probability distribution over the ensemble's outputs (c). 
	We can then compute the corresponding CDF (d) to evaluate the smoothed stump ensemble exactly.
	}
	\label{fig:overview}
\end{figure}

\vspace{-1mm}
\section{Background on Randomized Smoothing} \label{sec:background}
\vspace{-1.5mm}
For a given base model $F \colon \R^d \to [C]$, classifying inputs to one of $C \in \Z^{\geq 2}$ classes, Randomized Smoothing (\RS) is a method to construct a classifier $G \colon \R^d \to [C]$ with robustness guarantees. For a randomization scheme $\dist \colon \R^d \!\to\! \R^d\!$, we define the success probability $p_y \!:=\! \P_{\vx' \sim \dist(\vx)}[F(\vx') \!= \!y]$ and $G(\vx) \! := \! \argmax_{c \in [C]} p_y $. Depending on the choice of $\dist$, we obtain different certificates of the form:
\begin{theorem}[Adapted from \citet{CohenRK19,YangDHSR020}] \label{thm:l1}
	If
	$\P(F(\dist(\vx)) = y) := p_y
	\geq
	\underline{p_y}$ and
	$\underline{p_y}> 0.5$,
	then $G(\vx + \delta) = y$ for all $\delta$ satisfying $\|\delta\|_p < R$
	with $R := \rho(\underline{p_y})$.
\end{theorem}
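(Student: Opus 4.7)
The plan is to follow the standard Neyman--Pearson argument underlying randomized smoothing. Fix any perturbation $\delta$ with $\|\delta\|_p < R = \rho(\underline{p_y})$. Since $G$ is defined as an argmax over class probabilities and these probabilities sum to one, it suffices to show that
\[
\P_{\vx' \sim \dist(\vx+\delta)}[F(\vx') = y] > 0.5,
\]
as any other class then has probability strictly below $0.5$.

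First, I would reduce the problem to a worst-case question. Let $A := \{z : F(z) = y\}$; by hypothesis, $\P_{\dist(\vx)}[A] \geq \underline{p_y}$. We seek a lower bound on $\P_{\dist(\vx+\delta)}[A]$ that depends only on $\underline{p_y}$ and $\|\delta\|_p$, not on the fine structure of $F$. This is precisely the setting of the Neyman--Pearson lemma: among all measurable sets $A'$ with $\P_{\dist(\vx)}[A'] \geq \underline{p_y}$, the one minimizing $\P_{\dist(\vx+\delta)}[A']$ is a sub-level set of the likelihood ratio $\dist(\vx+\delta)/\dist(\vx)$. The minimum value is therefore an explicit function of $\underline{p_y}$ and $\|\delta\|_p$ once the noise distribution is fixed.

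Second, I would evaluate this worst-case probability for the specific $(\dist, \|\cdot\|_p)$ pair. For isotropic Gaussian noise of variance $\sigma^2$ and $p = 2$, the likelihood-ratio level sets are halfspaces orthogonal to $\delta$, so the worst case reduces to a one-dimensional Gaussian tail probability and yields $\Phi\bigl(\Phi^{-1}(\underline{p_y}) - \|\delta\|_2/\sigma\bigr)$. For the other $(\dist, p)$ combinations unified by \citet{YangDHSR020} (e.g.\ Laplace noise for $\ell_1$, uniform for $\ell_\infty$, discrete noise for $\ell_0$), the same scheme applies but with the corresponding one-dimensional CDF. The function $\rho$ is then defined implicitly as the largest radius for which this worst-case quantity still exceeds $0.5$; e.g.\ $\rho(\underline{p_y}) = \sigma \Phi^{-1}(\underline{p_y})$ in the Gaussian case.

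Third, I would chain the bounds together: since $\|\delta\|_p < R$, monotonicity of the worst-case expression in $\|\delta\|_p$ gives $\P_{\dist(\vx+\delta)}[A] > 0.5$, which as argued above implies $G(\vx+\delta) = y$. The main obstacle is the case analysis across different $\ell_p$-norms, since each requires a matched noise distribution and a separate one-dimensional tail computation to read off the correct form of $\rho$; the argument itself is otherwise a direct application of Neyman--Pearson and monotonicity, so no deep new ingredient is needed beyond citing the unified treatment in \citet{CohenRK19, YangDHSR020}.
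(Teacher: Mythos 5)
Your proposal is correct and follows essentially the same route as the paper: \cref{thm:l1} is stated as background ``adapted from'' \citet{CohenRK19,YangDHSR020}, and the paper gives no proof of its own, relying exactly on the Neyman--Pearson worst-case argument you reconstruct (reduce to the worst measurable set of a given mass, evaluate the likelihood-ratio level sets for the chosen noise, and invert the resulting monotone bound to define $\rho$). The only nitpick is that the paper's \cref{tab:rs} pairs \emph{uniform} noise with the $\ell_1$ certificate $R = 2\lambda(\underline{p_y}-\tfrac{1}{2})$ (rather than Laplace noise), but this does not affect the validity of your argument, which correctly defers the per-noise computation to the unified treatment of \citet{YangDHSR020}.
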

\begin{wrapfigure}[7]{r}{0.42\textwidth}
	\vspace{-0mm}
	\centering
	\captionof{table}{\footnotesize Randomized Smoothing guarantees.} \label{tab:rs}
	\vspace{-1mm}
	\begin{tabular}{@{}cll@{}}
		\toprule
		& $\dist(\vx)$ & $R := \rho(\underline{p_y})$\\
		\midrule
		$\ell_1$ & $\vx + \textit{Unif}([-\lambda, \lambda]^d)$ & $2\lambda (\underline{p_y} - \tfrac{1}{2})$\\[1.2ex]
		$\ell_2$ & $\vx + \N(\mathbf{0}, \sigma \boldsymbol{\mathds{I}})$  & $\sigma\Phi^{-1}(\underline{p_y})$\\
		\bottomrule
	\end{tabular}
\end{wrapfigure}
In particular, we present two instantiations that we utilize throughout this paper in \cref{tab:rs}, where $\Phi^{-1}$ is the inverse Gaussian CDF.
Similar results, yielding other $\ell_p$-norm certificates, can be derived for a wide range of input randomization schemes \citep{YangDHSR020,ZhangYGZ020}. Note that, by using more information than just $\underline{p_y}$, e.g., $p_{c}$ for the runner-up class $c$, tighter certificates can be obtained \citep{CohenRK19,DvijothamHBKQGX20}.
Once $\underline{p_y}$ is computed, we can directly calculate the certifiable radius $R := \rho(p_y)$.
For a broader overview of variants of Randomized Smoothing, please refer to \cref{sec:related}.

For most choices of $F$ and $\dist$, the exact success probability $p_y$ can not be computed efficiently. Thus a lower bound $\underline{p_y}$ is estimated with confidence $1-\alpha$ (typically $\alpha=10^{-3}$) using Monte Carlo sampling and the Neyman-Pearson lemma \citep{neyman1933ix}. Not only is this extremely computationally expensive, as typically $100\,000$ samples have to be evaluated per data point, but this also severely limits the maximum certifiable radius (see \cref{fig:ablation-rs-ds}) and only yields probabilistic guarantees. Additionally, if the number of samples is not sufficient for the statistical test, the procedure will abstain from classifying.

In the following, we will show how considering a specific class of models $F$ allows us to compute the success probability $p_y$ exactly, overcoming these drawbacks, and thus invoke $\rho(p_y)$ to compute deterministic certificates over larger radii, orders of magnitude faster than \RS.

\vspace{-1.0mm}
\section{(De-)Randomized Smoothing for Decision Stump Ensembles} \label{sec:det_smoothing}
\vspace{-1.5mm}
Tree-based models such as decision stump ensembles often combine exceptional performance on tabular data \cite{SHWARTZZIV202284} with good interpretability, making them ideal for many real-world high-stakes applications. Here, we propose a (De-)Randomized Smoothing approach, \tool, to equip them with deterministic robustness guarantees.
For this, we first revisit decision stump ensembles and then show that their structure permits an exact evaluation under isotropic input randomization schemes, such as those discussed in \cref{sec:background}.
Finally, we propose joint certification over numerical and categorical variables, as many practical tabular datasets often contain both variable types. 

\paragraph{Stump Ensembles}
We define a decision stump as $f_m(\vx) = \gamma_{l,m} + (\gamma_{r,m}-\gamma_{l,m}) \bs{1}_{x_{j_m}>v_m}$, with leaf predictions $\gamma_{l,m}, \gamma_{r,m} \in [0,1]$, split position $v_m$, and split variable $j_m$.
We construct unweighted ensembles, particularly suitable for Smoothing \citep{Horvath2022Boosting}, of $M$ such stumps $\bar{f}_M \colon \R^d \mapsto [0,1]$ as
\begin{equation}
\bar{f}_M(\vx) := \frac{1}{M} \sum_{m=1}^M f_m(\vx), \label{eq:ens}
\end{equation}
and treat them as a binary classifiers $\1_{\bar{f}_M(\vx) > 0.5}$. While our approach is extensible to multi-class classification by replacing the scalar leaf predictions $\gamma$ with prediction-vectors, assigning a score per class, we focus on the binary case in this work.

\paragraph{Smoothed Stump Ensemble}
We now define a smoothed stump ensemble $\bar{g}_M$ along the lines of Randomized Smoothing as discussed in \cref{sec:background}, by evaluating $\bar{f}_M$ not only on the original input $\vx$ but rather on a whole distribution of $\vx' \sim \dist(\vx)$:\\
\begin{wrapfigure}[15]{r}{0.35 \textwidth}
	\vspace{-10mm}
	\centering
	\includegraphics[width=0.97\linewidth]{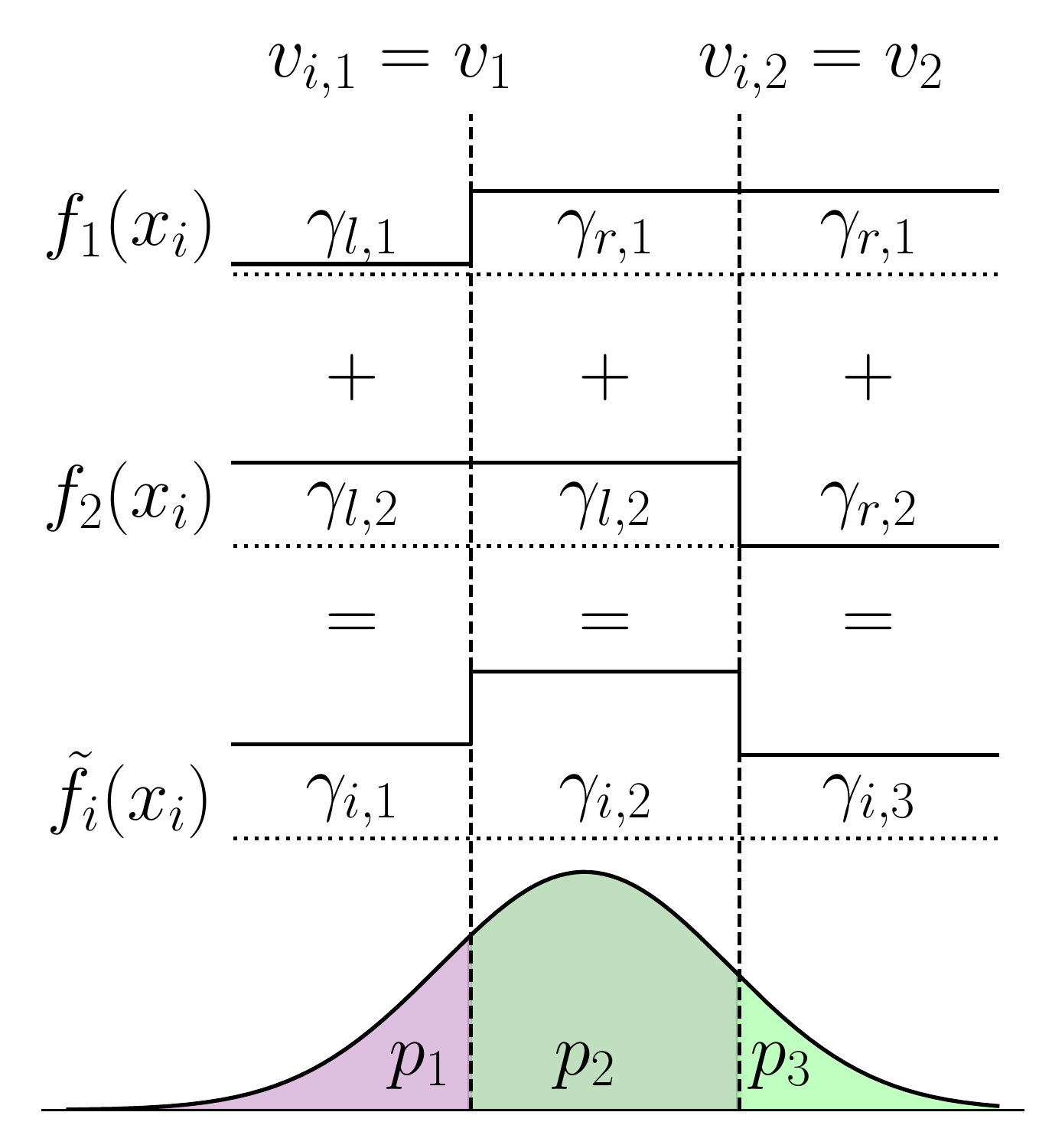}
	\vspace{-1mm}
	\caption{ A meta-stump constructed from two stumps.} \label{fig:metastump}
\end{wrapfigure}
\vspace{-6mm}
\begin{equation*}
\bar{g}_M(\vx) := \P_{\vx' \sim \dist(\vx)}[\bar{f}_M(\vx') > 0.5].
\end{equation*}
In this work, we consider randomization schemes $\dist(\vx)$ that are (i) isotropic, i.e., the dimensions of $\vx' \sim \dist(\vx)$ are independently distributed, and (ii) permit an efficient computation of their marginal cumulative distribution functions (CDF).
This includes a wide range of distributions, e.g., the Gaussian and Uniform distributions used in \cref{tab:rs} and others commonly used for \RS \citep{YangDHSR020}. 

By denoting the model CDF as $\bar{\bc{F}}_{M,\vx}(z) = \P_{\vx' \sim \dist(\vx)}[\bar{f}(\vx') \leq z]$, we can alternatively define $\bar{g}_M$ as
	$\bar{g}_M(\vx) := 1 - \bar{\bc{F}}_{M,\vx}(0.5)$,
which will become useful later.
For a label $y \in \{0, 1\}$ we obtain the success probability $p_y = |y - \bar{\bc{F}}_{M,\vx}(0.5)|$ of predicting $y$ for a sample from $\dist(\vx)$.

\paragraph{Meta-Stumps}
To evaluate $p_y$ exactly as illustrated in \cref{fig:overview}, we group the stumps constituting an ensemble by their split variable $j_m$ to obtain one \emph{meta-stump} $\tilde{f}_i$ per feature $i$. The key idea is that outputs of these meta-stumps are now independently distributed under isotropic input randomization (illustrated in \cref{fig:overview} (b)), allowing us to aggregate them efficiently later on.

We showcase this in \cref{fig:metastump}, where two stumps ($f_1$ and $f_2$) are combined into the meta-stump $\tilde{f}_i$. Formally, we have
\begin{equation}\label{eqn:meta_stump}
	\tilde{f}_i(\vx) := \sum_{m \in \bc{I}_i} f_m(\vx), \qquad \bc{I}_i := \{m \in [M] \mid j_m = i\}, 
\end{equation}
define $M_i = |\bc{I}_i|$ and rewrite our ensemble as
	$\bar{f}_M(\vx) := \frac{1}{M} \sum_{i=1}^d \tilde{f}_i(x_i)$.
Every meta-stump can be represented by its split positions $v_{i,j}$, sorted such that $v_{i,j} \leq v_{i,j+1}$, and its predictions $\gamma_{i,j} = \sum_{t=1}^{j-1} \gamma_{r,t} + \sum_{t=j}^{|\bc{I}_i|} \gamma_{l,m}$ on each of the resulting $|\bc{I}_i| + 1$ regions, written as $(\boldsymbol{\gamma}, \vv)_i$. 

\vspace{-1mm}
\paragraph{CDF Computation}
Now we leverage the independence of our meta-stumps' output distributions under an isotropic input randomization scheme $\dist$ to compute the PDF of their ensemble efficiently via dynamic programming (DP) (illustrated in \cref{fig:overview} (c) and explained below). Given its PDF, we can trivially compute the ensemble's CDF $\bar{\bc{F}}_{M,\vx}$, allowing us to evaluate the smoothed model exactly (illustrated in \cref{fig:overview} (d)). This efficient CDF computation constitutes the core of \tool.

\begin{figure}
	\begin{algorithm}[H]
		\caption{Stump Ensemble PDF computation via Dynamic Programming}
		\label{alg:dp}
		\begin{algorithmic}
			\vspace{-1mm}
			\Function{ComputePDF}{$\{(\boldsymbol{\Gamma}, \vv)_i\}_{i=1}^d, \vx, \dist$}
			\State $\texttt{pdf}[i][t]=0 \text{ for } t \in [M \cdot \Delta + 1 ], i \in [d]$
			\State $\texttt{pdf}[0][0] = 1$ \Comment{For 0 stumps all probability mass is on $0$}
			\For{$i=1$ to $d$}
			\For{$j = 1$ to $M_i$}
			\For{$t = 0$ to $M \cdot \Delta + 1 - \Gamma_{i,j}$}
			\State $\texttt{pdf}[i][t+\Gamma_{i,j}] = \texttt{pdf}[i][t+\Gamma_{i,j}] + \texttt{pdf}[i-1][t] \cdot \P_{x'_i \sim \dist(\vx)}[v_{i,j-1}  < x'_i \leq v_{i,j}]$
			\EndFor 
			\EndFor 
			\EndFor
			\State \textbf{return} $\texttt{pdf}$
			\EndFunction
			\vspace{-1mm}
		\end{algorithmic}
	\end{algorithm}
	\vspace{-5mm}
\end{figure}

In more detail, we observe that the PDF of a stump ensemble is the convex sum of exponentially many ($\bc{O}( (\max_i \bc{I}_i)^d)$) Dirac-delta distributions. 
To avoid this exponential blow-up, we discretize all leaf predictions $\gamma$ to a grid of $\Delta$ values (typically $\Delta = 100$), when constructing the smoothed model $\bar{g}_M$. 
For each $\gamma_{i,j}$, we define a corresponding $\Gamma_{i,j} \in \{0, \dots, M_i\cdot\Delta \}$ such that  $\gamma_{i,j} = \frac{\Gamma_{i,j}}{\Delta}$. 
Now, we construct a DP-table, where every entry $\texttt{pdf[i][t]}$ corresponds to the weight of the Dirac-delta associated with an output of $\tfrac{t}{\Delta M}$ after considering the first $i$ meta-stumps (in any arbitrary but fixed order).
We show the PDF computation in \cref{alg:dp} and provide an intuition below. We initialize $\texttt{pdf[0][*]}$ by allocating all probability mass to $\texttt{t}=0$ ($\texttt{pdf[0][0]=1}$). 
Now, we compute $\texttt{pdf[i][*]}$ from $\texttt{pdf[i-1][*]}$ by accounting for the effect of the $i$\th meta-stump as follows: 
The weight of the Dirac-delta at $\texttt{t}$ after considering $\texttt{i}$ meta-stumps is exactly the sum over the weights of the Dirac-deltas at $\texttt{t-}\Gamma_{i,j}$ after $\texttt{i-1}$ meta-stumps, weighted with the probability $p_{i,j}:=\P_{x'_i \sim \dist(\vx)}[v_{i,j-1} < x'_i \leq v_{i,j}]$ of the $\texttt{i}$\th meta stump predicting $\Gamma_{i,j}$. 
We compute $p_{i,j}$ as the probability of the randomized $x'_i$ lying between $v_{i,j-1}$ and $v_{i,j}$ (padded with $-\infty$ and $\infty$ on the left and right, respectively), as illustrated in \cref{fig:metastump}. 
After termination, the last line of the DP-table $\texttt{pdf[d][*]}$ contains the full PDF (see \cref{fig:overview}(c)). Formally we summarize this in the theorem below, delaying a formal proof to \cref{sec:pdf_computation}:

\begin{restatable}{theorem}{correctness}
	\label{thm:correctness}
	For $z \in [0, 1]$, $\bar{\bc{F}}_{M,\vx}(z) = \sum_{t=0}^{ \lfloor z M\Delta \rfloor  } \texttt{pdf}[d][t]$ describes the exact CDF and thus success probability $p_y = \P_{\vx' \sim \dist(\vx)}[\bar{f}_M(\vx') = y] = |y - \bar{\bc{F}}_{M,\vx}(0.5)|$ for $y \in \{ 0, 1\}$.
\end{restatable}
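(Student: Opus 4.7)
The plan is to prove \cref{thm:correctness} by induction on the DP over meta-stumps, leveraging the independence structure induced by the isotropic randomization $\dist$. The core invariant I want to establish is that after the outer loop at index $i$, the entry $\texttt{pdf}[i][t]$ equals $\P_{\vx' \sim \dist(\vx)}\!\left[\,\sum_{k=1}^i \tilde{f}_k(x'_k) = t/\Delta\,\right]$ for every integer $t \in \{0,\dots, M\cdot \Delta\}$. Once this invariant holds at $i=d$, the remaining CDF and success-probability statements follow from bookkeeping on the normalization factor $1/M$ and the definition of the binary classifier $\mathbf{1}_{\bar{f}_M(\vx) > 0.5}$.

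First I would set up notation, noting that after discretization every meta-stump takes values in the finite set $\{\Gamma_{i,j}/\Delta\}_{j=1}^{M_i}$, with respective probabilities $p_{i,j} := \P_{x'_i\sim\dist(\vx)}[v_{i,j-1}<x'_i\leq v_{i,j}]$ (with $v_{i,0}=-\infty$, $v_{i,M_i}=\infty$), because the $M_i$ threshold regions partition $\R$. Crucially, because $\dist(\vx)$ is isotropic and each $\tilde{f}_i$ depends only on $x'_i$, the random variables $\{\tilde{f}_i(x'_i)\}_{i=1}^d$ are mutually independent. This is the structural fact that makes a convolution-style DP exact.

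Next I would prove the invariant by induction on $i$. The base case $i=0$ is immediate from the initialization $\texttt{pdf}[0][0]=1$, since the empty sum equals $0$ deterministically. For the inductive step, I would fix $t' \in \{0,\dots, M\cdot\Delta\}$ and write, using independence,
\begin{equation*}
\P\!\left[\sum_{k=1}^{i} \tilde{f}_k(x'_k) = \tfrac{t'}{\Delta}\right]
= \sum_{j=1}^{M_i} p_{i,j}\cdot \P\!\left[\sum_{k=1}^{i-1}\tilde{f}_k(x'_k) = \tfrac{t' - \Gamma_{i,j}}{\Delta}\right].
\end{equation*}
By the induction hypothesis this equals $\sum_j p_{i,j}\cdot \texttt{pdf}[i-1][t'-\Gamma_{i,j}]$, which is precisely what the inner two loops of \cref{alg:dp} accumulate into $\texttt{pdf}[i][t']$ (reindexed through $t=t'-\Gamma_{i,j}$). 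A small but important check is that the loop bounds $t \in [0, M\Delta+1-\Gamma_{i,j}]$ cover every reachable state and that unreachable states retain their zero initialization, so no mass is lost or double counted.

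Finally, from the invariant at $i=d$ and the identity $\bar{f}_M(\vx') = \frac{1}{M}\sum_i \tilde{f}_i(x'_i)$, we obtain
\begin{equation*}
\bar{\bc{F}}_{M,\vx}(z) = \P[\bar{f}_M(\vx')\leq z] = \sum_{t : t/(M\Delta)\leq z} \texttt{pdf}[d][t] = \sum_{t=0}^{\lfloor zM\Delta\rfloor} \texttt{pdf}[d][t].
\end{equation*}
The success probability formula then follows by case analysis on $y\in\{0,1\}$: for $y=0$ the classifier predicts $0$ iff $\bar{f}_M(\vx')\leq 0.5$, giving $p_0 = \bar{\bc{F}}_{M,\vx}(0.5)$; for $y=1$ we get $p_1 = 1 - \bar{\bc{F}}_{M,\vx}(0.5)$; both match $|y - \bar{\bc{F}}_{M,\vx}(0.5)|$. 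I expect the main obstacle to be purely bookkeeping: carefully reconciling the shift-by-$\Gamma_{i,j}$ convention in the algorithm with the convolution identity, and justifying that discretizing the $\gamma$'s before running the DP preserves exactness (as opposed to introducing discretization error), since the smoothed model $\bar{g}_M$ is \emph{defined} relative to the discretized leaf values. The probabilistic content itself is just independence plus a standard convolution argument.
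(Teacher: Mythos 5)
Your proposal is correct and follows essentially the same route as the paper's proof: an induction over the DP table establishing that $\texttt{pdf}[i][t]$ is the exact law of the partial sum of meta-stump outputs (the paper phrases the invariant in terms of the integer variables $\Gamma^{(i)}$ rather than $\tilde{f}_k(x'_k)=\Gamma^{(k)}/\Delta$, an immaterial rescaling), with the inductive step justified by independence plus the law of total probability over the threshold regions, followed by the same rescaling argument to get the CDF and the same case analysis on $y$ for the success probability. Your added remarks on loop-bound coverage and on discretization being part of the model's definition (hence no approximation error) are consistent with, and slightly more explicit than, the paper's treatment.
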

\vspace{-1mm}

Note that the presented algorithm is slightly simplified, and we actually only have to track the range of non-zero entries of one row of the DP-table. %
This allows us to compute the full PDF and thus certificates for smoothed stump ensembles very efficiently, e.g., taking only around $1.2$ s total for the \mnistts task (around $2.000$ data points and over $500$ stumps).

\paragraph{Certification}
Recall from \cref{sec:background} that, given the success probability $p_y$, robustness certification for $\ell_p$-norm  bounded perturbations
reduces to computing the maximal certifiable robustness radius $R = \rho(p_y)$.
For all popular $\ell_p$-norms, $\rho$ (and its inverse $\rho^{-1}$; used shortly) can be either evaluated symbolically \citep{CohenRK19,YangDHSR020} or precomputed efficiently \citep{LeeYCJ19,BojchevskiKG20}, such that the core challenge of certification becomes computing (a lower bound to) $p_y$, which we solve efficiently via \cref{thm:correctness}.
Alternatively, for a given target radius $r$, we need to check whether $p_y \geq \rho^{-1}(r)$ by equivalently calculating 
\begin{equation}\label{eqn:cert_g}
\bar{g}_{M,r}(\vx) = \bar{\bc{F}}_{M,\vx}^{-1}(z) \qquad\qquad z = \begin{cases} 1 - \rho^{-1}(r) \quad &\text{ if } y = 1\\
\rho^{-1}(r) & \text{ if } y = 0
\end{cases},
\end{equation}
and checking $\bar{g}_{M,r}(\vx)>0.5$. This corresponds to asserting that class $y$ is predicted at least $z$ of the time.
Here, the inverse CDF $\bar{\bc{F}}_{M,\vx}^{-1}(z)$ can be efficiently evaluated using the step-wise $\bar{\bc{F}}_{M,\vx}$ computed via \cref{thm:correctness}.
We will see in \cref{sec:training} that this view is useful when training stump ensembles for certifiability. Finally, we want to highlight that this approach can be used with all common randomization schemes yielding certificates for different $\ell_p$-norm bounded adversaries.

\paragraph{Categorical Variables \& Joint Certificates}
For practical applications, it is essential to handle both numerical and categorical features jointly.
To consider a categorical feature $x_i \in \{1, \dots, d_i\}$ in our stump ensemble, we construct a $d_i$-ary stump $\tilde{f}_i \colon [d_i] \to [0, 1]$ returning a value $\gamma_{i,j}$ corresponding to each of the $d_i$ categorical values and treated as a meta-stump with $M_i = 1$ for normalization.

To provide certificates in this setting, we propose a novel scheme combining an arbitrary $\ell_p$-norm certificate of radius $r_p$ over all numerical features, computed as discussed above, with an $\ell_0$ certificate of radius $r_0$ over all categorical features $\bc{C}$, computed using an approach adapted from \citet{WangZCBH20}. 
Conceptually, we compute the worst-case effect of every individual categorical variable independently, greedily aggregate these worst-case effects, and account for them in our ensemble's CDF.

Given a meta-stump's prediction on a concrete sample $q_i = \tilde{f}_i(x_i)$ as well as its maximal and minimal output $u_i$ and $l_i$, respectively, we compute the maximum and minimum perturbation effect to $\overline{\delta}_i = \frac{u_i - q_i}{M}$ and $\underline{\delta}_i = \frac{l_i - q_i}{M}$, respectively. Given the set of categorical features $\bc{C}$, we can compute the worst-case effect when perturbing at most $r_0$ samples as
\begin{equation*}
	\overline{\delta}_{r_0} = \max_\bc{R} \sum_{i \in \bc{R}} \overline{\delta}_i, \quad s.t. |\bc{R}| \leq r_0, \bc{R} \subseteq \bc{C}
\end{equation*}
\begin{wrapfigure}[10]{r}{0.344 \textwidth}
	\vspace{-6mm}
	\centering
	\includegraphics[width=1.0\linewidth]{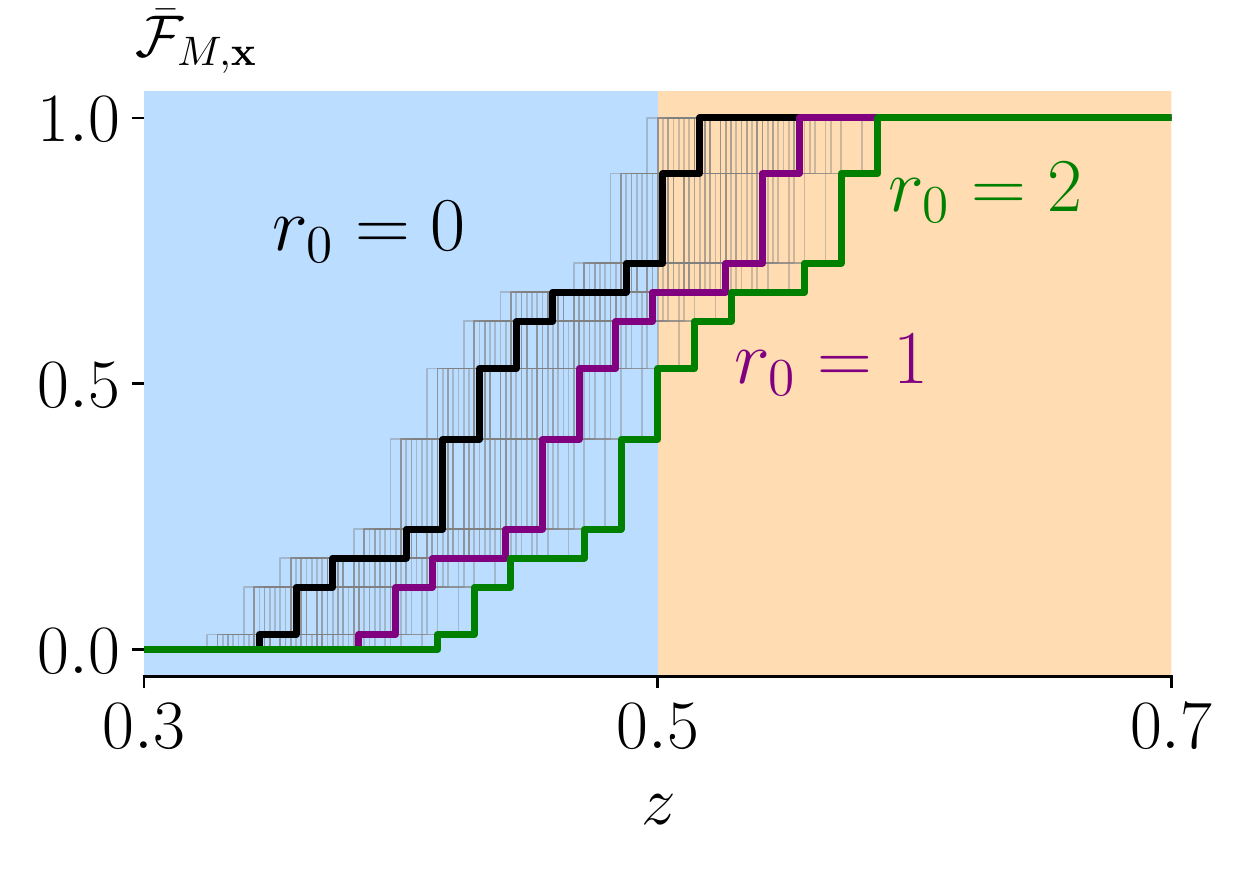}
	\vspace{-6mm}
	\caption{ CDF shifted by the effect of categorical feature perturbations.} \label{fig:categorical}
\end{wrapfigure}
by greedily picking the $r_0$ largest $\overline{\delta}_i$. For $\underline{\delta}_{r_0}$ we proceed analogously.
Shifting the CDF, computed as above, by $\overline{\delta}$ and $\underline{\delta}$ for samples with labels $y=0$ and $y=1$, respectively, before computing the success probability $p_y$, allows us to account for the worst-case categorical perturbations exactly.
We illustrate this for a sample with $y=0$ in \cref{fig:categorical}, where we show the CDFs obtained by all possible perturbations of at most $r_0$ categorical variables, bounded to the right by those obtained by shifting the original by $\overline{\delta}_{r_0}$.
Note that here no smoothing over the categorical variables is done or required, making inference trivial.

\section{Training for and with (De-)Randomized Smoothing} \label{sec:training}
To obtain large certified radii via smoothing, the base model has to be robust to the chosen randomization scheme.
To train robust decision stump ensembles, we propose a robust MLE optimality criterion for individual stumps (\cref{sec:training_indp}) and two boosting schemes for whole ensembles (\cref{sec:training_boosted}).

\subsection{Independently MLE-Optimal Stumps} \label{sec:training_indp}
To train an individual stump $f_m(\vx) = \gamma_{l,m} + (\gamma_{r,m}-\gamma_{l,m}) \bs{1}_{x_{j_m}>v_m}$, its split feature $j_m$, split position $v_m$, and leaf predictions $\gamma_{l,m}, \gamma_{r,m}$ have to be determined. 
We choose them in an MLE-optimal fashion with respect to the randomization scheme $\dist$, starting with $v_m$, as follows:
We consider the probabilities $p_{l,i}(v_m) = \P_{\vx' \sim \dist(\vx_i)}[x'_{j_m} \leq v_m]$ and $p_{r,i} = 1 - p_{l,i}(v_m)$ of $\vx'_i$ lying to the left or the right of $v_m$, respectively, under the input randomization scheme $\dist$. To avoid clutter, we drop the explicit dependence on $v_m$ in the following. For an i.i.d. dataset with $n$ samples $(\vx_i, y_i) \sim (\bc{X},\bc{Y})$, we define the probabilities  $p^y_{j} = \frac{1}{n}\sum_{\{i | y_i=y\}} p_{j,i}$ of picking the $j \in \{l, r\}$ leaf, conditioned on the target label, and $p_j = p_j^0 +p_j^1$ as their sum to compute the entropy impurity $\imp_\text{entropy}$ \citep{BustosKSSV04} as
\begin{align*}
	\imp_\text{entropy} &= - \sum_{j \in \{l,r\}} p_{j}  \sum_{y \in \{0,1\}}  \frac{p^y_{j}}{p_{j}} \log\left(\frac{p^y_{j}}{p_{j}}\right).
\end{align*}
We then choose the $v_m$ approximately minimizing $\imp_\text{entropy}$ via line-search.
After fixing $v_m$ this way, we compute the MLE-optimal leaf predictions $\gamma_l^{\dist,\text{MLE}}$ and $\gamma_r^{\dist,\text{MLE}}$ as:
\begin{align*}
\gamma_l^{\dist\text{MLE}}, \gamma_r^{\dist\text{MLE}} &= \argmax_{\gamma_l, \gamma_r} \P[\bc{Y} \mid \dist(\bc{X}), f_m] = \argmax_{\gamma_l, \gamma_r} \sum_{i=1}^{n} \E_{\vx' \sim \dist(\vx_i)} \left[\log \P[y_i \mid \vx', f_m]\right]\\
&= \argmax_{\gamma_l, \gamma_r} \sum_{i \in \{i \mid y_i = 0\}}^{n} p_{l,i}  \log(1 - \gamma_l) + p_{r,i}  \log(1 - \gamma_r) \\
& \qquad\quad\;\;\,+ \sum_{i \in \{i \mid y_i = 1\}}^{n} p_{l,i} \log(\gamma_l) + p_{r,i} \log(\gamma_r) \\
&= \argmax_{\gamma_l, \gamma_r} p^0_{l}  \log(1 - \gamma_l) + p^0_{r}  \log(1 - \gamma_r) + p^1_{l} \log(\gamma_l) + p^1_{r} \log(\gamma_r),
\end{align*}
where the second line is obtained by splitting the sum over samples by class and explicitly computing the expectation. %
We solve the maximization problem by setting the first derivatives $\frac{\partial}{\partial \gamma_l}$ and $\frac{\partial}{\partial \gamma_r}$ of our optimization objective to zero and checking its Hessian to confirm that
\begin{equation}
	\gamma_l^{\dist\text{MLE}} = \frac{p^1_{l}}{p^1_{l} + p^0_{l}} \qquad \gamma_r^{\dist\text{MLE}} = \frac{p^1_{r}}{p^1_{r} + p^0_{r}}
\end{equation}
are indeed maxima.
We show in \cref{app:mle_proof} that $\gamma_l^{\dist,\text{MLE}}$, $\gamma_l^{\dist,\text{MLE}}$, and $v_m$ are even jointly MLE-optimal, when $v_m$ is chosen as the exact instead of an approximate minimizer of the entropy impurity.%

\paragraph{Ensembling}
To train an ensemble of independently MLE-optimal decision stumps, we sequentially train one stump for every feature $j_m \in [d]$ and construct an ensemble with equal weights, rejecting stumps with an entropy impurity $\imp_\text{entropy}$ above a predetermined threshold.

\subsection{Boosting Stump Ensembles for Certifiable Robustness} \label{sec:training_boosted}
\begin{wrapfigure}[15]{r}{0.40 \textwidth}
	\vspace{2.5mm}
\centering
	\includegraphics[width=1.0\linewidth]{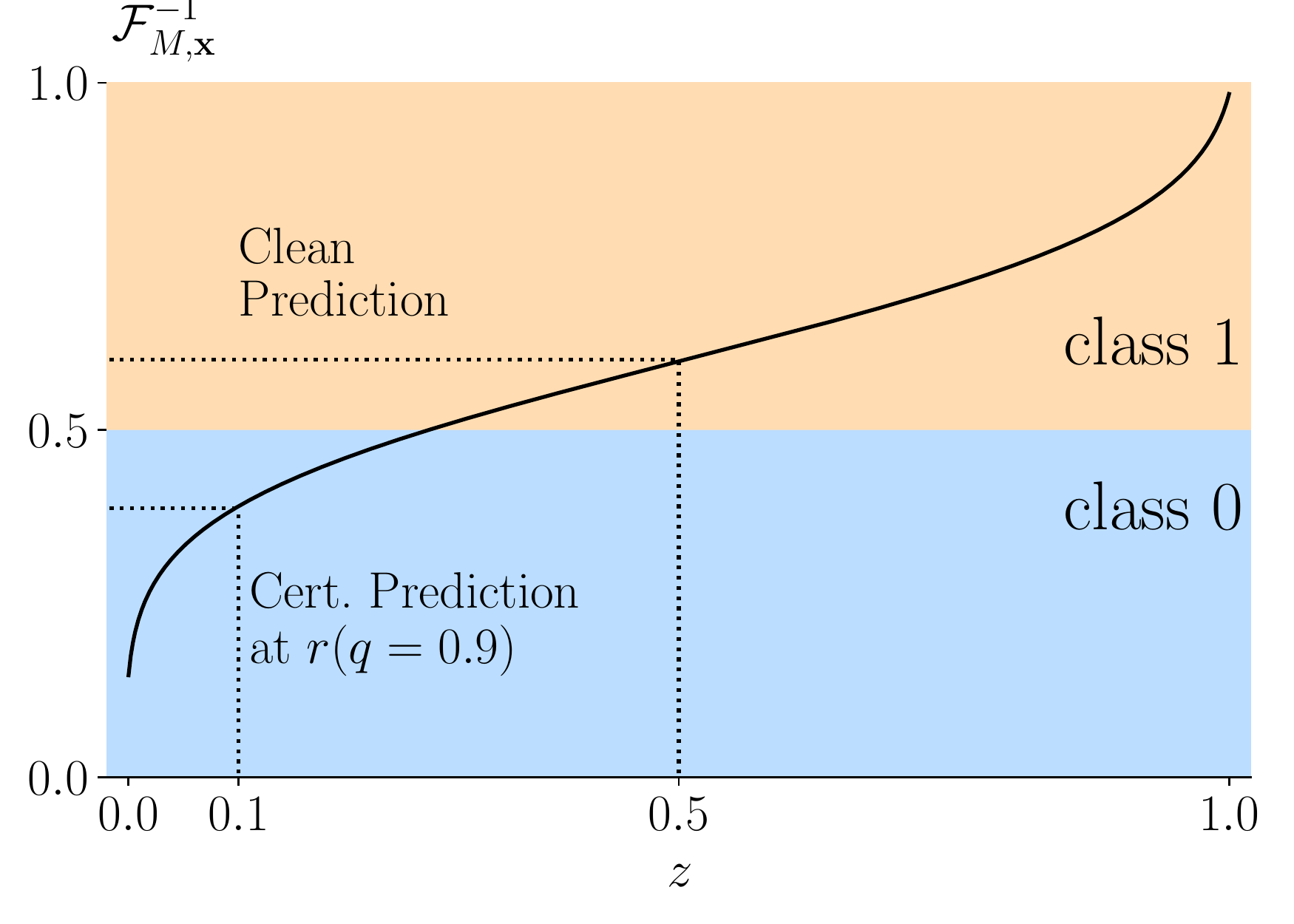}
	\vspace{-7mm}
 	\caption{\footnotesize Inverse CDF $\bar{\bc{F}}_M^{-1}$} \label{fig:cdf_loss}
	\vspace{-0mm}	

\end{wrapfigure}

Decision stumps trained this way maximize the expected likelihood under the chosen randomization scheme.
Assuming (due to the law of large numbers) a Gaussian output distribution, this corresponds to optimizing for the median output, which determines the clean prediction. 
However, certified correctness at a given radius $r$ is determined by the prediction $y'(\vx,r) = \bar{\bc{F}}_{m-1,\vx}^{-1}(z(r))$ at the  $z(r) := |y - \rho^{-1}(r)|$ percentile of the output distribution. Where we call $y'$ the \emph{certifiable prediction}, as certification is now equivalent to checking $y = \1_{y'(\vx,r) > 0.5}$ (\cref{eqn:cert_g}).
This difference is illustrated in \cref{fig:cdf_loss}, where the clean prediction is correct (class 1) while the certifiable prediction is incorrect.
To align our training objective better with certified accuracy, we propose two novel boosting schemes along the lines of the popular \treeboost \citep{Friedman2001Greedy} and \ada \citep{FreundS97}.

\paragraph{Gradient Boosting for Certifiable Robustness}
The key idea of gradient boosting is to compute the gradient of a loss function with respect to an ensemble's outputs and then add a model to the ensemble that makes a prediction along this gradient direction.
Implementing this idea, we adapt \treeboost \citep{Friedman2001Greedy} to propose \robtreeboost: At a high level, we add stumps to the ensemble, which aim to predict the residual between the target label and the current certifiable prediction.
Concretely, to add the $m$\th stump to our ensemble, we begin by computing the current ensemble's certifiable predictions $y'(r)$ at a target radius $r$ and then defining the pseudo labels $\tilde{y} = y - y'(r)$ as the residual between the target labels $y$ and the certifiable predictions $y'(r)$.
This yields a regression problem, which we tackle by choosing a feature $j_m$ and split threshold $v_m$ (approximately) minimizing MSE impurity under input randomization before computing $\gamma_{l,m}$ and $\gamma_{r,m}$ as approximate minimizers of the cross-entropy loss over the whole ensemble.
Please see \cref{sec:tree_boost_extra} for a more detailed discussion of \robtreeboost.

\paragraph{Adaptive Boosting for Certifiable Robustness}
The key idea of adaptive boosting is to build an ensemble by iteratively training models, weighted based on their error rate, while adapting sample weights based on whether they are classified correctly. We build on \ada \citep{FreundS97} to propose \robada: We construct an ensemble of $K$ stump ensembles via hard voting, where every ensemble is weighted based on its certifiable accuracy. To train a new ensemble, we increase the weights of all samples that are currently not classified certifiably correctly at a given radius $r$. We choose stump ensembles instead of individual stumps as base classifiers because single stumps often can not reach the success probabilities under input randomization required for certification.
To compute the certifiable radius for such an ensemble $\bar{F}_K$, we compute the certifiable radii $R^k$ of the individual stump ensembles $\bar{f}_M^k$, sort them in decreasing order such that $R^k \geq R^{k+1}$ and obtain the largest radius $R^k$ such that the weights of the first $k$ ensembles sum up to more than half of the total weights.
Please see \cref{sec:ada_boost_extra} for a more detailed discussion of \robada.

\vspace{-1.0mm}
\section{Experimental Evaluation}
\vspace{-1.5mm}
\label{sec:eval}

In this section, we empirically demonstrate the effectiveness of \tool in a wide range of settings. 
We show that \tool significantly outperforms the current state-of-the-art for certifying tree-based models on established benchmarks, using only numerical features (\cref{sec:eval-ds-for-stumps}), before highlighting its novel ability to obtain joint certificates on a set of new benchmarks (\cref{sec:eval-joint-certification}).
Finally, we perform an ablation study, investigating the effect of \tool's key components (\cref{sec:eval-ablation}).%

\paragraph{Experimental Setup}
We implement our approach in PyTorch \cite{PaszkeGMLBCKLGA19} and evaluate it on Intel Xeon Gold 6242 CPUs and an NVIDIA RTX 2080Ti. 
We compare to prior work on the \diabetes \cite{Smith1988UsingTA}, \breast \cite{Dua:2019}, \fmnists \cite{DBLP:journals/corr/abs-1708-07747}, \mnistof \cite{lecun2010mnist}, and \mnistts \cite{lecun2010mnist} datasets and are the first to provide joint certificates of categorical and numerical features, demonstrated on the \adult \cite{Dua:2019} and \credit \cite{Dua:2019} datasets.
For a more detailed description of the experimental setup, please refer to \cref{app:experimental-details}.

\begin{table}[tp]
	\centering
	\small
	\centering
	\caption{Natural accuracy (NAC) $[\%]$ and certified accuracy (CA) $[\%]$ with respect to $\ell_1$- and $\ell_2$-norm bounded perturbations. Results for \citet{WangZCBH20} as reported by them. Larger is better.}
	\vspace{1.5mm}
	\label{tab:baseline-both}
	\resizebox{0.99\columnwidth}{!}{
	\begin{tabular}{cccccccccc}
		\toprule
        \multirow{2.5}{*}{Perturbation} & \multirow{2.5}{*}{Dataset} &  \multirow{2.5}{*}{Radius $r$} & \textbf{Standard Training} & \multicolumn{2}{c}{\textbf{\citet{WangZCBH20}}} & \multicolumn{2}{c}{\textbf{Ours (Independent)}} & \multicolumn{2}{c}{\textbf{Ours (Boosting)}}\\
        \cmidrule(lr){4-4} \cmidrule(lr){5-6} \cmidrule(lr){7-8} \cmidrule(lr){9-10}
        & & & NAC & NAC & CA & NAC & CA & NAC & CA \\
        \midrule
		\multirow{5}{*}{$\ell_1$-norm}    & \breast & $1.0$  & 99.3 & 98.5 & 64.2 & \textbf{100.0}$\;\:$ & {81.0} & \textbf{100.0}$\;\:$ & \textbf{83.9} \\ %
                                     &\diabetes& $0.05$ & 74.7 & 72.7 & 68.2 & {76.0} & {69.5} & \textbf{77.9} & \textbf{72.1} \\ %
                                     &\fmnists & $0.5$  & \textbf{95.0} & 87.6 & 67.8 & 85.8 & 83.3 & 87.2 & \textbf{84.2} \\ %
                                     &\mnistof & $1.0$  & 99.1 & 95.5 & 83.8 & 96.6 & 94.1 & \textbf{99.3} & \textbf{98.1} \\ %
                                     &\mnistts & $1.0$  & 96.0 & 92.3 & 66.5 & 96.3 & 93.9 & \textbf{96.6} & \textbf{94.1} \\ %
     	\midrule
     	\multirow{5}{*}{$\ell_2$-norm}    & \breast & $0.7$  & 99.3 & 91.2 & 60.6 & \textbf{100.0}$\;\:$ & 75.2 & \textbf{100.0}$\;\:$ & \textbf{82.5} \\ %
                                     &\diabetes& $0.05$ & 74.7 & -       & -       & {77.3} & {68.2} & \textbf{79.9} & \textbf{71.4} \\ %
                                     &\fmnists & $0.4$  & \textbf{95.0} & 75.5 & 51.5 & 86.8 & 81.2 & 91.0 & \textbf{84.5} \\ %
                                     &\mnistof & $0.8$  & 99.1 & 95.6 & 63.4 & 95.8 & 91.6 & \textbf{99.2} & \textbf{96.3}\\ %
                                     &\mnistts & $0.8$  & 96.0 &  86.3 & 23.0 & \textbf{96.3} & \textbf{89.6} & \textbf{96.3} & \textbf{89.6}\\ %
		\bottomrule
	\end{tabular}
}
\vspace{-4mm}
\end{table}

\vspace{-1.0mm}
\subsection{Certification for Numerical Features} \label{sec:eval-ds-for-stumps}
\vspace{-1.0mm}
In \cref{tab:baseline-both}, we compare the certified accuracies obtained via \tool on ensembles of independently MLE optimal stumps (Independent) or boosted stump ensembles (Boosting) to the current state-of-the-art, \citet{WangZCBH20}, and standard training \citep{DBLP:journals/corr/abs-1201-0490} using established benchmarks \citep{WangZCBH20}. %

\paragraph{Independently MLE Optimal Stumps}
We first consider stump ensembles trained without boosting as described in \cref{sec:training_indp} and observe that \tool obtains higher certified accuracies in all settings and higher natural accuracies in most.
For example, on \mnistts, we increase the certified accuracy at an $\ell_2$ radius of $r_2=0.8$ from $23.0\%$ to $89.6\%$, almost quadrupling it compared to \citet{WangZCBH20}, while also improving natural accuracy from $86.3\%$ to $96.3\%$.

\paragraph{Boosting for Certified Accuracy}
Leveraging the boosting techniques introduced in \cref{sec:training_boosted}, \robtreeboost for \breast and \diabetes and \robada for \fmnists, \mnistof, and \mnistts, we increase certifiable and natural accuracies even further in most settings.
For example, compared to our independently trained stump ensemble, we improve the certified accuracy for \mnistof at an $\ell_1$-radius of $r_1 = 1.0$ from $94.1\%$ to $98.1\%$ and for \breast at an $\ell_2$-radius of $r_2 = 0.7$ from $75.2\%$ to $82.5\%$.

\vspace{-1.0mm}
\subsection{Joint Certificates for Categorical and Numerical Features}\label{sec:eval-joint-certification}
\vspace{-1.0mm}

\begin{wrapfigure}[13]{r}{0.42\textwidth}
	\centering
	\vspace{-5.0mm}
	\includegraphics[width=0.95\linewidth]{./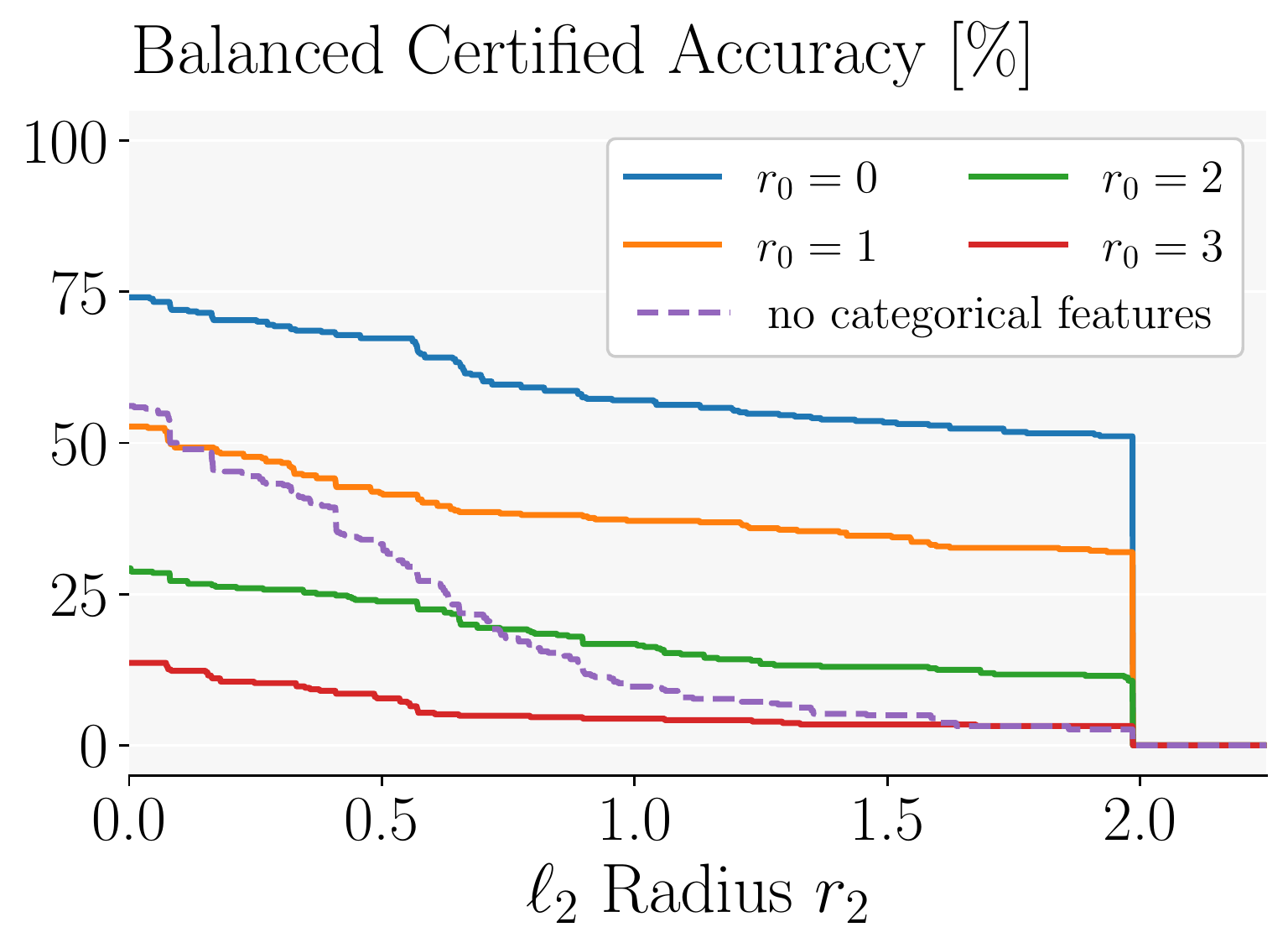}
	\vspace{-2.5mm}
	\caption{Effect of $\ell_0$-perturbations on $\ell_2$-robustness for \credit.}
	\label{fig:joint-ca}
\end{wrapfigure}
In \cref{tab:joint-balanced}, we compare models using only numerical, only categorical, or both types of features with regards to their balanced certified accuracy (BCA) (accounting for class frequency) at different combinations of $\ell_2$- and $\ell_0$-radii for numerical and categorical features, respectively.
We observe that models using both categorical and numerical features perform notably better on clean data, highlighting the importance of utilizing and thus also certifying them in combination.
Moreover, categorical features make the model significantly more robust to $\ell_2$ perturbations, e.g., at $\ell_2$-radii $\geq 0.75$, they improve certified accuracies, even when $2$  categorical features (of only $8$ and $7$ for \adult and \credit, respectively) are adversarially perturbed. 
We visualize this in \cref{fig:joint-ca}, showing BCA over $\ell_2$-perturbation radius and confirm that the model utilizing only numerical features (dotted line) loses accuracy much quicker with perturbation magnitude than the model leveraging categorical variables (solid lines). As we are the first to tackle this setting, we do not compare to other methods but provide more detailed experiments in \cref{app:eval-joint-robustness}.

\begin{table}[tp]
	\centering
	\small
	\centering
	\caption{Balanced certified accuracy (BCA) $[\%]$ under joint $\ell_0$- and $\ell_2$-perturbations of categorical and numerical features, respectively, depending on whether model uses categorical and/or numerical features. The balanced natural accuracy is the BCA at radius $r = 0.0$. Larger is better.}
	\label{tab:joint-balanced}
	\vspace{3mm}
   	\renewcommand{\arraystretch}{1.03}
	\resizebox{0.99\columnwidth}{!}{
    \begin{tabular}{ccccccccccc}
        \toprule
        \multirow{2.6}{*}{Dataset} & \multirowcell{2.6}{Categorical \\ Features} & \multirowcell{2.6}{$\ell_0$ Radius $r_0$} &\multirowcell{2.6}{ BCA without \\ Numerical Features} & \multicolumn{7}{c}{BCA with Numerical Features at $\ell_2$ Radius $r_2$}\\
        \cmidrule(lr){5-11}
        & & & & 0.00 & 0.25 & 0.50 & 0.75 & 1.00 & 1.25 & 1.50 \\
        \midrule
        \multirow{5.5}{*}{\adult} & no & - & - &74.9 & 65.7 & 42.4$\;\:$ & 27.4$\;\:$ & 14.5$\;\:$ & 8.9 & 5.1 \\
        \cmidrule(lr){2-2}
        & \multirow{4.0}{*}{yes} & 0 & 76.6 & 77.5 & 73.9 & 68.1$\;\:$ & 63.3$\;\:$ & 48.7$\;\:$ & 40.7$\;\:$ & 35.2$\;\:$ \\
                             &   & 1 & 57.4 &  66.0 & 61.7 & 53.9$\;\:$ & 47.4$\;\:$ & 34.3$\;\:$ & 26.6$\;\:$ & 21.8$\;\:$  \\
                             &   & 2 & 33.5 &  51.4 & 46.2 & 37.5$\;\:$ & 29.3$\;\:$ & 21.5$\;\:$ & 17.1$\;\:$ & 13.4$\;\:$  \\
                             &   & 3 & 8.9  &  36.7 & 31.4 & 24.1$\;\:$ & 15.4$\;\:$ & 10.3$\;\:$ & 8.1 & 5.7  \\
        \midrule
        \multirow{5.5}{*}{\credit} & no & - & - & 56.1 & 44.5 & 33.3 & 17.7$\;\:$ & 9.7 & 7.2 & 5.0 \\
        \cmidrule(lr){2-2}
        & \multirow{4.0}{*}{yes} & 0 & 70.7 & 74.1 & 70.3 & 67.3$\;\:$ & 59.7$\;\:$ & 57.1$\;\:$ & 54.9$\;\:$ & 53.4$\;\:$ \\
                               & & 1 & 48.2 & 52.7 & 47.7 & 41.7$\;\:$ & 38.3$\;\:$ & 37.1$\;\:$ & 35.1$\;\:$ & 34.7$\;\:$  \\
                               & & 2 & 26.4 & 29.3 & 26.0 & 23.8$\;\:$ & 19.2$\;\:$ & 16.8$\;\:$ & 13.5$\;\:$ & 13.0$\;\:$  \\
                               & & 3 & 7.8  & 13.6 & 10.3 & 7.8 & 4.9 & 4.4 & 3.9 & 3.4  \\
		\bottomrule
	\end{tabular}
}
\vspace{-2mm}
\end{table}

\vspace{-1.0mm}
\subsection{Ablation Study}
\vspace{-1.0mm}
\label{sec:eval-ablation}
We first illustrate the effectiveness of our derandomization approach, before demonstrating the benefit of training with our MLE optimality criterion and investigating the effect of the noise level on \tool.

\begin{wrapfigure}[13]{r}{0.40\textwidth}
	\centering
	\vspace{-4.5mm}
	\includegraphics[width=0.92\linewidth]{./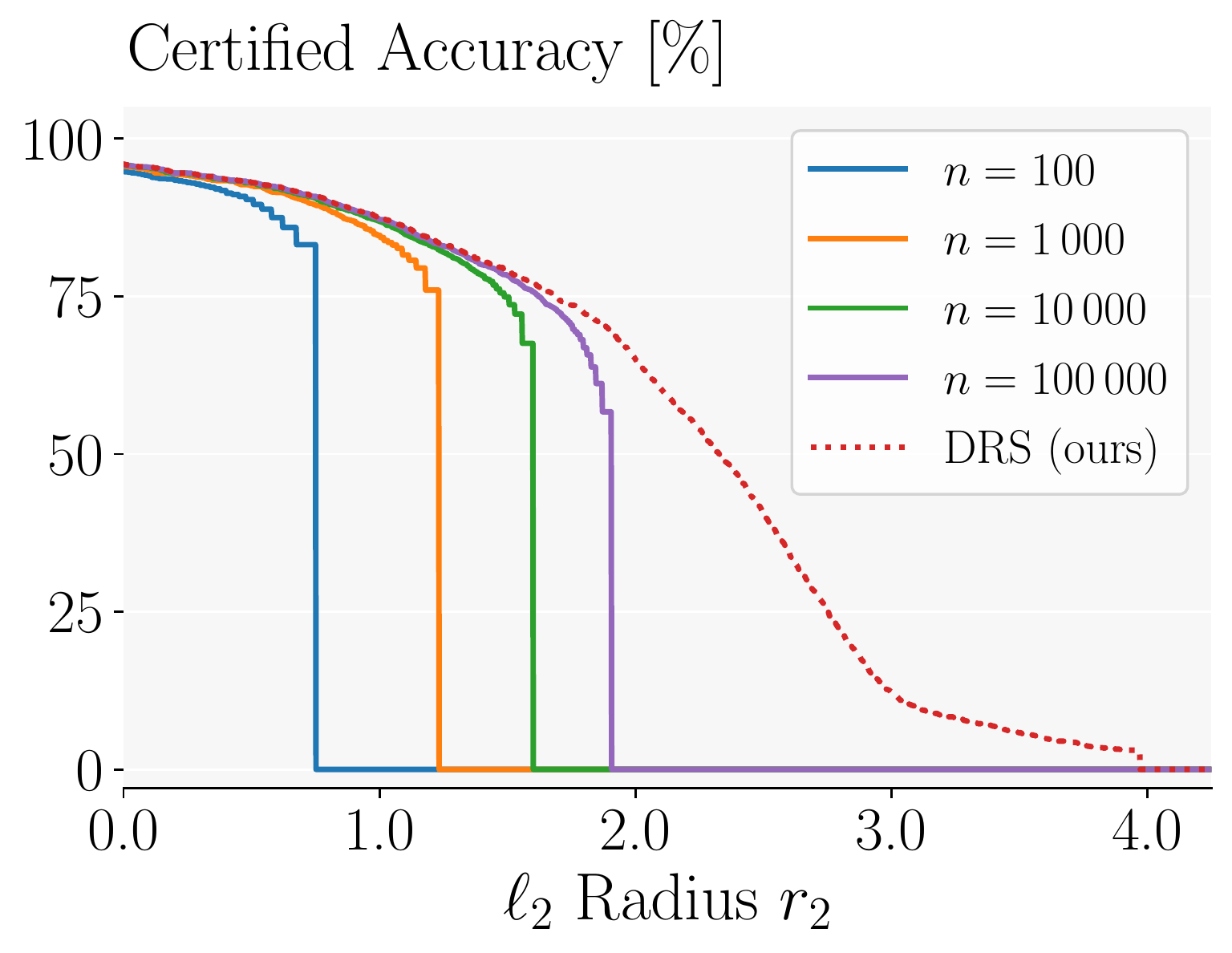}
	\vspace{-2.0mm}
	\caption{ \tool vs. \RS with various sample counts $n$ on \mnistof.}
	\label{fig:ablation-rs-ds}
\end{wrapfigure}
\paragraph{(De-)Randomized vs Randomized Smoothing}
In \cref{fig:ablation-rs-ds}, we compare \tool, (dotted line) and sampling-based \RS (solid lines), w.r.t. certified accuracy over $\ell_2$ radii.
We observe that the sampling-based estimation of the success probability in \RS significantly limits the obtained certifiable radii.
While this effect is particularly pronounced for small sample counts $n$, increasing the maximum certifiable radius, visible as the sudden drop in certifiable accuracy, requires an exponentially increasing number of samples, making the certification of large radii intractable.
\tool, in contrast, can compute exact success probabilities and thus deterministic guarantees for much larger radii, yielding a $33.1\%$ increase in ACR compared to using $n=100\,000$ samples. 
Additionally, \tool is multiple orders of magnitude faster than \RS, here, only requiring approximately $6.45 \cdot 10^{-4}$ s per sample.
For more extensive experiments, please refer to \cref{app:eval-ds-rs}.

\begin{wraptable}[12]{r}{0.47 \textwidth}
	\centering
	\small
	\vspace{3.0mm}
	\caption{Comparison of training with the exact distribution (MLE), randomly perturbed data (Sampling), or clean data (Default) on \breast for $\sigma=1$.}
	\vspace{-1.0mm}
	\label{tab:l2-mle-ablation}
	\resizebox{1.00\linewidth}{!}{
    \begin{tabular}{cccccc}
        \toprule
        \multirow{2.6}{*}{Method} & \multirow{2.6}{*}{ACR} & \multicolumn{4}{c}{Certified Accuracy $[\%]$ at Radius $r$}\\
        \cmidrule(lr){3-6}
        & & 0.0 & 0.25 & 0.5 & 0.75 \\
        \midrule
        MLE (Ours) & \textbf{0.675} & \textbf{100.0} & \textbf{97.1} & \textbf{86.1} & \textbf{30.7}  \\
        Sampling & 0.567 & 99.3 & 95.6 & 75.2 & 8.8 \\
        Default & 0.356 & 26.3 & 25.5 & 25.5 & 25.5 \\
        \midrule
	\end{tabular}
}
\end{wraptable}

\paragraph{MLE Optimality Criterion}
In \cref{tab:l2-mle-ablation}, we evaluate our robust MLE optimality criterion (MLE) by comparing it to the standard entropy criterion applied to samples drawn from the input randomization scheme (Sampling) or the clean data (Default).
We observe that the ensemble trained on the clean data (Default) suffers from a mode collapse when evaluated under noise.
In contrast, both approaches considering the input randomization perform much better, with our robust MLE approach outperforming sampling by a significant margin, especially at large radii.
For more extensive experiments, please refer to \cref{app:eval-mle-ablation}.

\begin{wrapfigure}[12]{r}{0.47\textwidth}
	\centering
	\vspace{-6.5mm}
	\includegraphics[width=0.92\linewidth]{./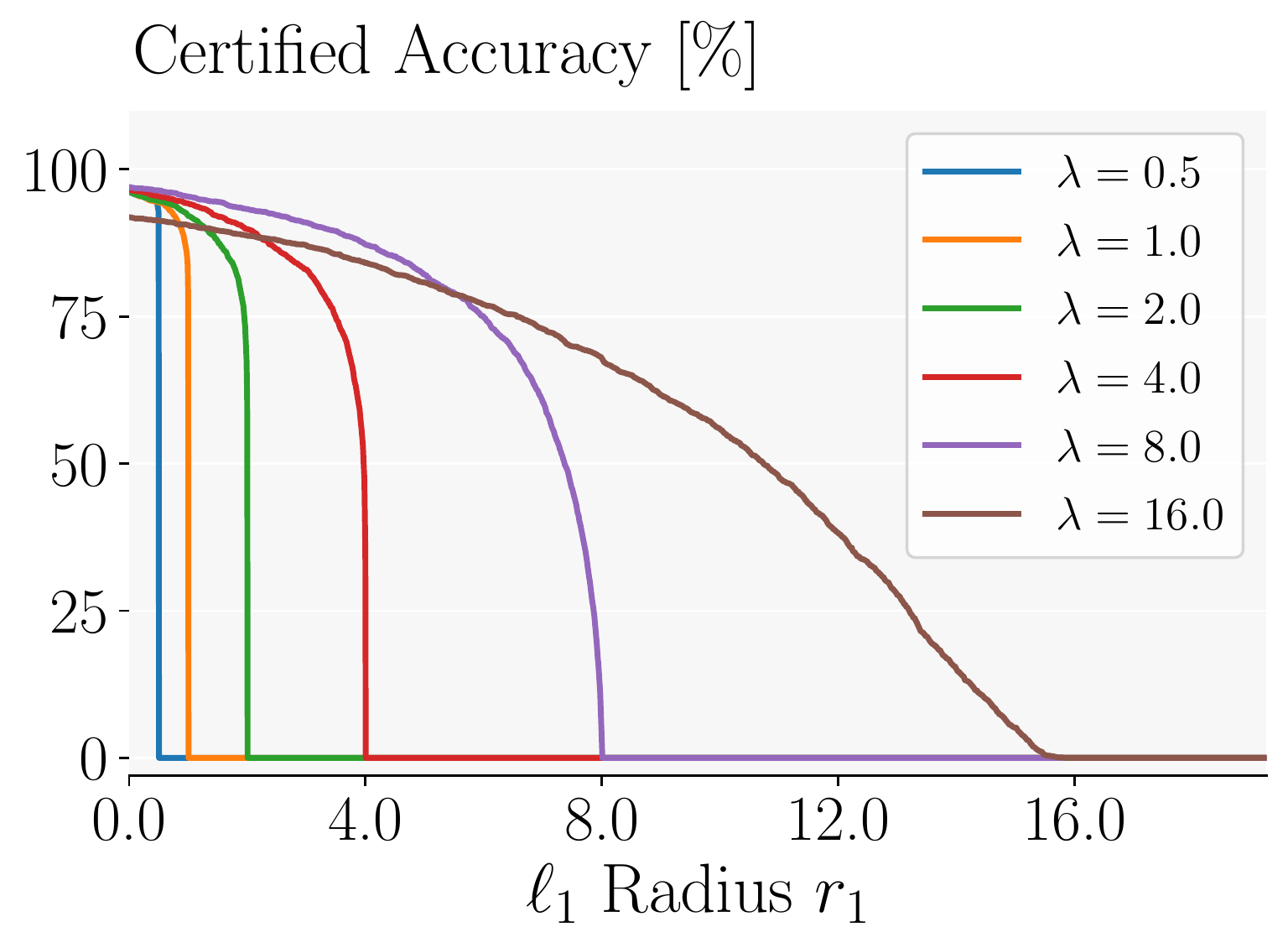}
	\vspace{-2.2mm}
	\caption{\footnotesize Comparing \tool for various noise levels $\lambda$ on \mnistof.}
	\label{fig:ablation-lambda-l1}
\end{wrapfigure}
\paragraph{Effect of Noise Level}
In \cref{fig:ablation-lambda-l1}, we compare the certified accuracy over $\ell_1$-radii for a range of different noise magnitudes $\lambda$ and ensembles of independently MLE optimal stumps.
We observe that at large perturbation magnitudes, we obtain stumps that `think outside the (hyper-)box', i.e., choose splits outside of the original data range, making their ensembles exceptionally robust, even at large radii.
In particular, we obtain a certifiable accuracy of $87.3\%$ at radius $r_1=4.0$, while the state-of-the-art achieves only $83.8\%$ at $r_1=1.0$ \citep{WangZCBH20}.
We provide more experiments for varying noise magnitudes in \cref{app:eval-noise-level}.

\section{Related Work} \label{sec:related}

\paragraph{(De-)Randomized Smoothing}
Probabilistic certification methods \citep{LiCWC19, LecuyerAG0J19,CohenRK19} are a popular approach for obtaining robustness certificates for a wide range of tasks \citep{BojchevskiKG20,GaoHG20,ChiangCAK0G20,FischerBV21, JiaCWG20}, threat models \citep{YangDHSR020,ZhangYGZ020,DvijothamHBKQGX20,LeeYCJ19,BojchevskiKG20,GaoHG20,FischerBV20,li2021tss, WangJCG21, schuchardt2021collective,LevineF20,Levine20DeRand,Levine21Improved}, and robustness-accuracy trade-offs \cite{horvath2022robust}.
These methods follow the general blueprint discussed in \cref{sec:background} and consider arbitrary base classifiers, though specially trained \citep{jeong2020consistency,zhai2020macer,salman2019provably}.
While recent work \citep{Horvath2022Boosting, DBLP:conf/iclr/YangLXK0L22} has found ensembles to be particularly suitable base classifiers, they use neural networks and can thus, in contrast to our work, not leverage their structure.
Specifically designed randomization schemes \citep{LevineF20,Levine21Improved} enable efficient enumeration and thus a deterministic certificate for, e.g., patch attacks or $\ell_1$-norm perturbations.
In contrast to these approaches, we permit arbitrary isotropic continuous randomization schemes, allowing us to leverage comprehensive results on \RS to obtain robustness guarantees against a wide range of $\ell_p$-norm bounded adversaries \cite{YangDHSR020}.

\paragraph{Certification and Training of Tree-Based Models}
In the setting of $\ell_\infty$ robustness, where every feature can be perturbed independently, various methods have been proposed to train \cite{ChenZBH19,Andriushchenko019,CalzavaraLTAO20,VosV21GROOT,pmlr-v162-guo22h} and certify \cite{Andriushchenko019,ChenZS0BH19,RanzatoZ20,TornblomN19} robust decision trees and stumps.
However, $\ell_\infty$ robust models are still vulnerable to other $\ell_p$ norm perturbations \citep{SchottRBB19,TramerB19}, which cover many realistic perturbations better and are the focus of this work.
There, the admissible perturbation of one feature depends on the perturbations of all others, making the above approaches leveraging their independence not applicable.

On the other hand, \citet{KantchelianTJ16} discuss complete robustness certification of tree ensembles in the $\ell_p$-norm setting via MILP.
However, this approach is intractable in most settings due to its Co-NP-complete complexity.
\citet{WangZCBH20} propose an efficient but incomplete DP-based certification algorithm for stump ensembles based on over-approximating the maximum perturbation effect in the $\ell_p$-norm setting. While similarly fast as our approach, we show empirically in \cref{sec:eval} that DRS obtains significantly stronger certificates. 
\citet{WangZCBH20} further introduce an incomplete certification algorithm for tree ensembles, which is based on computing the distance between the pre-image of all trees' leaves and the original sample. As they report significantly worse results using this approach than with stump ensembles, we omit a detailed comparison.

\section{Limitations and Societal Impact}
\label{sec:limitations-and-impact}

\paragraph{Limitations}
While able to handle arbitrary stump ensembles, and being extensible to arbitrary decision trees (see \cref{sec:appendix-trees}), \tool can not handle arbitrary ensembles of decision trees. %
However, as these have been shown to be significantly more sensitive to $\ell_p$-norm perturbations than stump ensembles \citep{WangZCBH20}, we believe this limitation to be of little practical relevance.
Further, like all Smoothing-based approaches, we construct a smoothed model from a base classifier and only obtain robustness guarantees for the former. 
In contrast to standard Randomized Smoothing approaches, we can, however, evaluate the smoothed model exactly and efficiently.

\paragraph{Societal Impact}
As our contributions improve certified accuracy and certification radii while retaining high natural accuracy, they could help make real-world AI systems more robust and thus generally amplify both any positive or negative societal effects. Further, while we achieve state-of-the-art results, these may not be sufficient to guarantee robustness in real-world deployment and could give practitioners a false sense of security, leading to them relying more on our models than is justified.

\section{Conclusion}
\label{sec:conclusion}

We propose \tool, a (De-)Randomized Smoothing approach to robustness certification, enabling joint deterministic certificates over numerical and categorical variables for decision stump ensembles by leveraging their structure to compute their exact output distributions for a given input randomization scheme. The key insight enabling this is that this output distribution can be efficiently computed by aggregating independent distributions associated with the individual features used by the ensemble.
We additionally propose a robust MLE optimality criterion for training individual decision stumps and two boosting schemes improving an ensemble's certifiable accuracy.
Empirically, we demonstrate that \tool significantly outperforms the state-of-the-art for tree-based models in a wide range of settings, obtaining up to 4-fold improvements in certifiable accuracy.

\message{^^JLASTBODYPAGE \thepage^^J}

\clearpage
\bibliography{references}

\begin{thebibliography}{67}
\providecommand{\natexlab}[1]{#1}
\providecommand{\url}[1]{#1}
\csname url@samestyle\endcsname
\providecommand{\newblock}{\relax}
\providecommand{\bibinfo}[2]{#2}
\providecommand{\BIBentrySTDinterwordspacing}{\spaceskip=0pt\relax}
\providecommand{\BIBentryALTinterwordstretchfactor}{4}
\providecommand{\BIBentryALTinterwordspacing}{\spaceskip=\fontdimen2\font plus
\BIBentryALTinterwordstretchfactor\fontdimen3\font minus
  \fontdimen4\font\relax}
\providecommand{\BIBforeignlanguage}[2]{{%
\expandafter\ifx\csname l@#1\endcsname\relax
\typeout{** WARNING: IEEEtranN.bst: No hyphenation pattern has been}%
\typeout{** loaded for the language `#1'. Using the pattern for}%
\typeout{** the default language instead.}%
\else
\language=\csname l@#1\endcsname
\fi
#2}}
\providecommand{\BIBdecl}{\relax}
\BIBdecl

\bibitem[Shwartz-Ziv and Armon(2022)]{SHWARTZZIV202284}
R.~Shwartz-Ziv and A.~Armon, ``Tabular data: Deep learning is not all you
  need,'' \emph{Information Fusion}, vol.~81, 2022.

\bibitem[Biggio et~al.(2013)Biggio, Corona, Maiorca, Nelson, Srndic, Laskov,
  Giacinto, and Roli]{BiggioCMNSLGR13}
B.~Biggio, I.~Corona, D.~Maiorca, B.~Nelson, N.~Srndic, P.~Laskov, G.~Giacinto,
  and F.~Roli, ``Evasion attacks against machine learning at test time,'' in
  \emph{Machine Learning and Knowledge Discovery in Databases - European
  Conference, {ECML} {PKDD} 2013, Prague, Czech Republic, September 23-27,
  2013, Proceedings, Part {III}}, vol. 8190, 2013.

\bibitem[Szegedy et~al.(2014)Szegedy, Zaremba, Sutskever, Bruna, Erhan,
  Goodfellow, and Fergus]{szegedy2013intriguing}
C.~Szegedy, W.~Zaremba, I.~Sutskever, J.~Bruna, D.~Erhan, I.~J. Goodfellow, and
  R.~Fergus, ``Intriguing properties of neural networks,'' in \emph{Proc. of
  ICLR}, 2014.

\bibitem[Chen et~al.(2019{\natexlab{a}})Chen, Zhang, Boning, and
  Hsieh]{ChenZBH19}
H.~Chen, H.~Zhang, D.~S. Boning, and C.~Hsieh, ``Robust decision trees against
  adversarial examples,'' in \emph{Proc. of ICML}, vol.~97, 2019.

\bibitem[Cartella et~al.(2021)Cartella, Anuncia{\c{c}}{\~{a}}o, Funabiki,
  Yamaguchi, Akishita, and Elshocht]{CartellaAFYAE21}
F.~Cartella, O.~Anuncia{\c{c}}{\~{a}}o, Y.~Funabiki, D.~Yamaguchi, T.~Akishita,
  and O.~Elshocht, ``Adversarial attacks for tabular data: Application to fraud
  detection and imbalanced data,'' in \emph{Proceedings of the Workshop on
  Artificial Intelligence Safety 2021 (SafeAI 2021) co-located with the
  Thirty-Fifth {AAAI} Conference on Artificial Intelligence {(AAAI} 2021),
  Virtual, February 8, 2021}, vol. 2808, 2021.

\bibitem[Mathov et~al.(2022)Mathov, Levy, Katzir, Shabtai, and
  Elovici]{MathovLKSE22}
Y.~Mathov, E.~Levy, Z.~Katzir, A.~Shabtai, and Y.~Elovici, ``Not all datasets
  are born equal: On heterogeneous tabular data and adversarial examples,''
  \emph{Knowl. Based Syst.}, vol. 242, 2022.

\bibitem[Singh et~al.(2019)Singh, Gehr, P{\"{u}}schel, and
  Vechev]{singh2019abstract}
G.~Singh, T.~Gehr, M.~P{\"{u}}schel, and M.~T. Vechev, ``An abstract domain for
  certifying neural networks,'' \emph{Proc. {ACM} Program. Lang.}, vol.~3, no.
  {POPL}, 2019.

\bibitem[Xu et~al.(2020)Xu, Shi, Zhang, Wang, Chang, Huang, Kailkhura, Lin, and
  Hsieh]{xu2020automatic}
K.~Xu, Z.~Shi, H.~Zhang, Y.~Wang, K.~Chang, M.~Huang, B.~Kailkhura, X.~Lin, and
  C.~Hsieh, ``Automatic perturbation analysis for scalable certified robustness
  and beyond,'' in \emph{Advances in Neural Information Processing Systems 33:
  Annual Conference on Neural Information Processing Systems 2020, NeurIPS
  2020, December 6-12, 2020, virtual}, 2020.

\bibitem[Gehr et~al.(2018)Gehr, Mirman, Drachsler{-}Cohen, Tsankov, Chaudhuri,
  and Vechev]{GehrMDTCV18}
T.~Gehr, M.~Mirman, D.~Drachsler{-}Cohen, P.~Tsankov, S.~Chaudhuri, and M.~T.
  Vechev, ``{AI2:} safety and robustness certification of neural networks with
  abstract interpretation,'' in \emph{2018 {IEEE} Symposium on Security and
  Privacy, {SP} 2018, Proceedings, 21-23 May 2018, San Francisco, California,
  {USA}}, 2018.

\bibitem[Wang et~al.(2018)Wang, Pei, Whitehouse, Yang, and Jana]{WangPWYJ18}
S.~Wang, K.~Pei, J.~Whitehouse, J.~Yang, and S.~Jana, ``Efficient formal safety
  analysis of neural networks,'' in \emph{Advances in Neural Information
  Processing Systems 31: Annual Conference on Neural Information Processing
  Systems 2018, NeurIPS 2018, December 3-8, 2018, Montr{\'{e}}al, Canada},
  2018.

\bibitem[Weng et~al.(2018)Weng, Zhang, Chen, Song, Hsieh, Daniel, Boning, and
  Dhillon]{WengZCSHDBD18}
T.~Weng, H.~Zhang, H.~Chen, Z.~Song, C.~Hsieh, L.~Daniel, D.~S. Boning, and
  I.~S. Dhillon, ``Towards fast computation of certified robustness for relu
  networks,'' in \emph{Proc. of ICML}, vol.~80, 2018.

\bibitem[Wong and Kolter(2018)]{WongK18}
E.~Wong and J.~Z. Kolter, ``Provable defenses against adversarial examples via
  the convex outer adversarial polytope,'' in \emph{Proc. of ICML}, vol.~80,
  2018.

\bibitem[Singh et~al.(2018)Singh, Gehr, Mirman, P{\"{u}}schel, and
  Vechev]{singh2018fast}
G.~Singh, T.~Gehr, M.~Mirman, M.~P{\"{u}}schel, and M.~T. Vechev, ``Fast and
  effective robustness certification,'' in \emph{Advances in Neural Information
  Processing Systems 31: Annual Conference on Neural Information Processing
  Systems 2018, NeurIPS 2018, December 3-8, 2018, Montr{\'{e}}al, Canada},
  2018.

\bibitem[M{\"{u}}ller et~al.(2022)M{\"{u}}ller, Makarchuk, Singh,
  P{\"{u}}schel, and Vechev]{muller2021prima}
M.~N. M{\"{u}}ller, G.~Makarchuk, G.~Singh, M.~P{\"{u}}schel, and M.~T. Vechev,
  ``{PRIMA:} general and precise neural network certification via scalable
  convex hull approximations,'' \emph{Proc. {ACM} Program. Lang.}, vol.~6, no.
  {POPL}, 2022.

\bibitem[Tjeng et~al.(2019)Tjeng, Xiao, and Tedrake]{tjeng2017evaluating}
V.~Tjeng, K.~Y. Xiao, and R.~Tedrake, ``Evaluating robustness of neural
  networks with mixed integer programming,'' in \emph{Proc. of ICLR}, 2019.

\bibitem[Dathathri et~al.(2020)Dathathri, Dvijotham, Kurakin, Raghunathan,
  Uesato, Bunel, Shankar, Steinhardt, Goodfellow, Liang, and
  Kohli]{dathathri2020enabling}
S.~Dathathri, K.~Dvijotham, A.~Kurakin, A.~Raghunathan, J.~Uesato, R.~Bunel,
  S.~Shankar, J.~Steinhardt, I.~J. Goodfellow, P.~Liang, and P.~Kohli,
  ``Enabling certification of verification-agnostic networks via
  memory-efficient semidefinite programming,'' in \emph{Advances in Neural
  Information Processing Systems 33: Annual Conference on Neural Information
  Processing Systems 2020, NeurIPS 2020, December 6-12, 2020, virtual}, 2020.

\bibitem[Ehlers(2017)]{Ehlers17}
R.~Ehlers, ``Formal verification of piece-wise linear feed-forward neural
  networks,'' in \emph{Automated Technology for Verification and Analysis -
  15th International Symposium, {ATVA} 2017, Pune, India, October 3-6, 2017,
  Proceedings}, vol. 10482, 2017.

\bibitem[Mirman et~al.(2018)Mirman, Gehr, and Vechev]{MirmanGV18}
M.~Mirman, T.~Gehr, and M.~T. Vechev, ``Differentiable abstract interpretation
  for provably robust neural networks,'' in \emph{Proc. of ICML}, vol.~80,
  2018.

\bibitem[Balunovic and Vechev(2020)]{BalunovicV20}
M.~Balunovic and M.~T. Vechev, ``Adversarial training and provable defenses:
  Bridging the gap,'' in \emph{Proc. of ICLR}, 2020.

\bibitem[Raghunathan et~al.(2018)Raghunathan, Steinhardt, and
  Liang]{RaghunathanSL18b}
A.~Raghunathan, J.~Steinhardt, and P.~Liang, ``Certified defenses against
  adversarial examples,'' in \emph{Proc. of ICLR}, 2018.

\bibitem[Gowal et~al.(2018)Gowal, Dvijotham, Stanforth, Bunel, Qin, Uesato,
  Arandjelovic, Mann, and Kohli]{GowalDSBQUAMK18}
S.~Gowal, K.~Dvijotham, R.~Stanforth, R.~Bunel, C.~Qin, J.~Uesato,
  R.~Arandjelovic, T.~A. Mann, and P.~Kohli, ``On the effectiveness of interval
  bound propagation for training verifiably robust models,'' \emph{ArXiv
  preprint}, vol. abs/1810.12715, 2018.

\bibitem[Andriushchenko and Hein(2019)]{Andriushchenko019}
M.~Andriushchenko and M.~Hein, ``Provably robust boosted decision stumps and
  trees against adversarial attacks,'' in \emph{Advances in Neural Information
  Processing Systems 32: Annual Conference on Neural Information Processing
  Systems 2019, NeurIPS 2019, December 8-14, 2019, Vancouver, BC, Canada},
  2019.

\bibitem[Wang et~al.(2020)Wang, Zhang, Chen, Boning, and Hsieh]{WangZCBH20}
Y.~Wang, H.~Zhang, H.~Chen, D.~S. Boning, and C.~Hsieh, ``On lp-norm robustness
  of ensemble decision stumps and trees,'' in \emph{Proc. of ICML}, vol. 119,
  2020.

\bibitem[Cohen et~al.(2019)Cohen, Rosenfeld, and Kolter]{CohenRK19}
J.~M. Cohen, E.~Rosenfeld, and J.~Z. Kolter, ``Certified adversarial robustness
  via randomized smoothing,'' in \emph{Proc. of ICML}, vol.~97, 2019.

\bibitem[Yang et~al.(2020)Yang, Duan, Hu, Salman, Razenshteyn, and
  Li]{YangDHSR020}
G.~Yang, T.~Duan, J.~E. Hu, H.~Salman, I.~P. Razenshteyn, and J.~Li,
  ``Randomized smoothing of all shapes and sizes,'' in \emph{Proc. of ICML},
  vol. 119, 2020.

\bibitem[Zhang et~al.(2020)Zhang, Ye, Gong, Zhu, and Liu]{ZhangYGZ020}
D.~Zhang, M.~Ye, C.~Gong, Z.~Zhu, and Q.~Liu, ``Black-box certification with
  randomized smoothing: {A} functional optimization based framework,'' in
  \emph{Advances in Neural Information Processing Systems 33: Annual Conference
  on Neural Information Processing Systems 2020, NeurIPS 2020, December 6-12,
  2020, virtual}, 2020.

\bibitem[Dvijotham et~al.(2020)Dvijotham, Hayes, Balle, Kolter, Qin,
  Gy{\"{o}}rgy, Xiao, Gowal, and Kohli]{DvijothamHBKQGX20}
K.~D. Dvijotham, J.~Hayes, B.~Balle, Z.~Kolter, C.~Qin, A.~Gy{\"{o}}rgy,
  K.~Xiao, S.~Gowal, and P.~Kohli, ``A framework for robustness certification
  of smoothed classifiers using f-divergences,'' in \emph{Proc. of ICLR}, 2020.

\bibitem[Neyman and Pearson(1933)]{neyman1933ix}
J.~Neyman and E.~S. Pearson, ``Ix. on the problem of the most efficient tests
  of statistical hypotheses,'' \emph{Philosophical Transactions of the Royal
  Society of London. Series A, Containing Papers of a Mathematical or Physical
  Character}, vol. 231, no. 694-706, 1933.

\bibitem[Horv{\'{a}}th et~al.(2022)Horv{\'{a}}th, M{\"{u}}ller, Fischer, and
  Vechev]{Horvath2022Boosting}
M.~Z. Horv{\'{a}}th, M.~N. M{\"{u}}ller, M.~Fischer, and M.~T. Vechev,
  ``Boosting randomized smoothing with variance reduced classifiers,'' in
  \emph{Proc. of ICLR}, 2022.

\bibitem[Lee et~al.(2019)Lee, Yuan, Chang, and Jaakkola]{LeeYCJ19}
G.~Lee, Y.~Yuan, S.~Chang, and T.~S. Jaakkola, ``Tight certificates of
  adversarial robustness for randomly smoothed classifiers,'' in \emph{Advances
  in Neural Information Processing Systems 32: Annual Conference on Neural
  Information Processing Systems 2019, NeurIPS 2019, December 8-14, 2019,
  Vancouver, BC, Canada}, 2019.

\bibitem[Bojchevski et~al.(2020)Bojchevski, Klicpera, and
  G{\"{u}}nnemann]{BojchevskiKG20}
A.~Bojchevski, J.~Klicpera, and S.~G{\"{u}}nnemann, ``Efficient robustness
  certificates for discrete data: Sparsity-aware randomized smoothing for
  graphs, images and more,'' in \emph{Proc. of ICML}, vol. 119, 2020.

\bibitem[Bustos et~al.(2004)Bustos, Keim, Saupe, Schreck, and
  Vranic]{BustosKSSV04}
B.~Bustos, D.~A. Keim, D.~Saupe, T.~Schreck, and D.~V. Vranic, ``Using entropy
  impurity for improved 3d object similarity search,'' in \emph{Proceedings of
  the 2004 {IEEE} International Conference on Multimedia and Expo, {ICME} 2004,
  27-30 June 2004, Taipei, Taiwan}, 2004.

\bibitem[Friedman(2001)]{Friedman2001Greedy}
J.~H. Friedman, ``Greedy function approximation: a gradient boosting machine,''
  \emph{Annals of statistics}, 2001.

\bibitem[Freund and Schapire(1997)]{FreundS97}
Y.~Freund and R.~E. Schapire, ``A decision-theoretic generalization of on-line
  learning and an application to boosting,'' \emph{J. Comput. Syst. Sci.},
  vol.~55, no.~1, 1997.

\bibitem[Paszke et~al.(2019)Paszke, Gross, Massa, Lerer, Bradbury, Chanan,
  Killeen, Lin, Gimelshein, Antiga, Desmaison, K{\"{o}}pf, Yang, DeVito,
  Raison, Tejani, Chilamkurthy, Steiner, Fang, Bai, and
  Chintala]{PaszkeGMLBCKLGA19}
A.~Paszke, S.~Gross, F.~Massa, A.~Lerer, J.~Bradbury, G.~Chanan, T.~Killeen,
  Z.~Lin, N.~Gimelshein, L.~Antiga, A.~Desmaison, A.~K{\"{o}}pf, E.~Yang,
  Z.~DeVito, M.~Raison, A.~Tejani, S.~Chilamkurthy, B.~Steiner, L.~Fang,
  J.~Bai, and S.~Chintala, ``Pytorch: An imperative style, high-performance
  deep learning library,'' in \emph{Advances in Neural Information Processing
  Systems 32: Annual Conference on Neural Information Processing Systems 2019,
  NeurIPS 2019, December 8-14, 2019, Vancouver, BC, Canada}, 2019.

\bibitem[Smith et~al.(1988)Smith, Everhart, Dickson, Knowler, and
  Johannes]{Smith1988UsingTA}
J.~W. Smith, J.~E. Everhart, W.~C. Dickson, W.~C. Knowler, and R.~S. Johannes,
  ``Using the adap learning algorithm to forecast the onset of diabetes
  mellitus,'' in \emph{Annual Symposium on Computer Application in Medical
  Care}, 1988.

\bibitem[Dua and Graff(2017)]{Dua:2019}
D.~Dua and C.~Graff, ``{UCI} machine learning repository,'' 2017.

\bibitem[an(2017)]{DBLP:journals/corr/abs-1708-07747}
H.~X. an, ``Fashion-mnist: a novel image dataset for benchmarking machine
  learnin,'' \emph{ArXiv preprint}, vol. abs/1708.07747, 2017.

\bibitem[LeCun and Cortes(1998)]{lecun2010mnist}
Y.~LeCun and C.~Cortes, ``{MNIST} handwritten digit database,'' 1998.

\bibitem[Pedregosa et~al.(2012)Pedregosa, Varoquaux, Gramfort, Michel, Thirion,
  Grisel, Blondel, Prettenhofer, Weiss, Dubourg, VanderPlas, Passos,
  Cournapeau, Brucher, Perrot, and Duchesnay]{DBLP:journals/corr/abs-1201-0490}
\BIBentryALTinterwordspacing
F.~Pedregosa, G.~Varoquaux, A.~Gramfort, V.~Michel, B.~Thirion, O.~Grisel,
  M.~Blondel, P.~Prettenhofer, R.~Weiss, V.~Dubourg, J.~VanderPlas, A.~Passos,
  D.~Cournapeau, M.~Brucher, M.~Perrot, and E.~Duchesnay, ``Scikit-learn:
  Machine learning in python,'' \emph{CoRR}, vol. abs/1201.0490, 2012.
  [Online]. Available: \url{http://arxiv.org/abs/1201.0490}
\BIBentrySTDinterwordspacing

\bibitem[Li et~al.(2019)Li, Chen, Wang, and Carin]{LiCWC19}
B.~Li, C.~Chen, W.~Wang, and L.~Carin, ``Certified adversarial robustness with
  additive noise,'' in \emph{Advances in Neural Information Processing Systems
  32: Annual Conference on Neural Information Processing Systems 2019, NeurIPS
  2019, December 8-14, 2019, Vancouver, BC, Canada}, 2019.

\bibitem[L{\'{e}}cuyer et~al.(2019)L{\'{e}}cuyer, Atlidakis, Geambasu, Hsu, and
  Jana]{LecuyerAG0J19}
M.~L{\'{e}}cuyer, V.~Atlidakis, R.~Geambasu, D.~Hsu, and S.~Jana, ``Certified
  robustness to adversarial examples with differential privacy,'' in \emph{2019
  {IEEE} Symposium on Security and Privacy, {SP} 2019, San Francisco, CA, USA,
  May 19-23, 2019}, 2019.

\bibitem[Gao et~al.(2020)Gao, Hu, and Gong]{GaoHG20}
Z.~Gao, R.~Hu, and Y.~Gong, ``Certified robustness of graph classification
  against topology attack with randomized smoothing,'' in \emph{{IEEE} Global
  Communications Conference, {GLOBECOM} 2020, Virtual Event, Taiwan, December
  7-11, 2020}, 2020.

\bibitem[Chiang et~al.(2020)Chiang, Curry, Abdelkader, Kumar, Dickerson, and
  Goldstein]{ChiangCAK0G20}
P.~Chiang, M.~J. Curry, A.~Abdelkader, A.~Kumar, J.~Dickerson, and
  T.~Goldstein, ``Detection as regression: Certified object detection with
  median smoothing,'' in \emph{Advances in Neural Information Processing
  Systems 33: Annual Conference on Neural Information Processing Systems 2020,
  NeurIPS 2020, December 6-12, 2020, virtual}, 2020.

\bibitem[Fischer et~al.(2021)Fischer, Baader, and Vechev]{FischerBV21}
M.~Fischer, M.~Baader, and M.~T. Vechev, ``Scalable certified segmentation via
  randomized smoothing,'' in \emph{Proc. of ICML}, vol. 139, 2021.

\bibitem[Jia et~al.(2020)Jia, Cao, Wang, and Gong]{JiaCWG20}
J.~Jia, X.~Cao, B.~Wang, and N.~Z. Gong, ``Certified robustness for top-k
  predictions against adversarial perturbations via randomized smoothing,'' in
  \emph{Proc. of ICLR}, 2020.

\bibitem[Fischer et~al.(2020)Fischer, Baader, and Vechev]{FischerBV20}
M.~Fischer, M.~Baader, and M.~T. Vechev, ``Certified defense to image
  transformations via randomized smoothing,'' in \emph{Advances in Neural
  Information Processing Systems 33: Annual Conference on Neural Information
  Processing Systems 2020, NeurIPS 2020, December 6-12, 2020, virtual}, 2020.

\bibitem[Li et~al.(2021)Li, Weber, Xu, Rimanic, Kailkhura, Xie, Zhang, and
  Li]{li2021tss}
L.~Li, M.~Weber, X.~Xu, L.~Rimanic, B.~Kailkhura, T.~Xie, C.~Zhang, and B.~Li,
  ``Tss: Transformation-specific smoothing for robustness certification,'' in
  \emph{ACM CCS}, 2021.

\bibitem[Wang et~al.(2021)Wang, Jia, Cao, and Gong]{WangJCG21}
B.~Wang, J.~Jia, X.~Cao, and N.~Z. Gong, ``Certified robustness of graph neural
  networks against adversarial structural perturbation,'' in \emph{{KDD} '21:
  The 27th {ACM} {SIGKDD} Conference on Knowledge Discovery and Data Mining,
  Virtual Event, Singapore, August 14-18, 2021}, 2021.

\bibitem[Schuchardt et~al.(2021)Schuchardt, Bojchevski, Klicpera, and
  G{\"u}nnemann]{schuchardt2021collective}
J.~Schuchardt, A.~Bojchevski, J.~Klicpera, and S.~G{\"u}nnemann, ``Collective
  robustness certificates,'' in \emph{International Conference on Learning
  Representations}, 2021.

\bibitem[Levine and Feizi(2020{\natexlab{a}})]{LevineF20}
A.~Levine and S.~Feizi, ``Robustness certificates for sparse adversarial
  attacks by randomized ablation,'' in \emph{The Thirty-Fourth {AAAI}
  Conference on Artificial Intelligence, {AAAI} 2020, The Thirty-Second
  Innovative Applications of Artificial Intelligence Conference, {IAAI} 2020,
  The Tenth {AAAI} Symposium on Educational Advances in Artificial
  Intelligence, {EAAI} 2020, New York, NY, USA, February 7-12, 2020}, 2020.

\bibitem[Levine and Feizi(2020{\natexlab{b}})]{Levine20DeRand}
------, ``(de)randomized smoothing for certifiable defense against patch
  attacks,'' in \emph{Advances in Neural Information Processing Systems 33:
  Annual Conference on Neural Information Processing Systems 2020, NeurIPS
  2020, December 6-12, 2020, virtual}, 2020.

\bibitem[Levine and Feizi(2021)]{Levine21Improved}
------, ``Improved, deterministic smoothing for l\({}_{\mbox{1}}\) certified
  robustness,'' in \emph{Proc. of ICML}, vol. 139, 2021.

\bibitem[Horváth et~al.(2022)Horváth, M{\"{u}}ller, Fischer, and
  Vechev]{horvath2022robust}
M.~Z. Horváth, M.~N. M{\"{u}}ller, M.~Fischer, and M.~Vechev, ``Robust and
  accurate - compositional architectures for randomized smoothing,'' in
  \emph{ICLR 2022 Workshop on Socially Responsible Machine Learning}, 2022.

\bibitem[Jeong and Shin(2020)]{jeong2020consistency}
J.~Jeong and J.~Shin, ``Consistency regularization for certified robustness of
  smoothed classifiers,'' in \emph{Advances in Neural Information Processing
  Systems 33: Annual Conference on Neural Information Processing Systems 2020,
  NeurIPS 2020, December 6-12, 2020, virtual}, 2020.

\bibitem[Zhai et~al.(2020)Zhai, Dan, He, Zhang, Gong, Ravikumar, Hsieh, and
  Wang]{zhai2020macer}
R.~Zhai, C.~Dan, D.~He, H.~Zhang, B.~Gong, P.~Ravikumar, C.~Hsieh, and L.~Wang,
  ``{MACER:} attack-free and scalable robust training via maximizing certified
  radius,'' in \emph{Proc. of ICLR}, 2020.

\bibitem[Salman et~al.(2019)Salman, Li, Razenshteyn, Zhang, Zhang, Bubeck, and
  Yang]{salman2019provably}
H.~Salman, J.~Li, I.~P. Razenshteyn, P.~Zhang, H.~Zhang, S.~Bubeck, and
  G.~Yang, ``Provably robust deep learning via adversarially trained smoothed
  classifiers,'' in \emph{Advances in Neural Information Processing Systems 32:
  Annual Conference on Neural Information Processing Systems 2019, NeurIPS
  2019, December 8-14, 2019, Vancouver, BC, Canada}, 2019.

\bibitem[Yang et~al.(2022)Yang, Li, Xu, Kailkhura, Xie, and
  Li]{DBLP:conf/iclr/YangLXK0L22}
Z.~Yang, L.~Li, X.~Xu, B.~Kailkhura, T.~Xie, and B.~Li, ``On the certified
  robustness for ensemble models and beyond,'' in \emph{The Tenth International
  Conference on Learning Representations, {ICLR} 2022, Virtual Event, April
  25-29, 2022}, 2022.

\bibitem[Calzavara et~al.(2020)Calzavara, Lucchese, Tolomei, Abebe, and
  Orlando]{CalzavaraLTAO20}
S.~Calzavara, C.~Lucchese, G.~Tolomei, S.~A. Abebe, and S.~Orlando, ``Treant:
  training evasion-aware decision trees,'' \emph{Data Min. Knowl. Discov.},
  vol.~34, no.~5, 2020.

\bibitem[Vos and Verwer(2021)]{VosV21GROOT}
D.~Vos and S.~Verwer, ``Efficient training of robust decision trees against
  adversarial examples,'' in \emph{Proc. of ICML}, vol. 139, 2021.

\bibitem[Guo et~al.(2022)Guo, Teng, Gao, and Zhou]{pmlr-v162-guo22h}
J.-Q. Guo, M.-Z. Teng, W.~Gao, and Z.-H. Zhou, ``Fast provably robust decision
  trees and boosting,'' in \emph{Proceedings of the 39th International
  Conference on Machine Learning}, 2022.

\bibitem[Chen et~al.(2019{\natexlab{b}})Chen, Zhang, Si, Li, Boning, and
  Hsieh]{ChenZS0BH19}
H.~Chen, H.~Zhang, S.~Si, Y.~Li, D.~S. Boning, and C.~Hsieh, ``Robustness
  verification of tree-based models,'' in \emph{Advances in Neural Information
  Processing Systems 32: Annual Conference on Neural Information Processing
  Systems 2019, NeurIPS 2019, December 8-14, 2019, Vancouver, BC, Canada},
  2019.

\bibitem[Ranzato and Zanella(2020)]{RanzatoZ20}
F.~Ranzato and M.~Zanella, ``Abstract interpretation of decision tree ensemble
  classifiers,'' in \emph{The Thirty-Fourth {AAAI} Conference on Artificial
  Intelligence, {AAAI} 2020, The Thirty-Second Innovative Applications of
  Artificial Intelligence Conference, {IAAI} 2020, The Tenth {AAAI} Symposium
  on Educational Advances in Artificial Intelligence, {EAAI} 2020, New York,
  NY, USA, February 7-12, 2020}, 2020.

\bibitem[T{\"{o}}rnblom and Nadjm{-}Tehrani(2019)]{TornblomN19}
J.~T{\"{o}}rnblom and S.~Nadjm{-}Tehrani, ``An abstraction-refinement approach
  to formal verification of tree ensembles,'' in \emph{Computer Safety,
  Reliability, and Security - {SAFECOMP} 2019 Workshops, ASSURE, DECSoS,
  SASSUR, STRIVE, and WAISE, Turku, Finland, September 10, 2019, Proceedings},
  vol. 11699, 2019.

\bibitem[Schott et~al.(2019)Schott, Rauber, Bethge, and Brendel]{SchottRBB19}
L.~Schott, J.~Rauber, M.~Bethge, and W.~Brendel, ``Towards the first
  adversarially robust neural network model on {MNIST},'' in \emph{Proc. of
  ICLR}, 2019.

\bibitem[Tram{\`{e}}r and Boneh(2019)]{TramerB19}
F.~Tram{\`{e}}r and D.~Boneh, ``Adversarial training and robustness for
  multiple perturbations,'' in \emph{Advances in Neural Information Processing
  Systems 32: Annual Conference on Neural Information Processing Systems 2019,
  NeurIPS 2019, December 8-14, 2019, Vancouver, BC, Canada}, 2019.

\bibitem[Kantchelian et~al.(2016)Kantchelian, Tygar, and
  Joseph]{KantchelianTJ16}
A.~Kantchelian, J.~D. Tygar, and A.~D. Joseph, ``Evasion and hardening of tree
  ensemble classifiers,'' in \emph{Proc. of ICML}, vol.~48, 2016.

\end{thebibliography}
\bibliographystyle{IEEEtranN}

\newpage
\section*{Checklist}

\begin{enumerate}
\item For all authors...
\begin{enumerate}
  \item Do the main claims made in the abstract and introduction accurately reflect the paper's contributions and scope?
  \answerYes{We substantiate all claims theoretically (\cref{sec:det_smoothing} and \cref{sec:training}) or empirically (\cref{sec:eval}), as appropriate.}
  \item Did you describe the limitations of your work?
	\answerYes{We describe the limitations in \cref{sec:limitations-and-impact}.}
  \item Did you discuss any potential negative societal impacts of your work?
	\answerYes{We discuss societal impacts in \cref{sec:limitations-and-impact}.}
  \item Have you read the ethics review guidelines and ensured that your paper conforms to them?
    \answerYes{}
\end{enumerate}

\item If you are including theoretical results...
\begin{enumerate}
  \item Did you state the full set of assumptions of all theoretical results?
    \answerYes{See \cref{sec:det_smoothing}, \cref{sec:training}, and \cref{sec:appendix-additional-thery}.}
        \item Did you include complete proofs of all theoretical results?
    \answerYes{See \cref{sec:det_smoothing}, \cref{sec:training}, and \cref{sec:appendix-additional-thery}.}
\end{enumerate}

\item If you ran experiments...
\begin{enumerate}
  \item Did you include the code, data, and instructions needed to reproduce the main experimental results (either in the supplemental material or as a URL)?
    \answerYes{We will include them as supplemental material and will release them to the public upon publication. All datasets we use are publicly available.}
  \item Did you specify all the training details (e.g., data splits, hyperparameters, how they were chosen)?
    \answerYes{We provide full training details in \cref{app:experimental-details}.}
        \item Did you report error bars (e.g., with respect to the random seed after running experiments multiple times)?
    \answerYes{Both our training and certification methods are deterministic (up to floating point errors), thus our main experiments do not contain error bars. However, in \cref{app:additional_experiments} we report numerous error bars with respect to the data split via 5-fold cross-validation.}
        \item Did you include the total amount of compute and the type of resources used (e.g., type of GPUs, internal cluster, or cloud provider)?
    \answerYes{We provide resource type details in \cref{sec:eval} and timing details in \cref{app:experimental-details}.}
\end{enumerate}

\item If you are using existing assets (e.g., code, data, models) or curating/releasing new assets...
\begin{enumerate}
  \item If your work uses existing assets, did you cite the creators?
    \answerYes{}
  \item Did you mention the license of the assets?
    \answerYes{}
  \item Did you include any new assets either in the supplemental material or as a URL?
    \answerNA{}
  \item Did you discuss whether and how consent was obtained from people whose data you're using/curating?
    \answerNA{}
  \item Did you discuss whether the data you are using/curating contains personally identifiable information or offensive content?
    \answerNA{}
\end{enumerate}

\item If you used crowdsourcing or conducted research with human subjects...
\begin{enumerate}
  \item Did you include the full text of instructions given to participants and screenshots, if applicable?
    \answerNA{}
  \item Did you describe any potential participant risks, with links to Institutional Review Board (IRB) approvals, if applicable?
    \answerNA{}
  \item Did you include the estimated hourly wage paid to participants and the total amount spent on participant compensation?
    \answerNA{}
\end{enumerate}

\end{enumerate}

\message{^^JLASTREFERENCESPAGE \thepage^^J}

\ifbool{includeappendix}{%
	\clearpage
	\appendix
	
\section{Additional Theory}\label{sec:appendix-additional-thery}

In this section, we provide additional theoretical results, omitted in the main paper due to space constraints.
Concretely, in \cref{sec:pdf_computation} we provide a proof for \cref{thm:correctness} on the correctness of our PDF computation.
In \cref{app:mle_proof}, we show that $\gamma_l$, $\gamma_r$ and $v_m$, computed as outlined in \cref{sec:training}, are indeed jointly MLE-optimal.
Finally, we provide more details on \robtreeboost and \robada in \cref{sec:tree_boost_extra} and \cref{sec:ada_boost_extra}, respectively.

\subsection{PDF Computation}
\label{sec:pdf_computation}

Here, we provide a proof for \cref{thm:correctness} on the correctness of our efficient PDF-computation, restated below for convenience.
\correctness*

\begin{proof}
    Let the random variable $\Gamma^{(i)}$ be the prediction of the $i$-th meta-stump, then we have by definition of the meta-stump $\P[\Gamma^{(i)}=\Gamma_{i,j}] = \P_{x'_i \sim \dist(\vx)}[v_{i,j-1} < x'_i \leq v_{i,j}]$ (see \cref{sec:det_smoothing}).
    Note that, for presentational simplicity, we assume $\Gamma_{i,j} \neq \Gamma_{i,k}, \forall k \neq j$. %
    Now, we first show by induction that $\texttt{pdf}[i]$ computes the exact PDF of $\sum_{l=1}^{i}\Gamma^{(l)}$ (\cref{lemma:correctness}), before showing how the CDF of the meta-stump ensemble follows.

\begin{lemma}
    \label{lemma:correctness}
    \cref{alg:dp} computes $\texttt{pdf}[i][t] = \P\left[\sum_{l=1}^{i}\Gamma^{(l)}=t\right]$.
\end{lemma}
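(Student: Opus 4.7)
The plan is to prove Lemma A.1 by induction on $i$, which recovers the statement of Theorem 1 as an easy corollary. The key structural fact we will exploit is that, because $\dist$ is isotropic, the randomized coordinates $x'_1, \dots, x'_d$ are independent; since each $\Gamma^{(l)}$ depends only on $x'_l$, the random variables $\Gamma^{(1)}, \dots, \Gamma^{(d)}$ are mutually independent. This reduces the computation of the law of their sum to a sequence of discrete convolutions, which is exactly what Algorithm 1 carries out.

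For the base case $i = 0$, the empty sum is identically $0$, and by the initialization $\texttt{pdf}[0][0] = 1$ and $\texttt{pdf}[0][t] = 0$ for $t > 0$, so the claim holds. For the inductive step, suppose $\texttt{pdf}[i-1][t] = \P[\sum_{l=1}^{i-1}\Gamma^{(l)} = t]$ for every $t$. By independence of $\Gamma^{(i)}$ from $\Gamma^{(1)},\dots,\Gamma^{(i-1)}$ we may condition on the outcome of the $i$-th meta-stump and write
\begin{equation*}
\P\!\left[\sum_{l=1}^{i}\Gamma^{(l)} = s\right] = \sum_{j=1}^{M_i} \P\!\left[\sum_{l=1}^{i-1}\Gamma^{(l)} = s - \Gamma_{i,j}\right] \cdot \P[\Gamma^{(i)} = \Gamma_{i,j}].
\end{equation*}
Applying the inductive hypothesis to the first factor and the definition $\P[\Gamma^{(i)} = \Gamma_{i,j}] = \P_{x'_i \sim \dist(\vx)}[v_{i,j-1} < x'_i \leq v_{i,j}]$ to the second factor, this is exactly the cumulative effect of the innermost update in Algorithm 1: starting from $\texttt{pdf}[i][\cdot] \equiv 0$ (implicit in the pseudocode), the loop over $j$ adds, for every source index $t$, the contribution $\texttt{pdf}[i-1][t]\cdot \P[\Gamma^{(i)} = \Gamma_{i,j}]$ to the target index $t + \Gamma_{i,j}$. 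Summing over $j$ yields the desired identity for $s = t+\Gamma_{i,j}$, completing the induction.

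Given Lemma A.1, Theorem 1 follows directly. Since $\tilde{f}_i(x'_i) = \gamma_{i,j} = \Gamma_{i,j}/\Delta$ on each region, we have $\bar{f}_M(\vx') = \tfrac{1}{M\Delta}\sum_{l=1}^{d}\Gamma^{(l)}$, so $\bar{f}_M(\vx') \leq z$ iff the integer-valued sum $\sum_{l=1}^d \Gamma^{(l)}$ is at most $\lfloor zM\Delta\rfloor$. Hence
\begin{equation*}
\bar{\bc{F}}_{M,\vx}(z) = \sum_{t=0}^{\lfloor zM\Delta\rfloor} \P\!\left[\sum_{l=1}^{d}\Gamma^{(l)} = t\right] = \sum_{t=0}^{\lfloor zM\Delta\rfloor} \texttt{pdf}[d][t],
\end{equation*}
and $p_y = |y - \bar{\bc{F}}_{M,\vx}(0.5)|$ follows from the definition of $\bar{g}_M$ given earlier.

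The only delicate point is the independence argument: one must state clearly that isotropy of $\dist$ makes the coordinates $x'_i$ independent, and that grouping all stumps sharing a split feature into a single meta-stump $\tilde{f}_i$ is what makes $\Gamma^{(i)}$ a function of $x'_i$ alone; without this grouping, two stumps sharing a split feature would induce dependent summands and the convolution identity would fail. A minor bookkeeping point is the assumption $\Gamma_{i,j}\neq\Gamma_{i,k}$ for $j\neq k$ made implicitly in the algorithm description; if repeated values occur, the corresponding probabilities should be added, but this does not affect correctness since the DP update is additive over $j$.
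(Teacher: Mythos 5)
Your proof is correct and follows essentially the same route as the paper's: induction on $i$, with the inductive step justified by the mutual independence of the meta-stump outputs (a consequence of isotropy and the grouping by split feature) and the law of total probability over the regions of the $i$-th meta-stump, yielding the discrete convolution that \cref{alg:dp} implements; your derivation of \cref{thm:correctness} from the lemma likewise matches the paper's. Your remark on handling repeated values $\Gamma_{i,j}=\Gamma_{i,k}$ is a nice touch that the paper sidesteps by assumption, but it does not change the argument.
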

\begin{proof}
    We proceed by induction over $i$. In the base case, for $i=0$, we directly have $\texttt{pdf}[0][0]=1.0$ and $\texttt{pdf}[0][t]=0.0$ for $t>0$ by construction.
    Now the induction assumption is that $\texttt{pdf}[i-1][t] = \P\left[\sum_{l=1}^{i-1}\Gamma^{(l)}=t\right]$ for an arbitrary $i \leq d$ and all corresponding $t$.
    To compute the $\texttt{pdf}[i][t]$, we now have:
    \begin{align*}
        \texttt{pdf}[i][t] 
        &= \sum_{j=1}^{M_i}\texttt{pdf}[i-1][t-\Gamma_{i,j}] \cdot \P_{x'_i \sim \dist(\vx)}[v_{i,j-1} < x'_i \leq v_{i,j}]\\
        &= \sum_{j=1}^{M_i}\P\left[\left(\sum_{l=1}^{i-1}\Gamma^{(l)}\right)=t-\Gamma_{i,j}\right] \cdot \P_{x'_i \sim \dist(\vx)}[v_{i,j-1} < x'_i \leq v_{i,j}]\\
        &= \sum_{j=1}^{M_i}\P\left[\left(\sum_{l=1}^{i}\Gamma^{(l)}\right)=t \; \bigg| \; \Gamma^{(i)}=\Gamma_{i,j}\right] \cdot \P\left[\Gamma^{(i)}=\Gamma_{i,j}\right]\\
        &= \P\left[\left(\sum_{l=1}^{i}\Gamma^{(l)}\right)=t\right]\\
    \end{align*}
    where we first use the definition $\texttt{pdf}[i][t]$ according to \cref{alg:dp}, followed by induction assumption, the independency of different meta-stumps and the the law of total probability over $j$.
\end{proof}

Now, we show how \cref{thm:correctness} directly follows from \cref{lemma:correctness}.
Recall that $\gamma_{i,j} = \frac{\Gamma_{i,j}}{\Delta}$, where $\Delta$ is the number of discretization steps.
Similarly to $\Gamma^{(i)}$, let $\gamma^{(i)}$ be the random variable describing the prediction of the $i$-th meta-stump.
Using \cref{lemma:correctness}, we obtain
\begin{align*}
    \bar{\bc{F}}_{M,\vx}(z) 
    &= \sum_{t=0}^{ \lfloor z M\Delta \rfloor  } \texttt{pdf}[d][t] \\
    &= \sum_{t=0}^{ \lfloor z M\Delta \rfloor  } \P\left[\sum_{i=1}^{d}\Gamma^{(i)}=t\right] \\
    &= \P\left[\sum_{i=1}^{d}\Gamma^{(i)} \leq \lfloor z M\Delta \rfloor \right] \\
    &= \P\left[\sum_{i=1}^{d}\frac{\gamma^{(i)}}{M} M \Delta \leq \lfloor z M\Delta \rfloor \right] \\
    &= \P\left[\sum_{i=1}^{d}\frac{\gamma^{(i)}}{M}\leq  z  \right]\\
    &= \P\left[\bar{f}_M(\vx) \leq  z \right].
\end{align*}
Where the second to last step follows from the discretization of the leaf predictions leading to a piece-wise constant CDF.
\end{proof}

\subsection{MLE-Optimal Stumps}
\label{app:mle_proof}

In this section, we extend the theory from \cref{sec:training_indp}, showing that the $v_m$, $\gamma_l$ and $\gamma_r$ computed as outlined there, are, in fact, jointly MLE-optimal.

Recall that an individual stump operating on feature $j_m$ is characterized by three parameters: $v_m$, $\gamma_l$ and $\gamma_r$.
In \cref{sec:training_indp}, we show how to choose MLE-optimal $\gamma_l$ and $\gamma_r$ given $v_m$.
It remains to show that if $v_m$ minimizes the entropy impurity $\imp_{\text{entropy}}$, then $\gamma_l^{\dist,\text{MLE}}$, $\gamma_l^{\dist,\text{MLE}}$, and $v_m$ are jointly MLE-optimal.

For an arbitrary split position $v_m$, we have the probabilities $p_{l,i}(v_m) = \P_{\vx' \sim \dist(\vx_i)}[x'_{j_m} \leq v_m]$ and $p_{r,i}(v_m) = 1 - p_{l,i}(v_m)$ of $\vx'_i$ lying to the left or the right of $v_m$, respectively, under the input randomization scheme $\dist$. For an i.i.d. dataset with $n$ samples $(\vx_i, y_i) \sim (\bc{X},\bc{Y})$, we define the probabilities  $p^y_{j}(v_m) = \frac{1}{n}\sum_{\{i | y_i=y\}} p_{j,i}(v_m)$ of picking the $j \in \{l, r\}$ leaf, conditioned on the target label, and $p_j(v_m) = p_j^0(v_m) +p_j^1(v_m)$ as their sum. Now, we compute the entropy impurity $\imp_\text{entropy}$ \citep{BustosKSSV04} as

\begin{align*}
	\imp_\text{entropy}(v_m) &= -\sum_{j \in \{l,r\}} p_{j}(v_m)  \sum_{y \in \{0,1\}}  \frac{p^y_{j}(v_m)}{p_{j}(v_m)} \log\left(\frac{p^y_{j}(v_m)}{p_{j}(v_m)}\right)\\
	&= -\sum_{j \in \{l,r\}} \sum_{y \in \{0,1\}}  p^y_{j}(v_m) \log\left(\frac{p^y_{j}(v_m)}{p_{j}(v_m)}\right).
\end{align*}

Similarly, let $\gamma_l^{\dist,\text{MLE}}(v_m)$ and $\gamma_l^{\dist,\text{MLE}}(v_m)$ be the MLE-optimal predictions given $v_m$, as computed in \cref{sec:training_indp}.
We formalize our statement as follows in \cref{thm:mle-jointly}:

\begin{theorem}
	\label{thm:mle-jointly}
    Given an i.i.d. dataset with $n$ samples $(\vx_i, y_i) \sim (\bc{X},\bc{Y})$, 
    let $v_m^* :=\argmin_{v_m}\imp_\text{entropy}(v_m)$, $\gamma_l(v_m^*) = \frac{p^1_{l}(v_m^*)}{p^1_{l}(v_m^*) + p^0_{l}(v_m^*)}$ and $\gamma_r(v_m^*) = \frac{p^1_{r}(v_m^*)}{p^1_{r}(v_m^*) + p^0_{r}(v_m^*)}$.
    Then $v_m^*, \gamma_l$ and $\gamma_r$ are jointly MLE-optimal with respect to that dataset.
\end{theorem}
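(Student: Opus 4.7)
The plan is to use a profile-likelihood argument: optimize $\gamma_l, \gamma_r$ out of the log-likelihood as functions of $v_m$, and show that the resulting profile log-likelihood equals $-\imp_\text{entropy}(v_m)$ up to a sign, so that minimizing the entropy impurity in $v_m$ is equivalent to maximizing the joint likelihood.

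Concretely, I would start from the log-likelihood derived in \cref{sec:training_indp}, which for a fixed $v_m$ is
\begin{equation*}
\ell(v_m, \gamma_l, \gamma_r) = p_l^0(v_m)\log(1-\gamma_l) + p_r^0(v_m)\log(1-\gamma_r) + p_l^1(v_m)\log(\gamma_l) + p_r^1(v_m)\log(\gamma_r).
\end{equation*}
The inner maximization over $(\gamma_l,\gamma_r)$ is already solved in the main text, yielding $\gamma_l^{\dist,\text{MLE}}(v_m) = p_l^1(v_m)/p_l(v_m)$ and $\gamma_r^{\dist,\text{MLE}}(v_m) = p_r^1(v_m)/p_r(v_m)$. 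Substituting these back, and using $1-\gamma_j^{\dist,\text{MLE}}(v_m) = p_j^0(v_m)/p_j(v_m)$, the profile log-likelihood collapses to
\begin{equation*}
\ell^*(v_m) := \max_{\gamma_l,\gamma_r}\ell(v_m,\gamma_l,\gamma_r) = \sum_{j\in\{l,r\}}\sum_{y\in\{0,1\}} p_j^y(v_m)\,\log\!\frac{p_j^y(v_m)}{p_j(v_m)}.
\end{equation*}

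The next step is to recognize that, by the equivalent form of the entropy impurity noted just before the theorem statement, $\imp_\text{entropy}(v_m) = -\sum_{j,y} p_j^y(v_m)\log(p_j^y(v_m)/p_j(v_m))$, so $\ell^*(v_m) = -\imp_\text{entropy}(v_m)$. Therefore $v_m^* \in \argmin_{v_m}\imp_\text{entropy}(v_m)$ if and only if $v_m^* \in \argmax_{v_m}\ell^*(v_m)$, i.e., $v_m^*$ maximizes the likelihood after $(\gamma_l,\gamma_r)$ have been profiled out. Combined with the known MLE-optimality of $\gamma_l(v_m^*), \gamma_r(v_m^*)$ given $v_m^*$, a standard profile-likelihood argument then gives joint MLE-optimality of the triple.

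The only genuine step that needs care is that profiling argument: one has to observe that for any $(v_m,\gamma_l,\gamma_r)$, $\ell(v_m,\gamma_l,\gamma_r) \leq \ell(v_m,\gamma_l^{\dist,\text{MLE}}(v_m),\gamma_r^{\dist,\text{MLE}}(v_m)) = \ell^*(v_m) \leq \ell^*(v_m^*) = \ell(v_m^*,\gamma_l(v_m^*),\gamma_r(v_m^*))$, so the proposed triple achieves the global maximum. I do not expect a serious obstacle here — the only subtlety is edge cases where some $p_j^y(v_m)$ vanishes, which one handles with the usual convention $0\log 0 = 0$, matching the convention already implicit in the entropy-impurity formula.
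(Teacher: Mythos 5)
Your proposal is correct and follows essentially the same route as the paper's proof: the paper also profiles out $(\gamma_l,\gamma_r)$ by substituting their known MLE-optimal values as functions of $v_m$ into the log-likelihood, simplifies the resulting objective to $-\imp_\text{entropy}(v_m)$, and concludes that minimizing the impurity over $v_m$ yields the jointly MLE-optimal triple. Your explicit chain of inequalities and the $0\log 0$ remark are minor additions the paper leaves implicit, but the underlying argument is identical.
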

\begin{proof}
    Similarly to \cref{sec:training_indp}, but also optimizing over $v_m$, we obtain:
    \begin{align*}
        v_{m}^{\dist\text{MLE}}, \gamma_l^{\dist\text{MLE}}, \gamma_r^{\dist\text{MLE}} &= \argmax_{v_{m}, \gamma_l, \gamma_r}\; \P[\bc{Y} \mid \dist(\bc{X}), f_m] \\
        &= \argmax_{v_{m}, \gamma_l, \gamma_r}\; \sum_{i=1}^{n} \E_{\vx' \sim \dist(\vx_i)} \left[\log \P[y_i \mid \vx', f_m]\right]\\
        &= \argmax_{v_{m}, \gamma_l, \gamma_r}\; \sum_{i \in \{i \mid y_i = 0\}}^{n} p_{l,i}(v_{m})  \log(1 - \gamma_l) + p_{r,i}(v_{m})  \log(1 - \gamma_r) \\
        & \qquad\quad\;\;\;\, + \sum_{i \in \{i \mid y_i = 1\}}^{n} p_{l,i}(v_{m}) \log(\gamma_l) + p_{r,i}(v_{m}) \log(\gamma_r) \\
        &= \argmax_{v_{m}, \gamma_l, \gamma_r}\; p^0_{l}(v_{m})  \log(1 - \gamma_l) + p^0_{r}(v_{m})  \log(1 - \gamma_r) \\
        & \qquad\quad\;\;\;\, + p^1_{l}(v_{m}) \log(\gamma_l) + p^1_{r}(v_{m}) \log(\gamma_r)
    \end{align*}
    As shown in \cref{sec:training_indp}, for a fixed $v_m$, the MLE-optimal estimates for $\gamma_l$ and $\gamma_r$ are $\gamma_l^{\dist\text{MLE}}(v_m) = \frac{p^1_{l}(v_m)}{p^1_{l}(v_m) + p^0_{l}(v_m)}$ and $\gamma_r^{\dist\text{MLE}}(v_m) = \frac{p^1_{r}(v_m)}{p^1_{r}(v_m) + p^0_{r}(v_m)}$.
    Hence, in the following, it is enough to optimize over $v_m$, substituting in $\gamma_l^{\dist\text{MLE}}(v_m)$ and $\gamma_r^{\dist\text{MLE}}(v_m)$.
    We obtain:
    \begin{align*}
        v_{m}^{\dist\text{MLE}} &= \argmax_{v_{m}}\; p^0_{l}(v_{m})  \log(1 - \gamma_l^{\dist,\text{MLE}}(v_m)) + p^0_{r}(v_{m})  \log(1 - \gamma_r^{\dist,\text{MLE}}(v_m)) \\
        & \qquad\quad\;\;\;\, + p^1_{l}(v_{m}) \log(\gamma_l^{\dist,\text{MLE}}(v_m)) + p^1_{r}(v_{m}) \log(\gamma_r^{\dist,\text{MLE}}(v_m)) \\
        &= \argmax_{v_{m}}\; p^0_{l}(v_{m})  \log \left(1 - \frac{p^1_{l}(v_m)}{p^1_{l}(v_m) + p^0_{l}(v_m)}\right) + p^0_{r}(v_{m})  \log \left(1 - \frac{p^1_{r}(v_m)}{p^1_{r}(v_m) + p^0_{r}(v_m)}\right) \\
        & \qquad\quad\;\;\;\, + p^1_{l}(v_{m}) \log \left(\frac{p^1_{l}(v_m)}{p^1_{l}(v_m) + p^0_{l}(v_m)}\right) + p^1_{r}(v_{m}) \log \left(\frac{p^1_{r}(v_m)}{p^1_{r}(v_m) + p^0_{r}(v_m)}\right) \\
        &= \argmax_{v_{m}}\; p^0_{l}(v_{m})  \log \left(1 - \frac{p^1_{l}(v_m)}{p_{l}(v_m)}\right) + p^0_{r}(v_{m})  \log \left(1 - \frac{p^1_{r}(v_m)}{p_{r}(v_m)}\right) \\
        & \qquad\quad\;\;\;\, + p^1_{l}(v_{m}) \log \left(\frac{p^1_{l}(v_m)}{p_{l}(v_m)}\right) + p^1_{r}(v_{m})\log \left(\frac{p^1_{r}(v_m)}{p_{r}(v_m)}\right) \\
        &= \argmax_{v_{m}}\; - \imp_\text{entropy}(v_m) \\
        &= \argmin_{v_{m}}\; \imp_\text{entropy}(v_m) \\
        &= v_m^*
    \end{align*}
    Thus, we have that the triplet $v_m^* :=\argmin_{v_m}\imp_\text{entropy}(v_m)$, $\gamma_l(v_m^*) = \frac{p^1_{l}(v_m^*)}{p^1_{l}(v_m^*) + p^0_{l}(v_m^*)}$ and $\gamma_r(v_m^*) = \frac{p^1_{r}(v_m^*)}{p^1_{r}(v_m^*) + p^0_{r}(v_m^*)}$ is jointly MLE-optimal.
\end{proof}

\subsection{Gradient Boosting for Certifiable Robustness}
\label{sec:tree_boost_extra}

Below, we describe \robtreeboost, already outlined in \cref{sec:training_boosted}, in more detail.
Formally, we aim to minimize the cross-entropy loss between the certifiable prediction at the $q$\th percentile $\bc{F}_{m-1,\vx_i}^{-1}(q)$ and the one-hot target probability given by the label $y$, where we choose $q = \rho^{-1}(r)$ for some target radius r.
Concretely, to add the $m$\th stump to our ensemble, we begin by computing the certifiable prediction $y_i'$:
\begin{equation} \label{eqn:cert_pred}
	y'_i =
	\begin{cases}
		\bar{\bc{F}}_{m-1,\vx_i}^{-1}(q) \qquad &\text{ if } y = 0 \\
		\bar{\bc{F}}_{m-1,\vx_i}^{-1}(1-q) \qquad &\text{ if } y = 1.
	\end{cases}
\end{equation}
Now, we define the pseudo label $\tilde{y}$ as the residual between the target label $y$ and the certifiable prediction $y'$, scaled to $[0,1]$ as $\tilde{y}_i = \frac{1}{2} + \frac{y_i - y'_i}{2}$. 
Subsequently, we select feature $j_m$ and split position $v_m$ that minimize the mean squared error impurity (MSE) under the randomization scheme for these pseudo-labels.
As before, we define the mean squared error impurity $\imp_\text{MSE}$ in terms of the branching probabilities $p_{l,i} = \P_{\vx' \sim \dist(\vx_i)}[x'_{j_m} \leq v_m]$ and $p_{r,i} = 1 - p_{l,i}$:
\begin{equation}
\mu_j = \frac{\sum_{i=1}^n p_{j,i} \, \tilde{y}_i}{\sum_{i=1}^n p_{j,i}} \qquad\qquad \imp_\text{MSE} = \frac{\sum_{i=1}^n \sum_{j \in \{l,r\}} p_{j,i} (\tilde{y}_i - \mu_j)^2}{n}.
\end{equation}
The optimal leaf predictions can now be computed approximately \citep{Friedman2001Greedy} to 
\begin{equation}
\gamma_{l} = \frac{\sum_{i=1}^n p_{l,i} \, \tilde{y}'_i}{\sum_i p_{l,i} \, |2\tilde{y}'_i-1|(1-|2\tilde{y}'_i-1|)},
\end{equation}
and $\gamma_r$ analogously. 
We initialize this boosting process with an ensemble of individually MLE-optimal stumps and repeat this boosting step until we have added as many stumps as desired.

\subsection{Adaptive Boosting for Certifiable Robustness}
\label{sec:ada_boost_extra}
Below, we describe \robada, already outlined in \cref{sec:training_boosted}, in more detail.
Our goal is to obtain a weighted ensemble $\bar{F}_K$ 
\begin{equation}
\bar{F}_K(\vx) = \frac{1}{\sum_{k=1}^K \alpha^k} \sum_{k=1}^K \alpha^k \1_{\P_{\vx' \sim \dist(\vx)}[\bar{f}_M^k(\vx')>0.5]>0.5},
\end{equation}
consisting of $K$ stump ensembles $\bar{f}_M^k$, that is certifiably robust at a pre-determined radius $r$.
Here, $\bar{F}_K(\vx) \colon \R^d \to [0, 1]$ is a soft-classifier, that predicts class 1 for outputs $> 0.5$ and class 0 else.

To train the $K$ constituting ensembles such that the overall ensemble $\bar{F}_K$ is certifiably robust at radius $r$, we proceed as follows:
First, we initialize the weights of all samples $\vx_i$ to $w_i^1=\frac{1}{n}$.
Then, for $k = 1$ to $K$, we iteratively fit a new stump ensemble $\bar{f}_M^k$ as described in \cref{sec:training_indp} using the sample weights $w_i^k$. 
Then, similar to \citet{FreundS97} although targeting certifiability instead of accuracy, we update the sample weights as follows:
First, we compute whether the newly trained $k$-th ensemble $\bar{f}_M^k$ is certifiably correct ($c_i$) for each sample $\vx_i$ in the training set:
\begin{equation} \label{eqn:cert_pred_ada}
	{c}_i =
	\begin{cases}
		\1_{\P_{\vx' \sim \dist(\vx_i)}[\bar{f}_M^k(\vx')\leq 0.5] > \rho^{-1}_\vx(r)} \qquad &\text{ if } y = 0 \\
		\1_{\P_{\vx' \sim \dist(\vx_i)}[\bar{f}_M^k(\vx')> 0.5] > \rho^{-1}_\vx(r)} \qquad &\text{ if } y = 1.
	\end{cases}
\end{equation}
Then, we determine the certifiable error $err^k$, and the model weight $\alpha^k$ of $f_m^k$ as:
\begin{equation*}
err^k = \frac{\sum_{i=1}^n w_i (1-c_i)}{\sum_{i=1}^n w_i}  \qquad \qquad \alpha^k = \log\frac{1-err^k}{err^k}
\end{equation*}
and update the sample weights for the next iteration to:
\begin{equation*}
    w_i^{k+1} = \frac{w_i^k \exp(\alpha^k (1-c_i))}{\sum_{i=1}^n w_i^k \exp(\alpha^k (1-c_i))}
\end{equation*}
before training the next ensemble.
This way, we are minimizing the overall loss for certified predictions at radius $r$.

To certify $\bar{F}_K$ at a specific radius $r$, we now have to show that we can certify individual ensembles corresponding to at least half the total weights, or more formally (here, without loss of generality assuming a label of $y=1$):
\begin{equation}
	\sum_{k=1}^K \alpha^k \1_{\P_{\vx' \sim \dist(\vx)}[\bar{f}_M^k(\vx') > 0.5] > \rho^{-1}_\vx(r)} > \frac{\sum_{k=1}^K |\alpha^k|}{2}.
\end{equation}
To compute the certifiable radius for $\bar{F}_k$, we compute the certifiable radii $R^k$ of the individual ensembles, sort them in decreasing order  such that $R^k \geq R^{k+1}$ and obtain the largest radius $R^k$ such that $\sum_{l=1}^{k} \alpha^l > \frac{\sum_{l=1}^{K} |\alpha^l|}{2}$.
Intuitively, we need to find a subset of models such that their weighted predictions for class 1 reach at least half the possible weight, accounting for negative weights.

\section{Experimental Details}
\label{app:experimental-details}

Here, we describe our experimental setup in greater detail.
Note that we also publish all code, models, and instructions required to reproduce our results at \url{https://github.com/eth-sri/drs}.

\subsection{Datasets}
\label{app:deails-datases}

In this section, we describe the datasets we use in detail.

\paragraph{Datasets with Numerical Features}
We conduct experiments focusing on numerical features only on all the datasets considered by prior work \cite{WangZCBH20,Andriushchenko019}.
More concretely, we use the tabular datasets \breast \cite{Dua:2019} and \diabetes \cite{Smith1988UsingTA}, where we follow prior work \citep{WangZCBH20} in using the first 80\% of the samples as train set and the remaining 20\% as test set, normalizing the data to $[0,1]$, and the vision datasets \mnistof \cite{lecun2010mnist}, \mnistts \cite{lecun2010mnist}, and \fmnists \cite{DBLP:journals/corr/abs-1708-07747}, where we use all samples of the right classes from the train and test sets.

Additionally, we consider the \spambase \cite{Dua:2019} dataset, where the task is to predict whether an email is spam (binary classification) given $57$ numerical features.
We normalize all features using the mean and standard deviation of the training data before applying any perturbations.

\paragraph{Datasets with Numerical and Categorical Features}
We conduct our experiments on the joint certification of numerical and categorical features using the popular \adult \cite{Dua:2019}, \credit \cite{Dua:2019}, \mammal \cite{Dua:2019}, and \bank \cite{Dua:2019} datasets.
By default, we use the first 70\% of the samples as the train set, and the remaining 30\% as the test set.
For error bound experiments (in \cref{app:eval-joint-robustness}), we use 5-fold cross-validation over the whole datasets, and report the mean and standard deviations over the 5 folds.
Here, we normalize the numerical features using the mean and standard deviation of the training data, before applying any perturbations.

The \adult \cite{Dua:2019} dataset is a societal dataset based on the 1994 US Census database.
It contains eight categorical and six numerical variables for each individual.
The cardinalities of the categorical variables range from 2 to 42 (concretely, they are $9, 16, 7, 15, 6, 5, 2,$ and $42$).
The task is to predict whether an individual's salary is below or above $50$k USD.

The \credit \cite{Dua:2019} dataset is a financial dataset containing $13$ categorical and $7$ numerical features.
The cardinalities of the categorical features range from $2$ to $10$ (concretely, they are $4, 5, 10, 5, 5, 4, 3, 4, 3, 3, 4, 2,$ and $2$).
The task is to predict whether a customer has a low or high risk to default on a loan.

The \mammal \cite{Dua:2019} dataset is medial dataset where the goal is to predicting whether breast biopsies are needed.
It consists of $3$ numerical and $2$ categorical where the categorical features have cardinalities $4$ and $5$.

The \bank \cite{Dua:2019} dataset is a financial dataset, consisting of $9$ categorical and $7$ numerical features.
The task is to predict whether a client will subscribe to a bank term deposit or not, given the features.

Some datasets exhibit a significant class imbalance, with the minority class constituting $24.6\%$ of the \adult and $29.6\%$ of the \credit train set. %
Therefore, we report balanced certified accuracy, computed as the arithmetic mean of the per class certified accuracies.

\paragraph{Dataset with Categorical Features}
The \mushroom \cite{Dua:2019} dataset contains $22$ categorical features encoding physical features of mushrooms with the goal to predicting whether a mushroom is edible or poisonous.

\begin{wraptable}[12]{r}{0.5 \textwidth}
	\centering
	\small
	\vspace{-5mm}
	\caption{Noise magnitudes used for \cref{tab:baseline-both}.}
	\vspace{-2mm}
	\label{tab:details-noise-levels}
	\resizebox{1.00\linewidth}{!}{
    \begin{tabular}{cccc}
        \toprule
        Method & Dataset & $\lambda \; (\text{for } \ell_1)$ & $\sigma \; (\text{for } \ell_2)$ \\
        \midrule
        \multirow{5}{*}{Independent} & \breast & 2.00 & 4.00 \\
        & \diabetes & 0.35 & 0.25 \\
        & \mnistof & 4.00 & 0.25 \\
        & \mnistts & 4.00 & 0.25 \\
        & \fmnists & 4.00 & 0.25 \\
        \cmidrule(lr){1-4}
        \multirow{5}{*}{Boosting} & \breast & 2.00& 0.25  \\
        & \diabetes& 0.28& 0.15\\
        & \mnistof & 4.00 & 0.25 \\
        & \mnistts & 4.00 & 0.25 \\
        & \fmnists & 4.00 & 0.25 \\
        \bottomrule
	\end{tabular}
}
\end{wraptable}

\subsection{Training Details}\label{app:details-training}
The key (and for independent training, the only) hyper-parameter of our approach is the noise magnitude, $\lambda$ for $\ell_1$-certification and $\sigma$ for $\ell_2$-certification.
In \cref{tab:details-noise-levels}, we report the noise levels chosen for the different datasets.
We discuss the effect of different noise magnitudes in \cref{app:eval-noise-level} and observe that results are generally quite stable across a wide range of noise magnitudes.
Unless otherwise stated, we determine the split position $v_m$ via linear search using increments of size $0.01$ and discretize leaf predictions $\gamma$ using $100$ steps (i.e., $\Delta=100$).

\begin{wraptable}[5]{r}{0.5 \textwidth}
	\centering
	\small
	\vspace{-4.3mm}
	\caption{\robtreeboost parameters.}
	\vspace{-2mm}
	\label{tab:details-treeboost}
	\resizebox{1.00\linewidth}{!}{
		\begin{tabular}{cccc}
			\toprule
			Parameter & Perturbation & \breast & \diabetes \\
			\midrule
			\multirow{2}{*}{Percentile $q$}& $\ell_1$ & 0.60 & 0.70 \\
			& $\ell_2$ & 0.98 & 0.95 \\
			\cmidrule(lr){1-4}
			\multirow{2}{*}{Additional stumps $n_b$}& $\ell_1$ & 30 & 15 \\
			& $\ell_2$ & 40 & 100 \\
			\bottomrule
		\end{tabular}
	}
\end{wraptable}
\paragraph{\robtreeboost}
We initialize \robtreeboost with an ensemble of independently trained stumps and add a further $n_b$ stumps as described in \cref{sec:training_boosted} using the $q$\th percentile to compute the certifiable predictions. We chose $q$ and $n_b$ as shown in \cref{tab:details-treeboost}.

\paragraph{\robada}
To evaluate \robada, we consider ensembles of $K=20$ individual stump ensembles in each of our experiments.
We choose the same noise magnitudes as for independently trained stumps, described in \cref{tab:details-noise-levels}.

\paragraph{Joint Certification}
For joint certification, we use ensembles of independently trained decision stumps, one for each feature.
The stump corresponding to categorical features maps a categorical value to either $0.375$ or $0.625$ (which are the same distance from the decision threshold $0.5$), depending on whether the majority of the samples with this categorical value have class $0$ or $1$, respectively. Note that permitting arbitrary leaf predictions slightly improves clean accuracy, but significantly worsens worst-case behaviour. Choosing leaf predictions further from the decision threshold gives more emphasis to categorical variables compared to numerical ones.
The stumps for the numerical features are learned individually, as described in \cref{sec:training_indp}.
For $\ell_1$, we used the noise magnitude $\lambda = 2.0$ and for $\ell_2$-certification $\sigma=0.25$.

\subsection{Computational Resources and Experimental Timings}
\label{app:details-computation}

In this section, we describe the computational resources required for our experiments.
We run all our experiments using 24 cores of an Intel Xeon Gold 6242 CPUs and a single NVIDIA RTX 2080Ti and report timings for the full experiment in \cref{app:details-computation}. We show timings in \cref{tab:timings}.

\begin{wraptable}[10]{r}{0.5 \textwidth}
	\centering
	\small
	\vspace{-4.5mm}
	\caption{Experimental timings for whole datasets.}
	\vspace{-2mm}
	\label{tab:timings}
	\resizebox{1.00\linewidth}{!}{
		\begin{tabular}{cccccc}
			\toprule
			\multirow{2.5}{*}{Norm} & \multirow{2.5}{*}{Dataset} & \multicolumn{2}{c}{Independent} & \multicolumn{2}{c}{Boosting} \\
			\cmidrule(lr){3-4}
			\cmidrule(lr){5-6}
			&& Training & Certification & Training & Certification \\
			\midrule
			\multirow{5}{*}{$\ell_1$} & \breast & 2s & $<0.1$s & 14s & $<0.1$s \\
			& \diabetes & 2s & $<0.1$s & 2s & $<0.1$s \\
			& \mnistof & 32s & 5s & 13min & 27s \\
			& \mnistts & 29s & 4s & 11min & 16s \\
			& \fmnists & 31s & 6s & 13min & 40s \\
			\cmidrule(lr){1-6}
			\multirow{5}{*}{$\ell_2$} & \breast & 2s & $<0.1$s & 9s & $<0.1$s \\
			& \diabetes & 2s & $<0.1$s & 47s & $<0.1$s\\
			& \mnistof & 14s & 4s & 10min & 29s \\
			& \mnistts & 14s & 3s & 9min & 26s \\
			& \fmnists & 15s & 4s & 9min & 27s \\
			\bottomrule
		\end{tabular}
	}
\end{wraptable}

We observe that all certification is extremely quick with \fmnists taking the longest at $6$s for the whole test set and an ensemble of independently trained stumps in the $\ell_1$-setting, translating to $0.003$s per sample.
When evaluating models in single instead of double precision, we can, e.g., further reduce certification times from $3$s to $1.2$s for \mnistts.
The independently MLE-optimal training is similarly quick, allowing us to run all core experiments in less than $5$ minutes.
Only \robada takes more than one minute for an individual experiment, as it involves training and certifying $20$ stump ensembles.
For datasets combining categorical and numerical features, the training and certification for the categorical variables is almost instantaneous and dominated by that for the numerical features.
The latter requires $19.9$s and $47.0$s for the $\ell_1$ and $\ell_2$-experiment, respectively, on \adult and $1.5$s respectively $2.0$s on \credit.
We remark that computational efficiency was not a main focus of this work and we did not optimize runtimes.

\section{Additional Experiments}
\label{app:additional_experiments}

In this section, we extend our experimental evaluation from \cref{sec:eval}.
Concretely, in \cref{app:eval-joint-robustness}, we provide additional experiments on the joint certification of categorical and numerical variables.
In \cref{app:eval-ds-rs}, we compare \tool to \RS in more detail while in \cref{app:eval-mle-ablation}, we continue our investigation of our MLE optimality criterion.
Moreover, in \cref{app:eval-noise-level}, we provide additional experiments on the effect of the noise magnitudes $\lambda$ and $\sigma$ for $\ell_1$- and $\ell_2$-certification, respectively.
In \cref{app:discretization-size-ablation}, we analyze the impact of the discretization granularity and in \cref{app:binning-size-ablation}, we evaluate the effect of an approximate split position optimization.
Finally, in \cref{app:error-bounds}, we include error bound experiments for certification of numerical features via 5-fold cross-validation.%

\subsection{Additional Experiments on Joint Robustness Certificates}
\label{app:eval-joint-robustness}

\begin{table}[tp]
	\centering
	\small
	\centering
	\caption{Balanced certified accuracy (BCA) $[\%]$ under joint $\ell_0$- and $\ell_2$-perturbations of categorical and numerical features, respectively, depending on whether model uses categorical and/or numerical features. The balanced natural accuracy is the BCA at radius $r = 0.0$. Larger is better.}
	\label{tab:joint-cv-balanced-l2}
	\vspace{2mm}
	\resizebox{0.98\columnwidth}{!}{
    \begin{tabular}{ccccccccccc}
        \toprule
        \multirow{2.6}{*}{Dataset} & \multirowcell{2.6}{Categorical \\ Features} & \multirowcell{2.6}{$\ell_0$ Radius $r_0$} &\multirowcell{2.6}{ BCA without \\ Numerical Features} & \multicolumn{7}{c}{BCA with Numerical Features at $\ell_2$ Radius $r_2$}\\
        \cmidrule(lr){5-11}
        & & & & 0.00 & 0.25 & 0.50 & 0.75 & 1.00 & 1.25 & 1.50 \\
        \midrule
        \multirow{5.5}{*}{\adult} & no & - & - &74.3\textsubscript{$\pm$ 0.4} & 65.5\textsubscript{$\pm$ 0.3} & 42.3\textsubscript{$\pm$ 0.5} & 26.9\textsubscript{$\pm$ 0.4} & 13.7\textsubscript{$\pm$ 0.3} & 7.7\textsubscript{$\pm$ 0.5} & 4.4\textsubscript{$\pm$ 0.3} \\
        \cmidrule(lr){2-2}
        & \multirow{4.0}{*}{yes} & 0 & 76.2\textsubscript{$\pm$ 0.6} & 77.9\textsubscript{$\pm$ 0.4} & 74.2\textsubscript{$\pm$ 0.7} & 68.0\textsubscript{$\pm$ 0.6} & 62.9\textsubscript{$\pm$ 0.6} & 48.4\textsubscript{$\pm$ 0.4} & 39.6\textsubscript{$\pm$ 0.7} & 34.2\textsubscript{$\pm$ 0.4} \\
        & & 1 & 57.0\textsubscript{$\pm$ 0.8} & 66.2\textsubscript{$\pm$ 0.8} & 61.5\textsubscript{$\pm$ 0.9} & 52.8\textsubscript{$\pm$ 0.7} & 45.9\textsubscript{$\pm$ 0.7} & 33.1\textsubscript{$\pm$ 0.4} & 25.0\textsubscript{$\pm$ 0.5} & 20.5\textsubscript{$\pm$ 0.4} \\
        & & 2 & 32.9\textsubscript{$\pm$ 0.6} &  50.7\textsubscript{$\pm$ 0.6} & 45.4\textsubscript{$\pm$ 0.8} & 36.2\textsubscript{$\pm$ 0.5} & 27.8\textsubscript{$\pm$ 0.3} & 19.7\textsubscript{$\pm$ 0.3} & 15.2\textsubscript{$\pm$ 0.4} & 11.7\textsubscript{$\pm$ 0.3} \\
        & & 3 & 8.9\textsubscript{$\pm$ 0.2} & 35.9\textsubscript{$\pm$ 0.5} & 30.8\textsubscript{$\pm$ 0.6} & 23.4\textsubscript{$\pm$ 0.4} & 14.6\textsubscript{$\pm$ 0.4} & 9.7\textsubscript{$\pm$ 0.4} & 7.2\textsubscript{$\pm$ 0.3} & 5.1\textsubscript{$\pm$ 0.2} \\
        
        \midrule
        \multirow{5.5}{*}{\credit} & no & - & - &  59.4\textsubscript{$\pm$ 3.7} & 51.3\textsubscript{$\pm$ 3.0} & 39.6\textsubscript{$\pm$ 2.9} & 27.0\textsubscript{$\pm$ 5.3} & 19.5\textsubscript{$\pm$ 7.9} & 13.5\textsubscript{$\pm$ 6.3} & 6.7\textsubscript{$\pm$ 4.1}  \\
        \cmidrule(lr){2-2}
        & \multirow{4.0}{*}{yes} & 0 & 64.7\textsubscript{$\pm$ 4.2}  & 65.3\textsubscript{$\pm$ 4.3} & 64.0\textsubscript{$\pm$ 4.9} & 62.1\textsubscript{$\pm$ 4.6} & 58.4\textsubscript{$\pm$ 4.0} & 53.3\textsubscript{$\pm$ 3.8} & 51.5\textsubscript{$\pm$ 4.6} & 49.2\textsubscript{$\pm$ 4.7} \\
        & & 1 & 44.7\textsubscript{$\pm$ 3.5} & 48.2\textsubscript{$\pm$ 3.1} & 46.1\textsubscript{$\pm$ 3.1} & 42.1\textsubscript{$\pm$ 3.5} & 38.6\textsubscript{$\pm$ 3.5} & 35.4\textsubscript{$\pm$ 4.5} & 33.2\textsubscript{$\pm$ 5.2} & 31.1\textsubscript{$\pm$ 5.7} \\
        & & 2 & 26.7\textsubscript{$\pm$ 5.7} & 29.8\textsubscript{$\pm$ 4.4} & 27.9\textsubscript{$\pm$ 4.4} & 24.5\textsubscript{$\pm$ 5.7} & 21.3\textsubscript{$\pm$ 5.8} & 18.7\textsubscript{$\pm$ 5.8} & 16.7\textsubscript{$\pm$ 5.6} & 15.5\textsubscript{$\pm$ 6.0} \\
        & & 3 & 11.1\textsubscript{$\pm$ 4.3} & 14.2\textsubscript{$\pm$ 4.5} & 13.3\textsubscript{$\pm$ 3.8} & 11.4\textsubscript{$\pm$ 4.5} & 9.8\textsubscript{$\pm$ 3.6} & 8.5\textsubscript{$\pm$ 3.9} & 6.7\textsubscript{$\pm$ 3.7} & 6.5\textsubscript{$\pm$ 3.8} \\
        
        \midrule
        \multirow{4.5}{*}{\mammal} & no & - & - & 61.7\textsubscript{$\pm$ 2.8} & 61.6\textsubscript{$\pm$ 2.9} & 51.5\textsubscript{$\pm$ 3.0} & 13.8\textsubscript{$\pm$ 4.2} & 9.4\textsubscript{$\pm$ 5.0} & 8.6\textsubscript{$\pm$ 5.6} & 6.4\textsubscript{$\pm$ 6.3}  \\
        \cmidrule(lr){2-2}
        & \multirow{3.0}{*}{yes} & 0 & 79.0\textsubscript{$\pm$ 1.2} & 78.1\textsubscript{$\pm$ 2.6} & 78.1\textsubscript{$\pm$ 2.6} & 76.4\textsubscript{$\pm$ 2.0} & 51.1\textsubscript{$\pm$ 5.8} & 48.4\textsubscript{$\pm$ 9.5} & 47.6\textsubscript{$\pm$ 9.1} & 42.5\textsubscript{$\pm$ 12.0} \\
        & & 1 & 30.8\textsubscript{$\pm$ 3.3} & 46.7\textsubscript{$\pm$ 6.2} & 46.6\textsubscript{$\pm$ 6.2} & 36.5\textsubscript{$\pm$ 6.9} & 11.3\textsubscript{$\pm$ 3.9} & 7.7\textsubscript{$\pm$ 4.0} & 7.1\textsubscript{$\pm$ 4.5} & 4.9\textsubscript{$\pm$ 5.1} \\
        & & 2 & 0.0\textsubscript{$\pm$ 0.0} & 12.8\textsubscript{$\pm$ 2.2} & 12.7\textsubscript{$\pm$ 2.4} & 2.7\textsubscript{$\pm$ 2.0} & 2.6\textsubscript{$\pm$ 2.0} & 2.4\textsubscript{$\pm$ 2.0} & 2.1\textsubscript{$\pm$ 2.3} & 0.0\textsubscript{$\pm$ 0.0} \\

        \midrule
        \multirow{5.5}{*}{\bank} & no & - & - & 73.1\textsubscript{$\pm$ 2.2} & 63.0\textsubscript{$\pm$ 1.4} & 47.7\textsubscript{$\pm$ 2.1} & 31.9\textsubscript{$\pm$ 1.4} & 17.9\textsubscript{$\pm$ 3.8} & 12.5\textsubscript{$\pm$ 5.0} & 7.4\textsubscript{$\pm$ 4.7}   \\
        \cmidrule(lr){2-2}
        & \multirow{4.0}{*}{yes} & 0 & 62.8\textsubscript{$\pm$ 1.9} & 69.9\textsubscript{$\pm$ 1.8} & 65.4\textsubscript{$\pm$ 1.4} & 57.0\textsubscript{$\pm$ 0.6} & 48.8\textsubscript{$\pm$ 0.7} & 39.6\textsubscript{$\pm$ 1.6} & 30.4\textsubscript{$\pm$ 1.6} & 24.0\textsubscript{$\pm$ 1.7} \\
        & & 1 & 42.3\textsubscript{$\pm$ 1.5} & 53.6\textsubscript{$\pm$ 1.9} & 47.7\textsubscript{$\pm$ 2.1} & 40.1\textsubscript{$\pm$ 2.2} & 30.5\textsubscript{$\pm$ 2.0} & 21.7\textsubscript{$\pm$ 1.7} & 14.8\textsubscript{$\pm$ 1.9} & 9.8\textsubscript{$\pm$ 2.4} \\
        & & 2 & 21.2\textsubscript{$\pm$ 2.3} &  37.4\textsubscript{$\pm$ 2.5} & 31.5\textsubscript{$\pm$ 2.1} & 23.2\textsubscript{$\pm$ 2.0} & 14.9\textsubscript{$\pm$ 2.0} & 9.0\textsubscript{$\pm$ 2.3} & 6.1\textsubscript{$\pm$ 2.2} & 4.3\textsubscript{$\pm$ 2.5} \\
        & & 3 &  7.2\textsubscript{$\pm$ 2.3} &  21.8\textsubscript{$\pm$ 2.9} & 17.5\textsubscript{$\pm$ 2.7} & 11.0\textsubscript{$\pm$ 2.3} & 5.6\textsubscript{$\pm$ 1.3} & 3.0\textsubscript{$\pm$ 1.4} & 2.2\textsubscript{$\pm$ 1.4} & 1.0\textsubscript{$\pm$ 0.4} \\

		\bottomrule
	\end{tabular}
}
\vspace{-4mm}
\end{table}

\begin{table}[tp]
	\centering
	\small
	\centering
	\caption{Certified accuracy (CA) $[\%]$ under joint $\ell_0$- and $\ell_2$-perturbations of categorical and numerical features, respectively, depending on whether model uses categorical and/or numerical features. The natural accuracy is the CA at radius $r = 0.0$. Larger is better.}
	\label{tab:joint-cv-imbalanced-l2}
	\vspace{2mm}
	\resizebox{0.98\columnwidth}{!}{
    \begin{tabular}{ccccccccccc}
        \toprule
        \multirow{2.6}{*}{Dataset} & \multirowcell{2.6}{Categorical \\ Features} & \multirowcell{2.6}{$\ell_0$ Radius $r_0$} &\multirowcell{2.6}{ CA without \\ Numerical Features} & \multicolumn{7}{c}{CA with Numerical Features at $\ell_2$ Radius $r_2$}\\
        \cmidrule(lr){5-11}
        & & & & 0.00 & 0.25 & 0.50 & 0.75 & 1.00 & 1.25 & 1.50 \\
        \midrule
        \multirow{5.5}{*}{\adult} & no & - & - & 74.2\textsubscript{$\pm$ 0.5} & 65.7\textsubscript{$\pm$ 0.6} & 37.5\textsubscript{$\pm$ 1.0} & 23.2\textsubscript{$\pm$ 0.8} & 9.6\textsubscript{$\pm$ 0.3} & 4.0\textsubscript{$\pm$ 0.5} & 2.1\textsubscript{$\pm$ 0.2}  \\
        \cmidrule(lr){2-2}
        & \multirow{4.0}{*}{yes} & 0 & 69.7\textsubscript{$\pm$ 0.6} & 70.0\textsubscript{$\pm$ 0.5} & 65.8\textsubscript{$\pm$ 0.7} & 58.6\textsubscript{$\pm$ 0.7} & 53.7\textsubscript{$\pm$ 0.7} & 34.9\textsubscript{$\pm$ 0.7} & 24.4\textsubscript{$\pm$ 0.8} & 19.1\textsubscript{$\pm$ 0.4} \\
        & & 1 & 52.0\textsubscript{$\pm$ 0.8} & 58.3\textsubscript{$\pm$ 0.9} & 53.6\textsubscript{$\pm$ 1.0} & 43.8\textsubscript{$\pm$ 0.8} & 36.9\textsubscript{$\pm$ 0.8} & 21.3\textsubscript{$\pm$ 0.5} & 13.0\textsubscript{$\pm$ 0.6} & 10.2\textsubscript{$\pm$ 0.3} \\
        & & 2 & 27.5\textsubscript{$\pm$ 0.6} & 43.1\textsubscript{$\pm$ 0.7} & 38.3\textsubscript{$\pm$ 0.8} & 28.2\textsubscript{$\pm$ 0.6} & 19.4\textsubscript{$\pm$ 0.4} & 11.0\textsubscript{$\pm$ 0.4} & 7.5\textsubscript{$\pm$ 0.3} & 5.7\textsubscript{$\pm$ 0.2} \\
        & & 3 & 6.6\textsubscript{$\pm$ 0.2} & 28.7\textsubscript{$\pm$ 0.5} & 24.1\textsubscript{$\pm$ 0.7} & 16.5\textsubscript{$\pm$ 0.3} & 8.4\textsubscript{$\pm$ 0.3} & 4.8\textsubscript{$\pm$ 0.3} & 3.5\textsubscript{$\pm$ 0.2} & 2.4\textsubscript{$\pm$ 0.1} \\
        
        \midrule
        \multirow{5.5}{*}{\credit} & no & - & - & 63.7\textsubscript{$\pm$ 3.5} & 55.3\textsubscript{$\pm$ 3.9} & 43.5\textsubscript{$\pm$ 4.6} & 30.7\textsubscript{$\pm$ 6.0} & 21.7\textsubscript{$\pm$ 9.1} & 14.0\textsubscript{$\pm$ 7.2} & 7.7\textsubscript{$\pm$ 5.2}  \\
        \cmidrule(lr){2-2}
        & \multirow{4.0}{*}{yes} & 0 & 58.3\textsubscript{$\pm$ 9.5} & 59.3\textsubscript{$\pm$ 9.5} & 57.9\textsubscript{$\pm$ 10.1} & 55.8\textsubscript{$\pm$ 10.2} & 52.4\textsubscript{$\pm$ 9.6} & 48.2\textsubscript{$\pm$ 8.7} & 45.8\textsubscript{$\pm$ 9.2} & 43.1\textsubscript{$\pm$ 9.4} \\
        & & 1 & 38.2\textsubscript{$\pm$ 7.4} & 42.4\textsubscript{$\pm$ 8.5} & 40.4\textsubscript{$\pm$ 8.5} & 36.5\textsubscript{$\pm$ 9.0} & 33.5\textsubscript{$\pm$ 8.2} & 30.8\textsubscript{$\pm$ 8.0} & 28.4\textsubscript{$\pm$ 7.8} & 26.1\textsubscript{$\pm$ 7.2} \\
        & & 2 & 21.7\textsubscript{$\pm$ 5.0} & 25.0\textsubscript{$\pm$ 6.0} & 23.0\textsubscript{$\pm$ 5.9} & 20.4\textsubscript{$\pm$ 5.5} & 17.7\textsubscript{$\pm$ 5.4} & 15.3\textsubscript{$\pm$ 4.7} & 13.6\textsubscript{$\pm$ 5.0} & 12.3\textsubscript{$\pm$ 4.7} \\
        & & 3 &  8.4\textsubscript{$\pm$ 2.6} & 11.0\textsubscript{$\pm$ 2.5} & 10.3\textsubscript{$\pm$ 2.3} & 8.8\textsubscript{$\pm$ 2.8} & 7.5\textsubscript{$\pm$ 2.4} & 6.4\textsubscript{$\pm$ 2.2} & 5.1\textsubscript{$\pm$ 2.5} & 4.9\textsubscript{$\pm$ 2.5} \\
        
        \midrule
        \multirow{4.5}{*}{\mammal} & no & - & - &  61.7\textsubscript{$\pm$ 3.7} & 61.6\textsubscript{$\pm$ 3.9} & 51.6\textsubscript{$\pm$ 3.4} & 13.8\textsubscript{$\pm$ 4.4} & 9.6\textsubscript{$\pm$ 5.3} & 8.8\textsubscript{$\pm$ 6.0} & 6.6\textsubscript{$\pm$ 6.7}  \\
        \cmidrule(lr){2-2}
        & \multirow{3.0}{*}{yes} & 0 &  79.0\textsubscript{$\pm$ 1.3} & 77.9\textsubscript{$\pm$ 2.8} & 77.9\textsubscript{$\pm$ 2.8} & 76.3\textsubscript{$\pm$ 2.2} & 51.2\textsubscript{$\pm$ 5.0} & 48.7\textsubscript{$\pm$ 8.1} & 47.8\textsubscript{$\pm$ 7.8} & 42.6\textsubscript{$\pm$ 11.4} \\
        & & 1 & 31.8\textsubscript{$\pm$ 4.5} & 46.6\textsubscript{$\pm$ 6.2} & 46.5\textsubscript{$\pm$ 6.2} & 36.5\textsubscript{$\pm$ 6.9} & 11.3\textsubscript{$\pm$ 4.0} & 7.8\textsubscript{$\pm$ 4.3} & 7.2\textsubscript{$\pm$ 4.8} & 5.1\textsubscript{$\pm$ 5.5} \\
        & & 2 & 0.0\textsubscript{$\pm$ 0.0} & 12.8\textsubscript{$\pm$ 2.0} & 12.6\textsubscript{$\pm$ 2.1} & 2.6\textsubscript{$\pm$ 1.9} & 2.5\textsubscript{$\pm$ 1.9} & 2.4\textsubscript{$\pm$ 1.9} & 2.0\textsubscript{$\pm$ 2.2} & 0.0\textsubscript{$\pm$ 0.0} \\

        \midrule
        \multirow{5.5}{*}{\bank} & no & - & - & 68.6\textsubscript{$\pm$ 2.3} & 56.9\textsubscript{$\pm$ 2.1} & 41.1\textsubscript{$\pm$ 1.3} & 21.2\textsubscript{$\pm$ 2.0} & 5.6\textsubscript{$\pm$ 0.9} & 3.1\textsubscript{$\pm$ 1.3} & 1.7\textsubscript{$\pm$ 1.1} \\
        \cmidrule(lr){2-2}
        & \multirow{4.0}{*}{yes} & 0 & 64.4\textsubscript{$\pm$ 7.5} & 73.8\textsubscript{$\pm$ 5.8} & 68.6\textsubscript{$\pm$ 6.3} & 58.9\textsubscript{$\pm$ 7.2} & 47.1\textsubscript{$\pm$ 8.0} & 33.8\textsubscript{$\pm$ 7.4} & 20.7\textsubscript{$\pm$ 6.0} & 13.4\textsubscript{$\pm$ 4.2} \\
        & & 1 & 44.4\textsubscript{$\pm$ 7.9} & 58.8\textsubscript{$\pm$ 6.7} & 52.4\textsubscript{$\pm$ 6.7} & 41.7\textsubscript{$\pm$ 7.6} & 29.3\textsubscript{$\pm$ 7.3} & 16.9\textsubscript{$\pm$ 5.8} & 8.1\textsubscript{$\pm$ 2.6} & 3.8\textsubscript{$\pm$ 0.9} \\
        & & 2 & 23.2\textsubscript{$\pm$ 7.1} & 41.9\textsubscript{$\pm$ 6.9} & 35.9\textsubscript{$\pm$ 6.7} & 24.8\textsubscript{$\pm$ 6.6} & 13.4\textsubscript{$\pm$ 4.9} & 6.0\textsubscript{$\pm$ 2.7} & 2.6\textsubscript{$\pm$ 0.8} & 1.1\textsubscript{$\pm$ 0.6} \\
        & & 3 & 8.3\textsubscript{$\pm$ 4.7} & 24.9\textsubscript{$\pm$ 5.7} & 19.3\textsubscript{$\pm$ 5.2} & 10.7\textsubscript{$\pm$ 3.7} & 4.7\textsubscript{$\pm$ 1.9} & 1.6\textsubscript{$\pm$ 0.6} & 0.6\textsubscript{$\pm$ 0.3} & 0.2\textsubscript{$\pm$ 0.1} \\

		\bottomrule
	\end{tabular}
}
\vspace{-4mm}
\end{table}

\begin{table}[tp]
	\centering
	\small
	\centering
	\caption{Balanced certified accuracy (BCA) $[\%]$ under joint $\ell_0$- and $\ell_1$-perturbations of categorical and numerical features, respectively, depending on whether model uses categorical and/or numerical features. The balanced natural accuracy is the BCA at radius $r = 0.0$. Larger is better.}
	\label{tab:joint-cv-balanced-l1}
	\vspace{2mm}
	\resizebox{0.98\columnwidth}{!}{
    \begin{tabular}{ccccccccccc}
        \toprule
        \multirow{2.6}{*}{Dataset} & \multirowcell{2.6}{Categorical \\ Features} & \multirowcell{2.6}{$\ell_0$ Radius $r_0$} &\multirowcell{2.6}{ BCA without \\ Numerical Features} & \multicolumn{7}{c}{BCA with Numerical Features at $\ell_1$ Radius $r_1$}\\
        \cmidrule(lr){5-11}
        & & & & 0.00 & 0.25 & 0.50 & 0.75 & 1.00 & 1.25 & 1.50 \\
        \midrule
        \multirow{5.5}{*}{\adult} & no & - & - & 62.1\textsubscript{$\pm$ 0.3} & 58.8\textsubscript{$\pm$ 0.5} & 54.9\textsubscript{$\pm$ 0.4} & 51.5\textsubscript{$\pm$ 0.5} & 47.1\textsubscript{$\pm$ 0.5} & 44.6\textsubscript{$\pm$ 0.4} & 42.0\textsubscript{$\pm$ 0.3}  \\
        \cmidrule(lr){2-2}
        & \multirow{4.0}{*}{yes} & 0 & 76.2\textsubscript{$\pm$ 0.6}  & 76.9\textsubscript{$\pm$ 0.5} & 76.4\textsubscript{$\pm$ 0.5} & 75.6\textsubscript{$\pm$ 0.5} & 74.8\textsubscript{$\pm$ 0.6} & 73.6\textsubscript{$\pm$ 0.6} & 72.6\textsubscript{$\pm$ 0.6} & 71.1\textsubscript{$\pm$ 0.6} \\
        & & 1 & 57.0\textsubscript{$\pm$ 0.8} & 59.7\textsubscript{$\pm$ 0.6} & 59.1\textsubscript{$\pm$ 0.6} & 58.4\textsubscript{$\pm$ 0.6} & 57.6\textsubscript{$\pm$ 0.7} & 56.8\textsubscript{$\pm$ 0.7} & 55.8\textsubscript{$\pm$ 0.8} & 54.4\textsubscript{$\pm$ 0.8} \\
        & & 2 & 32.9\textsubscript{$\pm$ 0.6}  & 38.3\textsubscript{$\pm$ 0.5} & 37.3\textsubscript{$\pm$ 0.5} & 36.3\textsubscript{$\pm$ 0.4} & 35.6\textsubscript{$\pm$ 0.4} & 34.8\textsubscript{$\pm$ 0.5} & 34.2\textsubscript{$\pm$ 0.5} & 33.3\textsubscript{$\pm$ 0.5} \\
        & & 3 & 8.9\textsubscript{$\pm$ 0.2} & 17.0\textsubscript{$\pm$ 0.3} & 15.3\textsubscript{$\pm$ 0.3} & 14.3\textsubscript{$\pm$ 0.2} & 13.3\textsubscript{$\pm$ 0.3} & 12.3\textsubscript{$\pm$ 0.3} & 11.9\textsubscript{$\pm$ 0.3} & 11.5\textsubscript{$\pm$ 0.2} \\
        
        \midrule
        \multirow{5.5}{*}{\credit} & no & - & - & 58.2\textsubscript{$\pm$ 4.0} & 54.7\textsubscript{$\pm$ 4.0} & 51.4\textsubscript{$\pm$ 3.4} & 48.0\textsubscript{$\pm$ 2.6} & 43.0\textsubscript{$\pm$ 1.3} & 33.3\textsubscript{$\pm$ 1.5} & 26.9\textsubscript{$\pm$ 2.2}   \\
        \cmidrule(lr){2-2}
        & \multirow{4.0}{*}{yes} & 0 & 64.7\textsubscript{$\pm$ 4.2}  & 65.1\textsubscript{$\pm$ 4.4} & 64.5\textsubscript{$\pm$ 4.1} & 64.0\textsubscript{$\pm$ 3.8} & 63.4\textsubscript{$\pm$ 3.5} & 62.8\textsubscript{$\pm$ 3.6} & 61.2\textsubscript{$\pm$ 4.6} & 60.6\textsubscript{$\pm$ 4.9} \\
        & & 1 & 44.7\textsubscript{$\pm$ 3.5} & 46.1\textsubscript{$\pm$ 3.3} & 45.4\textsubscript{$\pm$ 2.8} & 45.2\textsubscript{$\pm$ 3.0} & 44.5\textsubscript{$\pm$ 3.3} & 43.9\textsubscript{$\pm$ 2.7} & 43.3\textsubscript{$\pm$ 2.9} & 42.2\textsubscript{$\pm$ 3.3} \\
        & & 2 & 26.7\textsubscript{$\pm$ 5.7} & 28.1\textsubscript{$\pm$ 5.6} & 27.8\textsubscript{$\pm$ 5.5} & 27.1\textsubscript{$\pm$ 5.9} & 26.2\textsubscript{$\pm$ 6.0} & 26.1\textsubscript{$\pm$ 6.0} & 25.3\textsubscript{$\pm$ 6.7} & 24.4\textsubscript{$\pm$ 6.4} \\
        & & 3 & 11.1\textsubscript{$\pm$ 4.3} & 12.7\textsubscript{$\pm$ 5.1} & 12.6\textsubscript{$\pm$ 4.9} & 11.9\textsubscript{$\pm$ 4.8} & 11.5\textsubscript{$\pm$ 4.8} & 11.0\textsubscript{$\pm$ 4.2} & 10.6\textsubscript{$\pm$ 4.3} & 10.2\textsubscript{$\pm$ 4.1} \\

        \midrule
        \multirow{4.5}{*}{\mammal} & no & - & - & 51.0\textsubscript{$\pm$ 1.2} & 49.3\textsubscript{$\pm$ 1.2} & 48.4\textsubscript{$\pm$ 1.3} & 48.4\textsubscript{$\pm$ 1.3} & 48.1\textsubscript{$\pm$ 1.1} & 45.9\textsubscript{$\pm$ 1.3} & 45.8\textsubscript{$\pm$ 1.2}  \\
        \cmidrule(lr){2-2}
        & \multirow{3.0}{*}{yes} & 0 &  79.0\textsubscript{$\pm$ 1.2} & 77.0\textsubscript{$\pm$ 1.9} & 76.8\textsubscript{$\pm$ 1.8} & 76.6\textsubscript{$\pm$ 1.9} & 76.6\textsubscript{$\pm$ 1.9} & 76.6\textsubscript{$\pm$ 1.9} & 74.4\textsubscript{$\pm$ 2.4} & 74.4\textsubscript{$\pm$ 2.4} \\
        & & 1 & 30.8\textsubscript{$\pm$ 3.3} & 41.0\textsubscript{$\pm$ 3.8} & 39.5\textsubscript{$\pm$ 3.2} & 38.9\textsubscript{$\pm$ 3.2} & 38.9\textsubscript{$\pm$ 3.2} & 38.6\textsubscript{$\pm$ 3.3} & 37.2\textsubscript{$\pm$ 3.8} & 37.0\textsubscript{$\pm$ 4.0} \\
        & & 2 & 0.0\textsubscript{$\pm$ 0.0} & 0.0\textsubscript{$\pm$ 0.0} & 0.0\textsubscript{$\pm$ 0.0} & 0.0\textsubscript{$\pm$ 0.0} & 0.0\textsubscript{$\pm$ 0.0} & 0.0\textsubscript{$\pm$ 0.0} & 0.0\textsubscript{$\pm$ 0.0} & 0.0\textsubscript{$\pm$ 0.0} \\

        \midrule
        \multirow{5.5}{*}{\bank} & no & - & - & 69.5\textsubscript{$\pm$ 2.5} & 64.7\textsubscript{$\pm$ 2.0} & 61.1\textsubscript{$\pm$ 1.8} & 56.7\textsubscript{$\pm$ 1.9} & 52.1\textsubscript{$\pm$ 1.7} & 47.3\textsubscript{$\pm$ 1.9} & 41.3\textsubscript{$\pm$ 1.3}   \\
        \cmidrule(lr){2-2}
        & \multirow{4.0}{*}{yes} & 0 &  62.8\textsubscript{$\pm$ 1.9} & 68.3\textsubscript{$\pm$ 0.6} & 66.4\textsubscript{$\pm$ 0.7} & 64.9\textsubscript{$\pm$ 0.7} & 63.2\textsubscript{$\pm$ 0.9} & 61.2\textsubscript{$\pm$ 0.5} & 59.0\textsubscript{$\pm$ 0.9} & 55.9\textsubscript{$\pm$ 1.7} \\
        & & 1 & 42.3\textsubscript{$\pm$ 1.5} & 49.1\textsubscript{$\pm$ 1.6} & 46.9\textsubscript{$\pm$ 1.4} & 44.8\textsubscript{$\pm$ 1.3} & 43.3\textsubscript{$\pm$ 1.6} & 40.6\textsubscript{$\pm$ 1.6} & 38.9\textsubscript{$\pm$ 1.9} & 36.3\textsubscript{$\pm$ 1.9} \\
        & & 2 & 21.2\textsubscript{$\pm$ 2.3} & 30.9\textsubscript{$\pm$ 1.6} & 29.0\textsubscript{$\pm$ 1.6} & 27.9\textsubscript{$\pm$ 2.2} & 25.9\textsubscript{$\pm$ 2.4} & 23.9\textsubscript{$\pm$ 1.9} & 22.4\textsubscript{$\pm$ 1.8} & 20.2\textsubscript{$\pm$ 1.9} \\
        & & 3 & 7.2\textsubscript{$\pm$ 2.3} & 15.5\textsubscript{$\pm$ 2.6} & 14.3\textsubscript{$\pm$ 2.9} & 13.0\textsubscript{$\pm$ 2.8} & 12.0\textsubscript{$\pm$ 2.8} & 10.9\textsubscript{$\pm$ 2.5} & 9.6\textsubscript{$\pm$ 2.5} & 7.9\textsubscript{$\pm$ 2.0} \\
        
		\bottomrule
	\end{tabular}
}
\vspace{-4mm}
\end{table}

\begin{table}[tp]
	\centering
	\small
	\centering
	\caption{Certified accuracy (CA) $[\%]$ under joint $\ell_0$- and $\ell_1$-perturbations of categorical and numerical features, respectively, depending on whether model uses categorical and/or numerical features. The natural accuracy is the CA at radius $r = 0.0$. Larger is better.}
	\label{tab:joint-cv-imbalanced-l1}
	\vspace{2mm}
	\resizebox{0.98\columnwidth}{!}{
    \begin{tabular}{ccccccccccc}
        \toprule
        \multirow{2.6}{*}{Dataset} & \multirowcell{2.6}{Categorical \\ Features} & \multirowcell{2.6}{$\ell_0$ Radius $r_0$} &\multirowcell{2.6}{ CA without \\ Numerical Features} & \multicolumn{7}{c}{CA with Numerical Features at $\ell_1$ Radius $r_1$}\\
        \cmidrule(lr){5-11}
        & & & & 0.00 & 0.25 & 0.50 & 0.75 & 1.00 & 1.25 & 1.50 \\
        \midrule
        \multirow{5.5}{*}{\adult} & no & - & - & 80.0\textsubscript{$\pm$ 0.3} & 77.3\textsubscript{$\pm$ 0.4} & 73.3\textsubscript{$\pm$ 0.4} & 69.8\textsubscript{$\pm$ 0.5} & 64.7\textsubscript{$\pm$ 0.6} & 61.5\textsubscript{$\pm$ 0.6} & 58.1\textsubscript{$\pm$ 0.7} \\
        \cmidrule(lr){2-2}
        & \multirow{4.0}{*}{yes} & 0 & 69.7\textsubscript{$\pm$ 0.6} & 69.8\textsubscript{$\pm$ 0.6} & 69.2\textsubscript{$\pm$ 0.6} & 68.3\textsubscript{$\pm$ 0.6} & 67.4\textsubscript{$\pm$ 0.7} & 65.8\textsubscript{$\pm$ 0.6} & 64.7\textsubscript{$\pm$ 0.7} & 63.3\textsubscript{$\pm$ 0.6} \\
        & & 1 & 52.0\textsubscript{$\pm$ 0.8} & 53.5\textsubscript{$\pm$ 0.7} & 52.9\textsubscript{$\pm$ 0.7} & 52.3\textsubscript{$\pm$ 0.7} & 51.5\textsubscript{$\pm$ 0.7} & 50.6\textsubscript{$\pm$ 0.8} & 49.7\textsubscript{$\pm$ 0.9} & 48.6\textsubscript{$\pm$ 0.9} \\
        & & 2 & 27.5\textsubscript{$\pm$ 0.6} & 31.8\textsubscript{$\pm$ 0.5} & 30.7\textsubscript{$\pm$ 0.5} & 29.7\textsubscript{$\pm$ 0.4} & 28.9\textsubscript{$\pm$ 0.4} & 28.1\textsubscript{$\pm$ 0.5} & 27.6\textsubscript{$\pm$ 0.4} & 27.1\textsubscript{$\pm$ 0.5} \\
        & & 3 & 6.6\textsubscript{$\pm$ 0.2} & 12.1\textsubscript{$\pm$ 0.3} & 10.8\textsubscript{$\pm$ 0.3} & 9.9\textsubscript{$\pm$ 0.2} & 9.1\textsubscript{$\pm$ 0.2} & 8.3\textsubscript{$\pm$ 0.2} & 8.0\textsubscript{$\pm$ 0.2} & 7.8\textsubscript{$\pm$ 0.1} \\
        
        \midrule
        \multirow{5.5}{*}{\credit} & no & - & - & 69.8\textsubscript{$\pm$ 2.2} & 66.3\textsubscript{$\pm$ 2.0} & 62.7\textsubscript{$\pm$ 1.8} & 59.1\textsubscript{$\pm$ 0.9} & 53.3\textsubscript{$\pm$ 2.0} & 42.3\textsubscript{$\pm$ 1.6} & 35.1\textsubscript{$\pm$ 3.5}  \\
        \cmidrule(lr){2-2}
        & \multirow{4.0}{*}{yes} & 0 &  58.3\textsubscript{$\pm$ 9.5} & 58.0\textsubscript{$\pm$ 9.9} & 57.6\textsubscript{$\pm$ 9.8} & 57.0\textsubscript{$\pm$ 9.7} & 56.4\textsubscript{$\pm$ 9.5} & 55.8\textsubscript{$\pm$ 9.7} & 54.2\textsubscript{$\pm$ 10.5} & 53.6\textsubscript{$\pm$ 10.8} \\ 
        & & 1 & 38.2\textsubscript{$\pm$ 7.4} & 39.0\textsubscript{$\pm$ 7.7} & 38.5\textsubscript{$\pm$ 7.7} & 38.3\textsubscript{$\pm$ 7.7} & 37.7\textsubscript{$\pm$ 7.7} & 37.2\textsubscript{$\pm$ 7.8} & 36.7\textsubscript{$\pm$ 7.8} & 35.5\textsubscript{$\pm$ 7.9} \\
        & & 2 & 21.7\textsubscript{$\pm$ 5.0} & 22.4\textsubscript{$\pm$ 5.3} & 22.0\textsubscript{$\pm$ 5.0} & 21.4\textsubscript{$\pm$ 5.4} & 20.8\textsubscript{$\pm$ 5.7} & 20.6\textsubscript{$\pm$ 5.7} & 20.0\textsubscript{$\pm$ 6.0} & 19.1\textsubscript{$\pm$ 5.8} \\
        & & 3 & 8.4\textsubscript{$\pm$ 2.6} & 9.5\textsubscript{$\pm$ 2.9} & 9.4\textsubscript{$\pm$ 2.7} & 8.9\textsubscript{$\pm$ 2.8} & 8.6\textsubscript{$\pm$ 2.9} & 8.3\textsubscript{$\pm$ 2.7} & 7.8\textsubscript{$\pm$ 2.9} & 7.5\textsubscript{$\pm$ 2.9} \\

        \midrule
        \multirow{4.5}{*}{\mammal} & no & - & - & 49.6\textsubscript{$\pm$ 3.7} & 48.0\textsubscript{$\pm$ 3.2} & 47.0\textsubscript{$\pm$ 3.3} & 47.0\textsubscript{$\pm$ 3.3} & 46.8\textsubscript{$\pm$ 3.1} & 44.6\textsubscript{$\pm$ 3.1} & 44.5\textsubscript{$\pm$ 3.1}  \\
        \cmidrule(lr){2-2}
        & \multirow{3.0}{*}{yes} & 0 & 79.0\textsubscript{$\pm$ 1.3} & 76.6\textsubscript{$\pm$ 1.7} & 76.5\textsubscript{$\pm$ 1.5} & 76.3\textsubscript{$\pm$ 1.6} & 76.3\textsubscript{$\pm$ 1.6} & 76.3\textsubscript{$\pm$ 1.6} & 74.2\textsubscript{$\pm$ 2.4} & 74.2\textsubscript{$\pm$ 2.4} \\
        & & 1 & 31.8\textsubscript{$\pm$ 4.5} & 40.0\textsubscript{$\pm$ 5.4} & 38.5\textsubscript{$\pm$ 4.6} & 37.8\textsubscript{$\pm$ 4.7} & 37.8\textsubscript{$\pm$ 4.7} & 37.6\textsubscript{$\pm$ 4.7} & 36.1\textsubscript{$\pm$ 4.8} & 36.0\textsubscript{$\pm$ 5.0} \\
        & & 2 & 0.0\textsubscript{$\pm$ 0.0} & 0.0\textsubscript{$\pm$ 0.0} & 0.0\textsubscript{$\pm$ 0.0} & 0.0\textsubscript{$\pm$ 0.0} & 0.0\textsubscript{$\pm$ 0.0} & 0.0\textsubscript{$\pm$ 0.0} & 0.0\textsubscript{$\pm$ 0.0} & 0.0\textsubscript{$\pm$ 0.0} \\

        \midrule
        \multirow{5.5}{*}{\bank} & no & - & - & 80.4\textsubscript{$\pm$ 3.5} & 77.4\textsubscript{$\pm$ 3.0} & 75.1\textsubscript{$\pm$ 3.2} & 72.2\textsubscript{$\pm$ 3.1} & 68.4\textsubscript{$\pm$ 2.5} & 63.7\textsubscript{$\pm$ 2.2} & 57.0\textsubscript{$\pm$ 1.1}  \\
        \cmidrule(lr){2-2}
        & \multirow{4.0}{*}{yes} & 0 & 64.4\textsubscript{$\pm$ 7.5} & 74.0\textsubscript{$\pm$ 5.4} & 72.6\textsubscript{$\pm$ 5.1} & 71.6\textsubscript{$\pm$ 5.0} & 70.2\textsubscript{$\pm$ 5.0} & 68.3\textsubscript{$\pm$ 5.4} & 66.3\textsubscript{$\pm$ 5.4} & 63.2\textsubscript{$\pm$ 5.7} \\
        & & 1 & 44.4\textsubscript{$\pm$ 7.9} & 56.5\textsubscript{$\pm$ 6.6} & 55.2\textsubscript{$\pm$ 6.3} & 53.9\textsubscript{$\pm$ 6.4} & 52.3\textsubscript{$\pm$ 6.3} & 50.2\textsubscript{$\pm$ 6.3} & 48.2\textsubscript{$\pm$ 6.2} & 44.9\textsubscript{$\pm$ 6.3} \\
        & & 2 & 23.2\textsubscript{$\pm$ 7.1} & 38.5\textsubscript{$\pm$ 6.9} & 37.2\textsubscript{$\pm$ 6.5} & 36.0\textsubscript{$\pm$ 6.2} & 34.6\textsubscript{$\pm$ 5.9} & 32.8\textsubscript{$\pm$ 5.8} & 30.7\textsubscript{$\pm$ 5.7} & 27.7\textsubscript{$\pm$ 5.4} \\
        & & 3 & 8.3\textsubscript{$\pm$ 4.7} & 20.2\textsubscript{$\pm$ 6.6} & 19.4\textsubscript{$\pm$ 6.4} & 18.4\textsubscript{$\pm$ 6.0} & 17.2\textsubscript{$\pm$ 5.6} & 16.0\textsubscript{$\pm$ 5.0} & 14.6\textsubscript{$\pm$ 4.6} & 12.3\textsubscript{$\pm$ 4.2} \\

		\bottomrule
	\end{tabular}
}
\vspace{-4mm}
\end{table}

\begin{table}[tp]
	\centering
	\small
	\centering
	\caption{Certified accuracy (CA) $[\%]$ and balanced certified accuracy (BCA) $[\%]$ under $\ell_0$-perturbations of categorical features. Larger is better.}
	\label{tab:categorical}
	\vspace{2mm}
	\resizebox{0.4\columnwidth}{!}{
    \begin{tabular}{cccc}
        \toprule
        Dataset & $\ell_0$ Radius $r_0$ & CA & BCA \\
        \midrule
        \multirow{4.5}{*}{\mushroom} & 0 & 90.6\textsubscript{$\pm$ 0.7} & 90.4\textsubscript{$\pm$ 0.9}  \\
        & 1 & 87.1\textsubscript{$\pm$ 1.7} & 86.9\textsubscript{$\pm$ 1.6}  \\
        & 2 & 81.2\textsubscript{$\pm$ 3.6} & 81.1\textsubscript{$\pm$ 3.4} \\
        & 3 & 70.5\textsubscript{$\pm$ 5.7} & 70.7\textsubscript{$\pm$ 5.5}  \\
		\bottomrule
	\end{tabular}
}
\vspace{-4mm}
\end{table}

In \cref{tab:joint-cv-balanced-l2}, we report the mean and standard deviation (over a 5-fold cross-validation) of the balanced certified accuracies at a range of $\ell_2$ radii over the numerical features given varying perturbation levels of the categorical features for all datasets containing both numerical and categorical features.
We report the corresponding imbalanced certified accuracies in \cref{tab:joint-cv-imbalanced-l2} and similar results for $\ell_1$ radii in \cref{tab:joint-cv-balanced-l1,tab:joint-cv-imbalanced-l1}. 

We again observe that models utilizing both categorical and numerical features outperform those using only either one on clean data.
Interestingly, the slower drop in certified accuracy with increasing perturbation of the numerical features is much more pronounced in the $\ell_1$-setting, and much higher certified accuracies are obtained even at large radii.
For example, on \adult, considering only numerical features leads to a BCA of $62.1\%_{\pm 0.3\%}$ at radius $r_1=0.0$ dropping to $42.0\%_{\pm 0.3\%}$ at $r_1=1.5$.
In contrast, when also utilizing categorical features, the BCA at $r_1=0.0$ is $76.9\%_{\pm 0.5\%}$, only dropping to $71.1\%_{\pm 0.6\%}$ at $r_1=1.5$, when no categorical variable is perturbed ($r_0 = 0$).
Similarly, when at most one categorical variable is perturbed, the BCA at $r_1=0.0$ is $59.7\%_{\pm 0.6\%}$ and only drops to $54.4\%_{\pm 0.8\%}$ radius $r_1=1.5$.
This highlights again that, when available, utilizing categorical features in addition to numerical ones is essential to improve accuracy and make models more certifiably robust.

While standard deviations are generally moderately low, the sensitivity to different train/test-splits is particularly small for datasets with many samples like \adult (nearly $50'000$ samples).

\tool is also applicable to data sets involving only categorical features as ca be seen in \cref{tab:categorical}, where we report results on \mushroom.
As expected, we observe that both balanced and imbalanced certifiable accuracy decrease as we permit more and more categorical features to be perturbed.

\subsection{Derandomized vs. Randomized Smoothing}
\label{app:eval-ds-rs}
\begin{table}[tp]
	\centering
	\small
	\centering
	\caption{We compare certifying the same stump ensembles via Deterministic Smoothing (\tool) and Randomized Smoothing (\RS) with respect to the average certified radius (ACR) and the certified accuracy [\%] at numerous radii $r$ on \mnistof for $\ell_1$ ($\lambda=4.0$) and $\ell_2$ ($\sigma=0.5$) norm perturbations. Larger is better.}
	\vspace{3mm}
	\label{tab:ablation_drs_vs_rs}
	\renewcommand{\arraystretch}{1.1}
	\resizebox{0.8\columnwidth}{!}{
    \begin{tabular}{ccccccccccc}
        \toprule
        \multirow{2.6}{*}{Norm} & \multirow{2.6}{*}{Method} & \multirow{2.6}{*}{ACR} & \multicolumn{8}{c}{Certified Accuracy at Radius r}\\
        \cmidrule(lr){4-11}
        & & & 0.0 & 0.50 & 1.00 & 1.50 & 2.00 & 2.50 & 3.00 & 3.50 \\
        \midrule
        \multirow{5.0}{*}{$\ell_1$} & \RS ($n=100$) & 2.809 & 93.0 & 91.2 & 88.6 & 86.2 & 82.9 & 77.0 & 68.8 & 0.0 \\
        & \RS ($n=1000$) & 3.337 & 95.6 & 94.4 & 92.8 & 90.6 & 87.8 & 84.7 & 79.5 & 70.4 \\
        & \RS ($n=10 000$) & 3.430 & 96.0 & 95.3 & 93.7 & 91.6 & 89.4 & 85.8 & 82.1 & 73.8 \\
        & \RS ($n=100 000$) & 3.456 & 96.1 & 95.5 & 94.0 & 91.9 & \textbf{89.9} & 86.3 & 82.9 & 74.6 \\
        & \tool (ours) & \textbf{3.467} & \textbf{96.6} & \textbf{95.6} & \textbf{94.1} & \textbf{92.1} & \textbf{89.9} & \textbf{86.5} & \textbf{83.1} & \textbf{75.1} \\
        \midrule
        \multirow{5.0}{*}{$\ell_2$} & \RS ($n=100$) & 0.680 & 94.8 & 90.1 & 0.0 & 0.0 & 0.0 & 0.0 & 0.0 & 0.0 \\
        & \RS ($n=1000$) & 1.102 & 95.6 & 92.5 & 85.0 & 0.0 & 0.0 & 0.0 & 0.0 & 0.0 \\
        & \RS ($n=10 000$) & 1.403 & 95.9 & 92.9 & 86.9 & 75.0 & 0.0 & 0.0 & 0.0 & 0.0 \\
        & \RS ($n=100 000$) & 1.627 & 95.9 & \textbf{93.0} & 87.3 & 78.1 & 0.0 & 0.0 & 0.0 & 0.0 \\
        & \tool (ours) & \textbf{2.161} & \textbf{96.0} & \textbf{93.0} & \textbf{87.5} & \textbf{79.0} & \textbf{65.3} & \textbf{40.5} & \textbf{12.3} & \textbf{5.9} \\
		\bottomrule
	\end{tabular}
}
\end{table}

Here, we compare evaluating stump ensembles deterministically via \tool (\cref{sec:det_smoothing}) to sampling-based \RS \cite{CohenRK19}.
In \cref{tab:ablation_drs_vs_rs}, we provide quantitative results corresponding to \cref{fig:ablation-rs-ds}, expanded by an equivalent experiment for $\ell_1$-norm perturbations.
We observe that as sampling-based \RS uses increasingly more samples, it converges towards \tool. This convergence is much faster in the $\ell_1$-setting.
However, especially in the $\ell_2$-setting, a notable gap remains even when using as many as $100\,000$ samples.
This is expected as sampling-based \RS computes a lower confidence bound to the true success probability, which can be computed exactly with \tool. Thus the higher the desired confidence, the larger this gap will be. Further, if \RS were to yield a larger radius than \tool, this would actually be an error, occurring with probability $\alpha$, as \tool computes the true maximum certifiable radius.
This highlights another key difference: \RS provides probabilistic guarantees that hold with confidence $1-\alpha$, while \tool provides deterministic guarantees.
Moreover, for \RS, many samples have to be evaluated (typically $n=100\,000$), while \tool can efficiently compute the exact CDF.
We note that the much larger improvement in certified radii observed for $\ell_2$-norm perturbations is due to the significantly higher sensitivity of the certifiably radius w.r.t. the success probability (see \cref{tab:rs}).

\subsection{MLE Optimality Criterion}
\label{app:eval-mle-ablation}

\begin{table}[tp]
	\centering
	\small
	\centering
	\caption{We compare training stump ensembles optimally via MLE-optimal criterion, training them via noisy sampling (Sampling) and default training (Default) with respect to the average certified radius (ACR) and the certified accuracy [\%] on \mnistts at numerous radii $r$ on various norms for multiple noise magnitudes ($\lambda$ for $\ell_1$ and $\sigma$ for $\ell_2$). Larger is better.}
		\vspace{3mm}
	\label{tab:mle-ablation-extended}
	\renewcommand{\arraystretch}{1.2}
	\resizebox{0.8\columnwidth}{!}{
    \begin{tabular}{cccccccccccc}
        \toprule
        \multirow{2.6}{*}{Norm} & \multirow{2.6}{*}{$\lambda$ ($\ell_1$) or $\sigma$ ($\ell_2$)} & \multirow{2.6}{*}{Method} & \multirow{2.6}{*}{ACR} & \multicolumn{8}{c}{Certified Accuracy at Radius r}\\
        \cmidrule(lr){5-12}
        & & & & 0.0 & 0.5 & 1.0 & 1.5 & 2.0 & 2.5 & 3.0 & 3.5 \\
        \midrule
        \multirow{9.0}{*}{$\ell_1$} & \multirow{3.0}{*}{1.0} & Default & 0.519 & 51.9 & 51.9 & 51.9 & 0.0 & 0.0 & 0.0 & 0.0 & 0.0\\
        & & Sampling & 0.928 & \textbf{96.2} & 93.9 & 64.8 & 0.0 & 0.0 & 0.0 & 0.0 & 0.0 \\
        & & MLE (Ours) & \textbf{0.931} & \textbf{96.2} & \textbf{94.3} & \textbf{66.2} & 0.0 & 0.0 & 0.0 & 0.0 & 0.0  \\
        \cmidrule(lr){2-12}
        & \multirow{3.0}{*}{4.0} & Default & 2.074 & 51.9 & 51.9 & 51.9 & 51.9 & 51.9 & 51.9 & 51.9 & 51.9\\
        & & Sampling & 3.166 & \textbf{96.3} & 95.0 & 93.3 & 90.5 & 87.3 & 81.4 & 72.5 & 56.0 \\
        & & MLE (Ours) & \textbf{3.282} & \textbf{96.3} & \textbf{95.4} & \textbf{93.9} & \textbf{91.7} & \textbf{88.7} & \textbf{84.1} & \textbf{76.0} & \textbf{62.8}  \\
        \cmidrule(lr){2-12}
        & \multirow{3.0}{*}{16.0} & Default & 8.297 & 51.9 & 51.9 & 51.9 & 51.9 & 51.9 & 51.9 & 51.9 & 51.9  \\
        & & Sampling & 6.646 & \textbf{96.4} & 95.3 & 94.4 & 93.4 & 91.8 & 90.0 & 87.8 & 84.9 \\
        & & MLE (Ours) & \textbf{8.574} & 96.2 & \textbf{95.7} & \textbf{95.0} & \textbf{94.1} & \textbf{93.2} & \textbf{91.7} & \textbf{90.6} & \textbf{88.4} \\
        \midrule
        \multirow{9.0}{*}{$\ell_2$} & \multirow{3.0}{*}{0.25} & Default & 0.967 & 51.9 & 51.9 & 51.8 & 48.7 & 0.0 & 0.0 & 0.0 & 0.0\\
        & & Sampling & 1.628 & \textbf{96.3} & 92.8 & 85.9 & 71.7 & 0.0 & 0.0 & 0.0 & 0.0  \\
        & & MLE (Ours) & \textbf{1.642} & \textbf{96.3} & \textbf{93.0} & \textbf{86.3} & \textbf{73.0} & 0.0 & 0.0 & 0.0 & 0.0  \\
        \cmidrule(lr){2-12}
        & \multirow{3.0}{*}{1.0} & Default & \textbf{3.436} & 51.9 & 51.9 & 51.9 & 51.9 & \textbf{51.9} & \textbf{51.9} & \textbf{51.9} & \textbf{51.9} \\
        & & Sampling &  1.594 & \textbf{95.5} & 89.1 & 76.5 & 57.9 & 33.5 & 11.7 & 2.1 & 0.2 \\
        & & MLE (Ours) & 1.724 & \textbf{95.5} & \textbf{90.1} & \textbf{79.2} & \textbf{62.5} & 40.3 & 18.7 & 5.4 & 1.3   \\
        \cmidrule(lr){2-12}
        & \multirow{3.0}{*}{4.0} & Default & \textbf{12.167} & 51.9 & 51.9 & 51.9 & 51.9 & \textbf{51.9} & \textbf{51.9} & \textbf{51.9} & \textbf{51.9} \\
        & & Sampling & 1.095 & 89.2 & 72.9 & 50.9 & 32.6 & 15.8 & 4.0 & 0.5 & 0.0 \\
        & & MLE (Ours) & 1.652 & \textbf{95.1} & \textbf{88.7} & \textbf{76.5} & \textbf{59.2} & 36.6 & 16.3 & 4.9 & 1.5 \\
		\bottomrule
	\end{tabular}
}
\end{table}

In \cref{tab:mle-ablation-extended}, we compare our robust MLE optimality criterion (MLE) to applying the standard entropy criterion to samples drawn from the input randomization scheme (Sampling) or the clean data (Default).
We observe that training approaches accounting for randomness (i.e., Sampling and MLE) consistently outperform default training.
In some cases, default training even suffers from a mode collapse, always predicting the same class.
Amongst the two methods accounting for the input randomization, our MLE optimality criterion consistently outperforms samplings at all noise magnitudes and for both perturbation types.
This effect is particularly pronounced at large noise magnitudes, where sampling becomes less effective at capturing the input distribution.

\pagebreak
\subsection{Effect of Noise Level}
\label{app:eval-noise-level}

\begin{table}[tp]
	\centering
	\small
	\centering
	\caption{Comparison of average certified radius (ACR) and certified accuracy at various radii $r$ with respect to the $\ell_1$ norm for numerous datasets and noise magnitudes $\lambda$. Larger is better.}
	\vspace{2mm}
	\label{tab:ablation-noise-magnitude-l1}
	\resizebox{0.85\columnwidth}{!}{
    \begin{tabular}{cccccccccccccc}
        \toprule
        \multirow{2.6}{*}{Dataset} & \multirow{2.6}{*}{$\lambda$} & \multirow{2.6}{*}{ACR} & \multicolumn{11}{c}{Certified Accuracy at Radius r}\\
        \cmidrule(lr){4-14}
        & & & 0.0 & 1.0 & 2.0 & 3.0 & 4.0 & 5.0 & 6.0 & 7.0 & 8.0 & 9.0 & 10.0 \\
        \midrule
        \multirow{6.0}{*}{\fmnists} & 0.5 & 0.407 & 84.4 & 0.0 & 0.0 & 0.0 & 0.0 & 0.0 & 0.0 & 0.0 & 0.0 & 0.0 & 0.0  \\
        & 1.0 & 0.766 & 83.5 & 55.1 & 0.0 & 0.0 & 0.0 & 0.0 & 0.0 & 0.0 & 0.0 & 0.0 & 0.0 \\
        & 2.0 & 1.463 & 83.7 & 74.9 & 47.0 & 0.0 & 0.0 & 0.0 & 0.0 & 0.0 & 0.0 & 0.0 & 0.0 \\
        & 4.0 & 2.780 & \textbf{85.8} & 80.2 & 73.3 & 60.9 & 21.3 & 0.0 & 0.0 & 0.0 & 0.0 & 0.0 & 0.0 \\
        & 8.0 & 4.755 & 83.9 & 80.0 & 75.5 & 70.3 & 63.9 & 56.4 & 46.5 & 32.6 & 1.9 & 0.0 & 0.0 \\
        & 16.0 & \textbf{7.975} & 84.3 & \textbf{81.7} & \textbf{77.8} & \textbf{75.0} & \textbf{71.7} & \textbf{67.3} & \textbf{63.2} & \textbf{57.9} & \textbf{52.9} & \textbf{47.0} & \textbf{41.1} \\
        \midrule
        \multirow{6.0}{*}{\mnistof} & 0.5 & 0.476 & 96.3 & 0.0 & 0.0 & 0.0 & 0.0 & 0.0 & 0.0 & 0.0 & 0.0 & 0.0 & 0.0 \\
        & 1.0 & 0.934 & 96.3 & 77.0 & 0.0 & 0.0 & 0.0 & 0.0 & 0.0 & 0.0 & 0.0 & 0.0 & 0.0 \\
        & 2.0 & 1.808 & 96.2 & 92.1 & 62.8 & 0.0 & 0.0 & 0.0 & 0.0 & 0.0 & 0.0 & 0.0 & 0.0 \\
        & 4.0 & 3.467 & 96.6 & 94.1 & 89.9 & 83.1 & 39.1 & 0.0 & 0.0 & 0.0 & 0.0 & 0.0 & 0.0 \\
        & 8.0 & 6.472 & \textbf{97.0} & \textbf{95.4} & \textbf{93.3} & \textbf{91.0} & \textbf{87.4} & \textbf{82.2} & \textbf{75.1} & 60.7 & 4.4 & 0.0 & 0.0 \\
        & 16.0 & \textbf{8.957} & 90.4 & 88.6 & 86.6 & 83.5 & 80.3 & 77.4 & 72.9 & \textbf{67.4} & \textbf{61.9} & \textbf{56.2} & \textbf{49.6} \\
        \midrule
        \multirow{6.0}{*}{\mnistts} & 0.5 & 0.477 & 96.3 & 0.0 & 0.0 & 0.0 & 0.0 & 0.0 & 0.0 & 0.0 & 0.0 & 0.0 & 0.0 \\
        & 1.0 & 0.931 & 96.2 & 66.2 & 0.0 & 0.0 & 0.0 & 0.0 & 0.0 & 0.0 & 0.0 & 0.0 & 0.0 \\
        & 2.0 & 1.780 & 96.2 & 92.2 & 43.0 & 0.0 & 0.0 & 0.0 & 0.0 & 0.0 & 0.0 & 0.0 & 0.0  \\
        & 4.0 & 3.282 & 96.3 & 93.9 & 88.7 & 76.0 & 3.8 & 0.0 & 0.0 & 0.0 & 0.0 & 0.0 & 0.0 \\
        & 8.0 & 5.617 & \textbf{96.5} & 94.6 & 91.4 & 87.4 & 80.9 & 71.7 & 56.6 & 31.3 & 0.0 & 0.0 & 0.0 \\
        & 16.0 & \textbf{8.574} & 96.2 & \textbf{95.0} & \textbf{93.2} & \textbf{90.6} & \textbf{86.5} & \textbf{82.7} & \textbf{77.5} & \textbf{70.5} & \textbf{62.7} & \textbf{53.3} & \textbf{41.3} \\
		\bottomrule
	\end{tabular}
}
\vspace{-3mm}
\end{table}

\begin{table}[tp]
	\centering
	\small
	\centering
	\caption{Comparison of average certified radius (ACR) and certified accuracy at various radii $r$ with respect to the $\ell_2$ norm for numerous datasets and noise magnitudes $\sigma$. Larger is better.}
	\vspace{2mm}
	\label{tab:ablation-noise-magnitude-l2}
	\resizebox{0.8\columnwidth}{!}{
    \begin{tabular}{ccccccccccc}
        \toprule
        \multirow{2.6}{*}{Dataset} & \multirow{2.6}{*}{$\sigma$} & \multirow{2.6}{*}{ACR} & \multicolumn{8}{c}{Certified Accuracy at Radius r}\\
        \cmidrule(lr){4-11}
        & & & 0.0 & 0.5 & 1.0 & 1.5 & 2.0 & 2.5 & 3.0 & 3.5 \\
        \midrule
        \multirow{6.0}{*}{\fmnists} & 0.25 & 1.361 & \textbf{86.8} & \textbf{79.6} & 70.0 & \textbf{58.2} & 0.0 & 0.0 & 0.0 & 0.0 \\
        & 0.5 & 1.723 & 86.5 & 78.9 & \textbf{70.1} & 56.6 & 42.2 & 27.8 & 17.4 & 8.4 \\
        & 1.0 & 1.699 & 86.2 & 78.5 & 69.1 & 55.7 & 41.0 & 25.8 & 16.9 & 8.9 \\
        & 2.0 & 1.681 & 86.2 & 78.5 & 68.8 & 55.1 & 40.2 & 25.6 & 16.8 & 8.7 \\
        & 4.0 & \textbf{2.136} & 57.1 & 52.2 & 49.7 & 47.9 & \textbf{46.0} & \textbf{43.4} & \textbf{39.4} & \textbf{35.0} \\
        & 8.0 & 1.518 & 83.7 & 74.2 & 64.4 & 51.0 & 35.4 & 21.0 & 10.4 & 4.8 \\
        \midrule
        \multirow{6.0}{*}{\mnistof} & 0.25 & 1.737 & 95.8 & \textbf{93.6} & \textbf{89.0} & \textbf{82.8} & 0.0 & 0.0 & 0.0 & 0.0 \\
        & 0.5 & \textbf{2.161} & 96.0 & 93.0 & 87.5 & 79.0 & \textbf{65.3} & \textbf{40.5} & 12.3 & 5.9 \\
        & 1.0 & 2.044 & 96.0 & 92.7 & 86.1 & 75.6 & 57.3 & 28.4 & 11.4 & 6.7 \\
        & 2.0 & 2.012 & 95.8 & 92.7 & 85.8 & 74.9 & 56.2 & 26.9 & 10.3 & 6.0  \\
        & 4.0 & 1.875 & 94.8 & 87.2 & 71.8 & 48.1 & 34.7 & 29.9 & \textbf{23.8} & \textbf{15.7} \\
        & 8.0 & 1.808 & \textbf{96.1} & 90.2 & 80.3 & 62.9 & 36.4 & 20.4 & 13.3 & 7.7 \\
        \midrule
        \multirow{6.0}{*}{\mnistts} & 0.25 & 1.642 & \textbf{96.3} & \textbf{93.0} & \textbf{86.3} & \textbf{73.0} & 0.0 & 0.0 & 0.0 & 0.0 \\
        & 0.5 & \textbf{1.824} & 95.8 & 91.2 & 81.9 & 66.7 & 46.4 & 23.4 & 7.4 & 1.3 \\
        & 1.0 & 1.724 & 95.5 & 90.1 & 79.2 & 62.5 & 40.3 & 18.7 & 5.4 & 1.3 \\
        & 2.0 & 1.688 & 95.4 & 89.5 & 78.0 & 60.9 & 38.8 & 17.5 & 4.9 & 1.0  \\
        & 4.0 & 1.652 & 95.1 & 88.7 & 76.5 & 59.2 & 36.6 & 16.3 & 4.9 & 1.5 \\
        & 8.0 & 1.718 & 74.3 & 61.0 & 53.2 & 49.4 & \textbf{46.6} & \textbf{40.2} & \textbf{30.3} & \textbf{17.4} \\
		\bottomrule
	\end{tabular}
}
\vspace{-3mm}
\end{table}

\begin{wrapfigure}[11]{r}{0.40\textwidth}
	\centering
	\vspace{-12.5mm}
	\includegraphics[width=0.97\linewidth]{./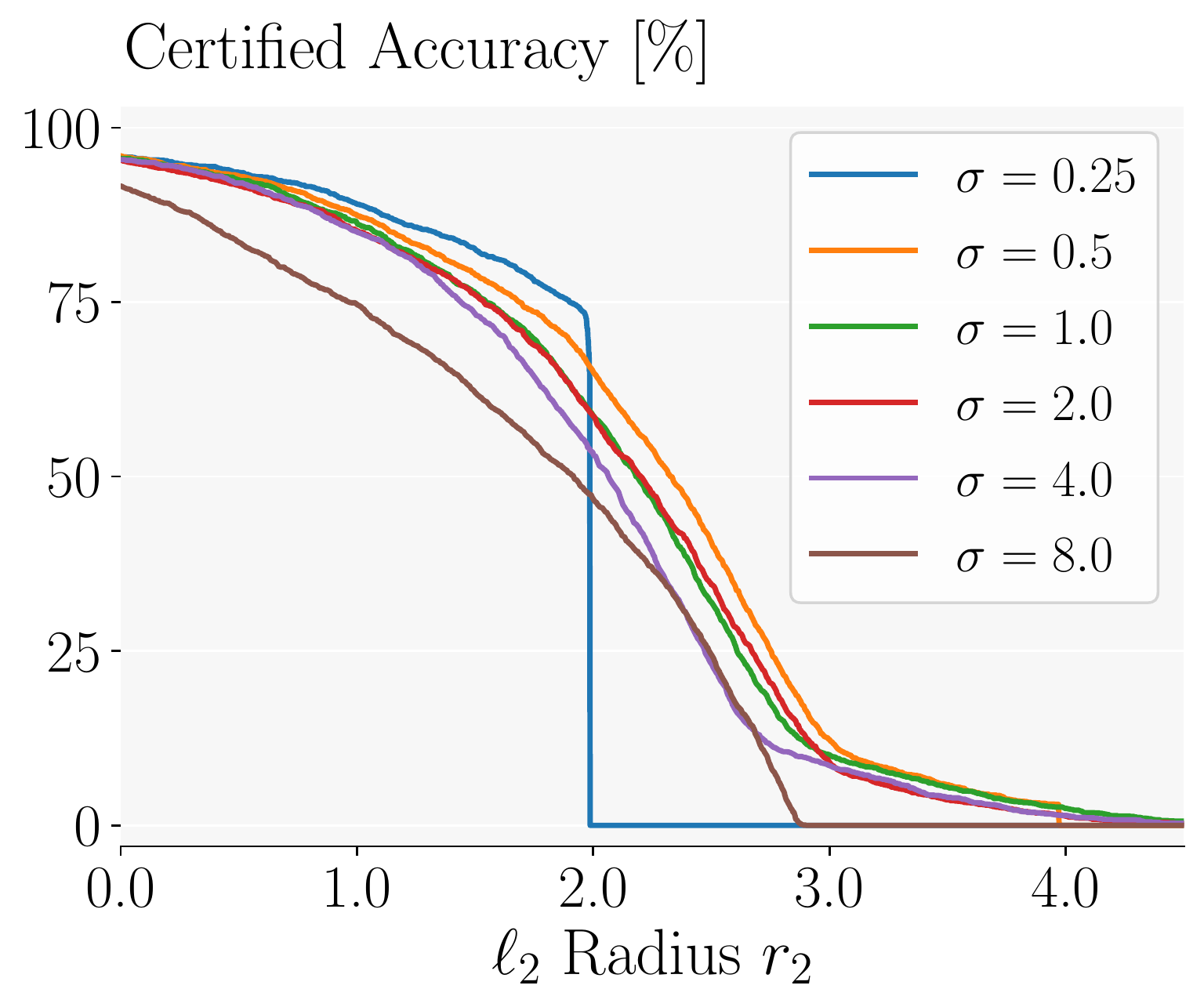}
	\vspace{-2.6mm}
	\caption{Comparing \tool for various noise levels $\sigma$ on \mnistof.}
	\label{fig:ablation-sigma-l2}
\end{wrapfigure}

Here, we provide additional experiments for varying noise magnitudes, $\lambda$ for $\ell_1$-certification, and $\sigma$ for $\ell_2$-certification.
In \cref{tab:ablation-noise-magnitude-l1,tab:ablation-noise-magnitude-l2}, we provide extensive experiments for the $\ell_1$- and $\ell_2$-setting, respectively, which we visualize in \cref{fig:ablation-lambda-l1,fig:ablation-sigma-l2}.

We observe that, in the $\ell_1$-setting, the natural accuracy (certified accuracy at radius $0$) is quite insensitive to an increase in noise magnitude. Consequently, large $\lambda$ lead to exceptionally large ACR and certified accuracies even at large radii, e.g., on \mnistts, we obtain a certified accuracy of $82.7\%$ at $\ell_1$-radius $r=5.0$.

In the $\ell_2$-setting, increasing the noise magnitude $\sigma$ generally leads to a more pronounced drop in natural and certified accuracy, and thus similar ACRs for various noise magnitudes.

\begin{wrapfigure}[15]{r}{0.39\textwidth}
	\centering
	\vspace{-5.2mm}
	\includegraphics[width=0.99\linewidth]{./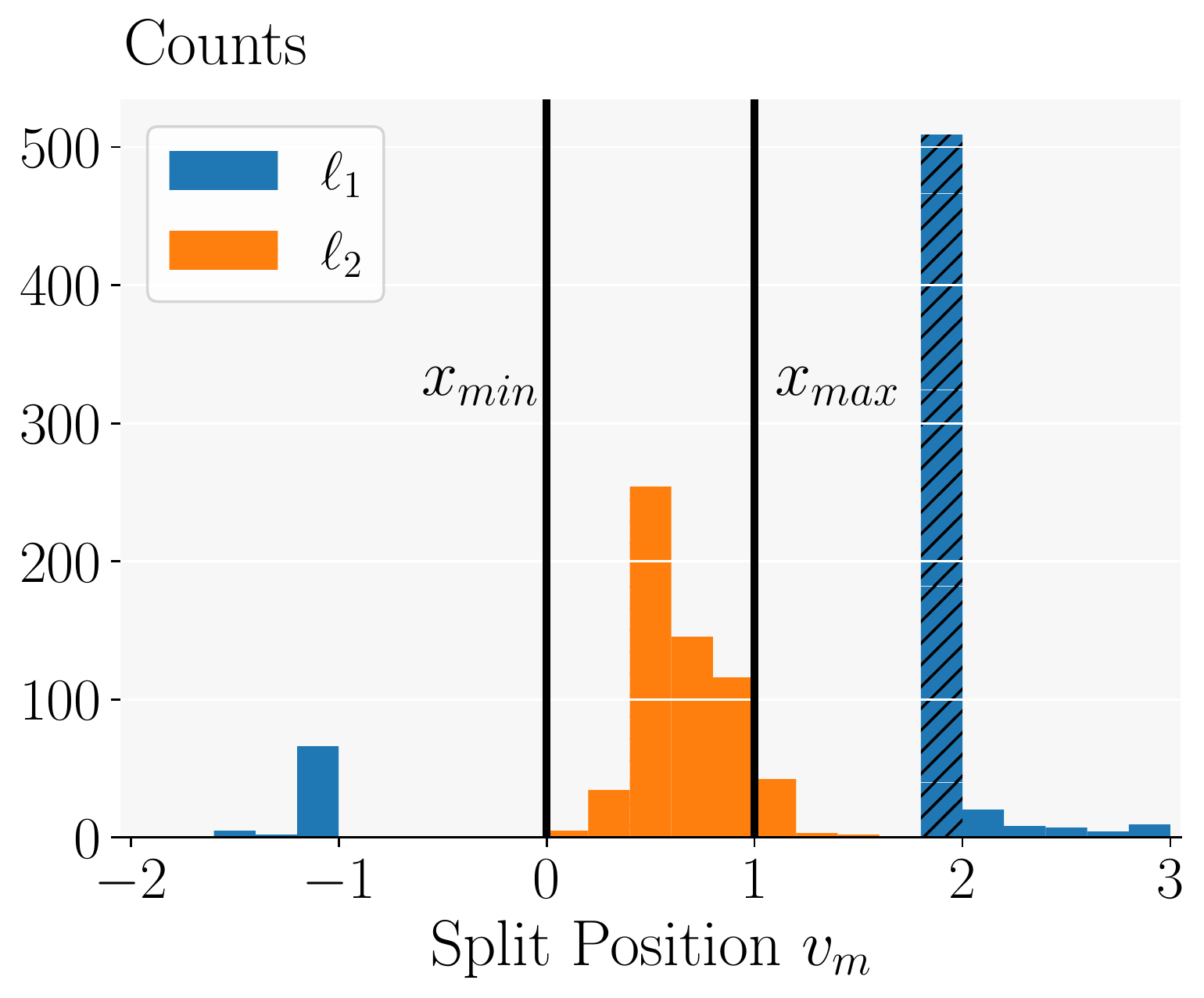}
	\vspace{-6.2mm}
	\caption{Comparing counts for values of $v_m$ on \mnistts for $\ell_1$ and $\ell_2$ norms with $\lambda=2.0$ and $\sigma=0.25$, respectively.}
	\label{fig:decision-thresholds}
\end{wrapfigure}

\paragraph{Thinking Outside the Box}
Analysing this surprising behaviour in the $\ell_1$-setting, we empirically find that despite the data being normalized to $[0,1]$, the MLE optimality criterion often yields split positions $v_m$ outside of $[0,1]$. Recall that there, uniformly distributed random noise is added to the original sample ($\vx' \sim \textit{Unif}([\vx-\lambda, \vx+\lambda]^d)$).
In \cref{fig:decision-thresholds}, we show a histogram of the split positions ($v_m$), illustrating this behaviour. In the $\ell_1$-setting and using $\lambda=2$, all split positions are either smaller than $-1$ or larger than $ 1.9$, which are exactly the borders of uniform distributions with $\lambda=2$ centered at the extremes of the image domain ($[0,1]$).
As all splits are outside the hyperbox constituting the original image domain, we refer to this behaviour as 'thinking outside the box'.
Intuitively, each unperturbed data point is on the same side of $v_m$ in this case, but when the randomization scheme is applied, a split outside of $[-1,2]$ leads to a probability mass of $0$ for an original feature value of $0$ or $1$, while for the other, the probability mass can be as high as $\frac{1}{2\lambda}$. Therefore, such splits allow the smoothed model to still separate these cases well for randomized inputs.

While we observe this effect on all datasets in the $\ell_1$-setting given a sufficiently large $\lambda$, it does not appear in the $\ell_2$-setting. There, $v_m$'s are typically clustered closely around or inside $[0,1]$, as the Gaussian randomization applied here has unbounded support and does not permit for such a clean separation, regardless of the choice of $v_m$.

\subsection{Leaf Prediction Discretization}
\label{app:discretization-size-ablation}

\begin{table}[tp]
	\centering
	\small
	\centering
	\caption{We compare the performance of models for different number of discretization sizes with respect to average certified radius (ACR) and given certified accuracies (CA) [\%] on \mnistof. We utilize $\lambda=4.0$ for $\ell_1$ and $\sigma=0.5$ for $\ell_2$. We report mean and standard deviation over $5$-fold cross-validation. Larger mean is better.}
	\label{tab:ablation_discretizations}
	\vspace{2mm}
	\resizebox{0.9\columnwidth}{!}{
    \begin{tabular}{ccccccccccc}
        \toprule
        \multirow{2.6}{*}{Norm} & \multirow{2.6}{*}{Discretizations} & \multirow{2.6}{*}{ACR} & \multicolumn{8}{c}{Certified Accuracy [\%] at Radius r} \\
        \cmidrule(lr){4-11}
        & & & 0.00 & 0.50 & 1.00 & 1.50 & 2.00 & 2.50 & 3.00 & 3.50 \\
        \midrule
        \multirow{10.0}{*}{$\ell_1$}& 2 & 1.860\textsubscript{$\pm$ 0.025}  & 60.2\textsubscript{$\pm$ 1.3}  & 52.9\textsubscript{$\pm$ 1.5}  & 46.6\textsubscript{$\pm$ 1.3}  & 44.5\textsubscript{$\pm$ 0.6}  & 44.5\textsubscript{$\pm$ 0.6}  & 44.5\textsubscript{$\pm$ 0.6}  & 44.5\textsubscript{$\pm$ 0.6}  & 44.3\textsubscript{$\pm$ 0.6} \\
        & 3 & 2.220\textsubscript{$\pm$ 0.023}  & 55.5\textsubscript{$\pm$ 0.6}  & 55.5\textsubscript{$\pm$ 0.6}  & 55.5\textsubscript{$\pm$ 0.6}  & 55.5\textsubscript{$\pm$ 0.6}  & 55.5\textsubscript{$\pm$ 0.6}  & 55.5\textsubscript{$\pm$ 0.6}  & 55.5\textsubscript{$\pm$ 0.6}  & 55.5\textsubscript{$\pm$ 0.6} \\
        & 5 & 2.220\textsubscript{$\pm$ 0.023}  & 55.5\textsubscript{$\pm$ 0.6}  & 55.5\textsubscript{$\pm$ 0.6}  & 55.5\textsubscript{$\pm$ 0.6}  & 55.5\textsubscript{$\pm$ 0.6}  & 55.5\textsubscript{$\pm$ 0.6}  & 55.5\textsubscript{$\pm$ 0.6}  & 55.5\textsubscript{$\pm$ 0.6}  & 55.5\textsubscript{$\pm$ 0.6} \\
        & 10 & 2.191\textsubscript{$\pm$ 0.033}  & 72.6\textsubscript{$\pm$ 1.2}  & 68.1\textsubscript{$\pm$ 1.3}  & 63.8\textsubscript{$\pm$ 1.0}  & 58.6\textsubscript{$\pm$ 1.0}  & 53.7\textsubscript{$\pm$ 0.8}  & 48.6\textsubscript{$\pm$ 0.7}  & 45.1\textsubscript{$\pm$ 0.6}  & 44.3\textsubscript{$\pm$ 0.6} \\
        & 50 & 3.502\textsubscript{$\pm$ 0.014}  & 97.0\textsubscript{$\pm$ 0.4}  & 96.2\textsubscript{$\pm$ 0.4}  & 94.9\textsubscript{$\pm$ 0.5}  & 93.2\textsubscript{$\pm$ 0.5}  & 90.9\textsubscript{$\pm$ 0.5}  & 88.0\textsubscript{$\pm$ 0.5}  & 83.8\textsubscript{$\pm$ 0.4}  & 76.9\textsubscript{$\pm$ 0.2} \\
        & 100 & 3.425\textsubscript{$\pm$ 0.015}  & 96.2\textsubscript{$\pm$ 0.4}  & 94.7\textsubscript{$\pm$ 0.5}  & 93.0\textsubscript{$\pm$ 0.5}  & 91.0\textsubscript{$\pm$ 0.5}  & 88.7\textsubscript{$\pm$ 0.5}  & 85.7\textsubscript{$\pm$ 0.3}  & 81.5\textsubscript{$\pm$ 0.4}  & 74.2\textsubscript{$\pm$ 0.5} \\
        & 500 & 3.375\textsubscript{$\pm$ 0.016}  & 95.1\textsubscript{$\pm$ 0.5}  & 93.6\textsubscript{$\pm$ 0.5}  & 91.8\textsubscript{$\pm$ 0.5}  & 89.8\textsubscript{$\pm$ 0.4}  & 87.4\textsubscript{$\pm$ 0.3}  & 84.4\textsubscript{$\pm$ 0.5}  & 80.0\textsubscript{$\pm$ 0.5}  & 72.5\textsubscript{$\pm$ 0.3} \\
        & 1'000 & 3.367\textsubscript{$\pm$ 0.016}  & 94.9\textsubscript{$\pm$ 0.6}  & 93.5\textsubscript{$\pm$ 0.6}  & 91.6\textsubscript{$\pm$ 0.5}  & 89.7\textsubscript{$\pm$ 0.5}  & 87.2\textsubscript{$\pm$ 0.3}  & 84.2\textsubscript{$\pm$ 0.5}  & 79.8\textsubscript{$\pm$ 0.6}  & 72.3\textsubscript{$\pm$ 0.3} \\
        & 5'000 & 3.364\textsubscript{$\pm$ 0.016}  & 94.9\textsubscript{$\pm$ 0.6}  & 93.4\textsubscript{$\pm$ 0.5}  & 91.6\textsubscript{$\pm$ 0.5}  & 89.6\textsubscript{$\pm$ 0.4}  & 87.2\textsubscript{$\pm$ 0.3}  & 84.2\textsubscript{$\pm$ 0.5}  & 79.7\textsubscript{$\pm$ 0.5}  & 72.4\textsubscript{$\pm$ 0.3} \\
        & 10'000 & 3.364\textsubscript{$\pm$ 0.016}  & 94.9\textsubscript{$\pm$ 0.6}  & 93.4\textsubscript{$\pm$ 0.5}  & 91.6\textsubscript{$\pm$ 0.5}  & 89.6\textsubscript{$\pm$ 0.4}  & 87.2\textsubscript{$\pm$ 0.3}  & 84.2\textsubscript{$\pm$ 0.5}  & 79.7\textsubscript{$\pm$ 0.6}  & 72.3\textsubscript{$\pm$ 0.3} \\
        \midrule
        \multirow{10.0}{*}{$\ell_2$} & 2 &1.766\textsubscript{$\pm$ 0.023}  & 44.5\textsubscript{$\pm$ 0.6}  & 44.5\textsubscript{$\pm$ 0.6}  & 44.5\textsubscript{$\pm$ 0.6}  & 44.5\textsubscript{$\pm$ 0.6}  & 44.5\textsubscript{$\pm$ 0.6}  & 44.5\textsubscript{$\pm$ 0.6}  & 44.5\textsubscript{$\pm$ 0.6}  & 44.5\textsubscript{$\pm$ 0.6}  \\
        & 3 & 2.204\textsubscript{$\pm$ 0.023}  & 55.5\textsubscript{$\pm$ 0.6}  & 55.5\textsubscript{$\pm$ 0.6}  & 55.5\textsubscript{$\pm$ 0.6}  & 55.5\textsubscript{$\pm$ 0.6}  & 55.5\textsubscript{$\pm$ 0.6}  & 55.5\textsubscript{$\pm$ 0.6}  & 55.5\textsubscript{$\pm$ 0.6}  & 55.5\textsubscript{$\pm$ 0.6} \\
        & 5 & 2.204\textsubscript{$\pm$ 0.023}  & 55.5\textsubscript{$\pm$ 0.6}  & 55.5\textsubscript{$\pm$ 0.6}  & 55.5\textsubscript{$\pm$ 0.6}  & 55.5\textsubscript{$\pm$ 0.6}  & 55.5\textsubscript{$\pm$ 0.6}  & 55.5\textsubscript{$\pm$ 0.6}  & 55.5\textsubscript{$\pm$ 0.6}  & 55.5\textsubscript{$\pm$ 0.6} \\
        & 10 & 2.044\textsubscript{$\pm$ 0.007}  & 95.9\textsubscript{$\pm$ 0.5}  & 92.2\textsubscript{$\pm$ 0.5}  & 85.9\textsubscript{$\pm$ 0.2}  & 75.8\textsubscript{$\pm$ 0.4}  & 56.9\textsubscript{$\pm$ 1.1}  & 27.8\textsubscript{$\pm$ 0.7}  & 14.1\textsubscript{$\pm$ 0.4}  & 7.6\textsubscript{$\pm$ 0.3} \\
        & 50 & 2.110\textsubscript{$\pm$ 0.006}  & 95.6\textsubscript{$\pm$ 0.5}  & 92.0\textsubscript{$\pm$ 0.6}  & 86.3\textsubscript{$\pm$ 0.3}  & 77.7\textsubscript{$\pm$ 0.4}  & 63.1\textsubscript{$\pm$ 0.6}  & 38.0\textsubscript{$\pm$ 0.9}  & 11.4\textsubscript{$\pm$ 0.3}  & 5.4\textsubscript{$\pm$ 0.3} \\
        & 100 & 2.120\textsubscript{$\pm$ 0.005}  & 95.3\textsubscript{$\pm$ 0.5}  & 91.8\textsubscript{$\pm$ 0.6}  & 86.2\textsubscript{$\pm$ 0.4}  & 77.7\textsubscript{$\pm$ 0.5}  & 64.0\textsubscript{$\pm$ 0.7}  & 39.8\textsubscript{$\pm$ 0.7}  & 11.4\textsubscript{$\pm$ 0.3}  & 5.0\textsubscript{$\pm$ 0.3} \\
        & 500 & 2.125\textsubscript{$\pm$ 0.005}  & 95.1\textsubscript{$\pm$ 0.5}  & 91.8\textsubscript{$\pm$ 0.5}  & 86.2\textsubscript{$\pm$ 0.3}  & 77.8\textsubscript{$\pm$ 0.4}  & 64.3\textsubscript{$\pm$ 0.6}  & 40.7\textsubscript{$\pm$ 0.7}  & 11.5\textsubscript{$\pm$ 0.4}  & 4.8\textsubscript{$\pm$ 0.2} \\
        & 1000 & 2.126\textsubscript{$\pm$ 0.005}  & 95.1\textsubscript{$\pm$ 0.5}  & 91.7\textsubscript{$\pm$ 0.6}  & 86.2\textsubscript{$\pm$ 0.3}  & 77.8\textsubscript{$\pm$ 0.4}  & 64.4\textsubscript{$\pm$ 0.6}  & 40.8\textsubscript{$\pm$ 0.7}  & 11.6\textsubscript{$\pm$ 0.3}  & 4.8\textsubscript{$\pm$ 0.2} \\
        & 5'000 & 2.126\textsubscript{$\pm$ 0.005}  & 95.1\textsubscript{$\pm$ 0.5}  & 91.7\textsubscript{$\pm$ 0.6}  & 86.2\textsubscript{$\pm$ 0.3}  & 77.8\textsubscript{$\pm$ 0.4}  & 64.3\textsubscript{$\pm$ 0.6}  & 40.8\textsubscript{$\pm$ 0.8}  & 11.6\textsubscript{$\pm$ 0.4}  & 4.8\textsubscript{$\pm$ 0.2} \\
        & 10'000 & 2.126\textsubscript{$\pm$ 0.005}  & 95.1\textsubscript{$\pm$ 0.5}  & 91.7\textsubscript{$\pm$ 0.6}  & 86.2\textsubscript{$\pm$ 0.3}  & 77.8\textsubscript{$\pm$ 0.4}  & 64.3\textsubscript{$\pm$ 0.6}  & 40.8\textsubscript{$\pm$ 0.8}  & 11.6\textsubscript{$\pm$ 0.3}  & 4.8\textsubscript{$\pm$ 0.2} \\
	\bottomrule
	\end{tabular}
}
\vspace{-3mm}
\end{table}

\begin{table}[tp]
	\centering
	\small
	\centering
	\caption{We compare the performance of models for different number of discretization sizes with respect to average certified radius (ACR) and given certified accuracies (CA) [\%] on \breast. We utilize $\lambda=2.0$ for $\ell_1$ and $\sigma=4.0$ for $\ell_2$. We report mean and standard deviation over $5$-fold cross-validation. Larger mean is better.}
	\label{tab:ablation_discretizations_tabular}
	\vspace{2mm}
	\resizebox{0.9\columnwidth}{!}{
    \begin{tabular}{ccccccccc}
        \toprule
        \multirow{2.6}{*}{Norm} & \multirow{2.6}{*}{Discretizations} & \multirow{2.6}{*}{ACR} & \multicolumn{6}{c}{Certified Accuracy [\%] at Radius r} \\
        \cmidrule(lr){4-9}
        & & & 0.00 & 0.10 & 0.2 & 0.3 & 0.4 & 0.5 \\
        \midrule
        \multirow{12.0}{*}{$\ell_1$}& 2 &  1.396\textsubscript{$\pm$ 0.039}  & 95.2\textsubscript{$\pm$ 1.4}  & 94.1\textsubscript{$\pm$ 1.4}  & 92.7\textsubscript{$\pm$ 1.4}  & 90.6\textsubscript{$\pm$ 1.8}  & 89.0\textsubscript{$\pm$ 1.9}  & 87.1\textsubscript{$\pm$ 0.8} \\
        & 3 & 1.298\textsubscript{$\pm$ 0.044}  & 65.0\textsubscript{$\pm$ 2.2}  & 65.0\textsubscript{$\pm$ 2.2}  & 65.0\textsubscript{$\pm$ 2.2}  & 65.0\textsubscript{$\pm$ 2.2}  & 65.0\textsubscript{$\pm$ 2.2}  & 65.0\textsubscript{$\pm$ 2.2} \\
        & 5 & 1.292\textsubscript{$\pm$ 0.045}  & 67.6\textsubscript{$\pm$ 2.8}  & 66.9\textsubscript{$\pm$ 2.7}  & 66.3\textsubscript{$\pm$ 2.3}  & 65.9\textsubscript{$\pm$ 2.5}  & 65.4\textsubscript{$\pm$ 2.0}  & 65.0\textsubscript{$\pm$ 2.2} \\
        & 10 & 1.395\textsubscript{$\pm$ 0.038}  & 95.2\textsubscript{$\pm$ 1.4}  & 94.1\textsubscript{$\pm$ 1.4}  & 92.8\textsubscript{$\pm$ 1.2}  & 90.8\textsubscript{$\pm$ 1.8}  & 89.0\textsubscript{$\pm$ 1.9}  & 87.1\textsubscript{$\pm$ 0.8}\\
        & 50 & 1.396\textsubscript{$\pm$ 0.039}  & 95.2\textsubscript{$\pm$ 1.4}  & 94.1\textsubscript{$\pm$ 1.4}  & 92.7\textsubscript{$\pm$ 1.4}  & 90.6\textsubscript{$\pm$ 1.8}  & 89.0\textsubscript{$\pm$ 1.9}  & 87.1\textsubscript{$\pm$ 0.8}\\
        & 100 & 1.396\textsubscript{$\pm$ 0.039}  & 95.2\textsubscript{$\pm$ 1.4}  & 94.1\textsubscript{$\pm$ 1.4}  & 92.7\textsubscript{$\pm$ 1.4}  & 90.6\textsubscript{$\pm$ 1.8}  & 89.0\textsubscript{$\pm$ 1.9}  & 87.1\textsubscript{$\pm$ 0.8}\\
        & 500 & 1.396\textsubscript{$\pm$ 0.039}  & 95.2\textsubscript{$\pm$ 1.4}  & 94.1\textsubscript{$\pm$ 1.4}  & 92.7\textsubscript{$\pm$ 1.4}  & 90.6\textsubscript{$\pm$ 1.8}  & 89.0\textsubscript{$\pm$ 1.9}  & 87.1\textsubscript{$\pm$ 0.8}\\
        & 1'000 & 1.396\textsubscript{$\pm$ 0.039}  & 95.2\textsubscript{$\pm$ 1.4}  & 94.1\textsubscript{$\pm$ 1.4}  & 92.7\textsubscript{$\pm$ 1.4}  & 90.6\textsubscript{$\pm$ 1.8}  & 89.0\textsubscript{$\pm$ 1.9}  & 87.1\textsubscript{$\pm$ 0.8}\\
        & 5'000 & 1.396\textsubscript{$\pm$ 0.039}  & 95.2\textsubscript{$\pm$ 1.4}  & 94.1\textsubscript{$\pm$ 1.4}  & 92.7\textsubscript{$\pm$ 1.4}  & 90.6\textsubscript{$\pm$ 1.8}  & 89.0\textsubscript{$\pm$ 1.9}  & 87.1\textsubscript{$\pm$ 0.8} \\
        & 10'000 & 1.396\textsubscript{$\pm$ 0.039}  & 95.2\textsubscript{$\pm$ 1.4}  & 94.1\textsubscript{$\pm$ 1.4}  & 92.7\textsubscript{$\pm$ 1.4}  & 90.6\textsubscript{$\pm$ 1.8}  & 89.0\textsubscript{$\pm$ 1.9}  & 87.1\textsubscript{$\pm$ 0.8} \\
        & 50'000 & 1.396\textsubscript{$\pm$ 0.039}  & 95.2\textsubscript{$\pm$ 1.4}  & 94.1\textsubscript{$\pm$ 1.4}  & 92.7\textsubscript{$\pm$ 1.4}  & 90.6\textsubscript{$\pm$ 1.8}  & 89.0\textsubscript{$\pm$ 1.9}  & 87.1\textsubscript{$\pm$ 0.8} \\
        & 100'000 & 1.396\textsubscript{$\pm$ 0.039}  & 95.2\textsubscript{$\pm$ 1.4}  & 94.1\textsubscript{$\pm$ 1.4}  & 92.7\textsubscript{$\pm$ 1.4}  & 90.6\textsubscript{$\pm$ 1.8}  & 89.0\textsubscript{$\pm$ 1.9}  & 87.1\textsubscript{$\pm$ 0.8} \\
        \midrule
        \multirow{12.0}{*}{$\ell_2$} & 2 & 20.672\textsubscript{$\pm$ 0.704}  & 65.0\textsubscript{$\pm$ 2.2}  & 65.0\textsubscript{$\pm$ 2.2}  & 65.0\textsubscript{$\pm$ 2.2}  & 65.0\textsubscript{$\pm$ 2.2}  & 65.0\textsubscript{$\pm$ 2.2}  & 65.0\textsubscript{$\pm$ 2.2}\\
        & 3 & 1.533\textsubscript{$\pm$ 0.080}  & 65.0\textsubscript{$\pm$ 2.2}  & 65.0\textsubscript{$\pm$ 2.2}  & 65.0\textsubscript{$\pm$ 2.2}  & 65.0\textsubscript{$\pm$ 2.2}  & 65.0\textsubscript{$\pm$ 2.2}  & 65.0\textsubscript{$\pm$ 2.2}\\
        & 5 & 1.533\textsubscript{$\pm$ 0.080}  & 65.0\textsubscript{$\pm$ 2.2}  & 65.0\textsubscript{$\pm$ 2.2}  & 65.0\textsubscript{$\pm$ 2.2}  & 65.0\textsubscript{$\pm$ 2.2}  & 65.0\textsubscript{$\pm$ 2.2}  & 65.0\textsubscript{$\pm$ 2.2}\\
        & 10 & 20.672\textsubscript{$\pm$ 0.704}  & 65.0\textsubscript{$\pm$ 2.2}  & 65.0\textsubscript{$\pm$ 2.2}  & 65.0\textsubscript{$\pm$ 2.2}  & 65.0\textsubscript{$\pm$ 2.2}  & 65.0\textsubscript{$\pm$ 2.2}  & 65.0\textsubscript{$\pm$ 2.2}\\
        & 50 & 0.644\textsubscript{$\pm$ 0.158}  & 89.2\textsubscript{$\pm$ 1.2}  & 80.5\textsubscript{$\pm$ 4.7}  & 66.7\textsubscript{$\pm$ 16.8}  & 59.8\textsubscript{$\pm$ 20.1}  & 56.8\textsubscript{$\pm$ 18.2}  & 54.5\textsubscript{$\pm$ 17.2}\\
        & 100 & 0.653\textsubscript{$\pm$ 0.075}  & 93.3\textsubscript{$\pm$ 5.7}  & 90.6\textsubscript{$\pm$ 7.4}  & 88.3\textsubscript{$\pm$ 8.2}  & 85.2\textsubscript{$\pm$ 8.0}  & 80.6\textsubscript{$\pm$ 7.9}  & 73.8\textsubscript{$\pm$ 5.5} \\
        & 500 & 0.624\textsubscript{$\pm$ 0.020}  & 95.9\textsubscript{$\pm$ 1.7}  & 93.9\textsubscript{$\pm$ 1.9}  & 92.4\textsubscript{$\pm$ 2.5}  & 89.3\textsubscript{$\pm$ 2.9}  & 83.9\textsubscript{$\pm$ 1.8}  & 77.3\textsubscript{$\pm$ 3.6}\\
        & 1000 & 0.609\textsubscript{$\pm$ 0.015}  & 96.1\textsubscript{$\pm$ 1.3}  & 94.8\textsubscript{$\pm$ 1.8}  & 92.7\textsubscript{$\pm$ 2.1}  & 90.0\textsubscript{$\pm$ 1.3}  & 85.9\textsubscript{$\pm$ 2.0}  & 77.9\textsubscript{$\pm$ 2.2}\\
        & 5'000 & 0.597\textsubscript{$\pm$ 0.015}  & 96.7\textsubscript{$\pm$ 1.7}  & 95.0\textsubscript{$\pm$ 1.7}  & 92.8\textsubscript{$\pm$ 1.8}  & 91.1\textsubscript{$\pm$ 1.4}  & 86.3\textsubscript{$\pm$ 2.1}  & 76.6\textsubscript{$\pm$ 3.5} \\
        & 10'000 & 0.598\textsubscript{$\pm$ 0.017}  & 96.5\textsubscript{$\pm$ 1.8}  & 95.0\textsubscript{$\pm$ 1.4}  & 92.8\textsubscript{$\pm$ 1.8}  & 91.1\textsubscript{$\pm$ 1.4}  & 86.3\textsubscript{$\pm$ 2.7}  & 76.3\textsubscript{$\pm$ 3.1} \\
        & 50'000 & 0.597\textsubscript{$\pm$ 0.016}  & 96.7\textsubscript{$\pm$ 1.7}  & 94.9\textsubscript{$\pm$ 1.6}  & 92.8\textsubscript{$\pm$ 1.8}  & 91.2\textsubscript{$\pm$ 1.6}  & 86.2\textsubscript{$\pm$ 2.9}  & 76.6\textsubscript{$\pm$ 3.5} \\
        & 100'000 & 0.597\textsubscript{$\pm$ 0.016}  & 96.7\textsubscript{$\pm$ 1.7}  & 94.9\textsubscript{$\pm$ 1.6}  & 92.8\textsubscript{$\pm$ 1.8}  & 91.2\textsubscript{$\pm$ 1.6}  & 86.2\textsubscript{$\pm$ 2.9}  & 76.6\textsubscript{$\pm$ 3.5} \\
        \bottomrule
	\end{tabular}
}
\vspace{-3mm}
\end{table}

In the main paper, all experiments are conducted with leaf predictions discretized to $\Delta = 100$ values to enable our efficient CDF computation.
In this section, we investigate the effect of this discretization.
Concretely, we report results on \mnistof and \breast using a range of discretization-granularities from $2$ to $10\,000$ and $2$ to $100\,000$ in \cref{tab:ablation_discretizations} and \cref{tab:ablation_discretizations_tabular}, respectively.
While using a very coarse discretization can lead to a mode collapse (explaining the very high ACRs observed for $\ell_2$ perturbations in \cref{tab:ablation_discretizations_tabular}) and generally degraded performance, we observe that for sufficiently fine discretizations (typically $\Delta\geq50$) the results converge as the discretization is refined further. As the discretization effect on the ensemble's output is bounded by $\frac{M}{2\Delta}$, we conclude that these fine discretizations closely approximate the non-discretized case. We choose $\Delta = 100$ such that our discretized smoothed models generally approximately recover the behavior of the non-discretized models while allowing for fast computations of the ensemble PDF. 

While performance improves monotonically with finer discretizations for \mnistof in the $\ell_2$ setting and for \breast in the $\ell_1$ setting, it seems to peak and then declines again for \mnistof in the $\ell_1$ setting and for \breast in the $\ell_2$ setting. For \breast, we observe significantly larger standard deviations at coarse discretizations leading to overlapping $\pm 1$ standard deviation intervals for all discretization levels not suffering from a mode collapse and thus statistically insignificant results. 
For \mnistof in the $\ell_1$, the performance peak at $\Delta=50$ is statistically significant. We hypothesize that the coarser regularizations have a beneficial regularizing effect in this setting.

\subsection{Split Position Search Granularity}
\label{app:binning-size-ablation}

\begin{table}[tp]
	\centering
	\small
	\centering
	\caption{We compare the performance of models for different binning sizes with respect to average certified radius (ACR) and given certified accuracies (CA) [\%] on \mnistof. We utilize $\lambda=4.0$ for $\ell_1$ and $\sigma=0.5$ for $\ell_2$. Larger is better.}
	\label{tab:ablation-binning-size}
	\vspace{2mm}
	\resizebox{0.8\columnwidth}{!}{
    \begin{tabular}{ccccccccccc}
        \toprule
        \multirow{2.6}{*}{Norm} & \multirow{2.6}{*}{Binning Size} & \multirow{2.6}{*}{ACR} & \multicolumn{8}{c}{Certified Accuracy [\%] at Radius r} \\
        \cmidrule(lr){4-11}
        & & & 0.00 & 0.50 & 1.00 & 1.50 & 2.00 & 2.50 & 3.00 & 3.50 \\
        \midrule
        \multirow{11.0}{*}{$\ell_1$} & 4.0 & 3.452 & 96.2 & 95.2 & 93.6 & 91.7 & 89.4 & 86.3 & 82.7 & 75.3  \\
        & 2.0 & 3.452 & 96.2 & 95.2 & 93.6 & 91.7 & 89.4 & 86.3 & 82.7 & 75.3 \\
        & 1.0 & 3.468 & 96.5 & 95.6 & 94.1 & 92.0 & 89.9 & 86.5 & 83.1 & 75.2  \\
        & 0.5 & 3.465 & 96.6 & 95.5 & 94.2 & 91.9 & 89.9 & 86.6 & 83.1 & 75.0 \\
        & 0.1 & 3.462 & 96.6 & 95.5 & 94.2 & 92.0 & 89.9 & 86.4 & 83.0 & 74.8 \\
        & 0.05 & 3.466 & 96.5 & 95.6 & 94.1 & 91.9 & 89.9 & 86.5 & 83.1 & 75.1 \\
        & 0.01 & 3.467 & 96.6 & 95.6 & 94.1 & 92.1 & 89.9 & 86.5 & 83.1 & 75.1 \\
        & 0.005 & 3.467 & 96.6 & 95.6 & 94.1 & 92.1 & 89.9 & 86.5 & 83.1 & 75.1 \\
        & 0.001 & 3.467 & 96.5 & 95.6 & 94.1 & 92.1 & 90.0 & 86.5 & 83.1 & 75.1 \\
        & 0.0005 & 3.466 & 96.5 & 95.5 & 94.1 & 92.1 & 90.0 & 86.5 & 83.0 & 75.1 \\
        & 0.0001 & 3.467 & 96.5 & 95.5 & 94.2 & 92.1 & 89.9 & 86.5 & 83.1 & 75.1 \\
        \midrule
        \multirow{11.0}{*}{$\ell_2$} & 4.0 & 0.584 & 56.0 & 55.9 & 31.5 & 0.0 & 0.0 & 0.0 & 0.0 & 0.0 \\
        & 2.0 & 1.888 & 95.4 & 91.3 & 83.6 & 73.9 & 55.7 & 22.4 & 4.0 & 0.8 \\
        & 1.0 & 1.980 & 95.7 & 92.0 & 84.1 & 74.0 & 58.8 & 35.2 & 4.8 & 0.6 \\
        & 0.5 & 2.119 & 95.8 & 93.0 & 87.2 & 78.5 & 62.6 & 35.7 & 11.5 & 6.5 \\
        & 0.1 & 2.161 & 96.0 & 93.1 & 87.5 & 79.1 & 65.2 & 40.8 & 12.3 & 5.9 \\
        & 0.05 & 2.160 & 96.0 & 93.1 & 87.5 & 79.1 & 65.3 & 41.4 & 12.4 & 5.8 \\
        & 0.01 & 2.161 & 96.0 & 93.0 & 87.5 & 79.0 & 65.3 & 40.5 & 12.3 & 5.9 \\
        & 0.005 & 2.163 & 96.0 & 93.1 & 87.5 & 79.0 & 65.3 & 40.9 & 12.4 & 5.9 \\
        & 0.001 & 2.164 & 96.0 & 93.0 & 87.5 & 79.0 & 65.3 & 41.1 & 12.5 & 5.8 \\
        & 0.0005 & 2.164 & 96.0 & 93.0 & 87.5 & 79.0 & 65.3 & 41.2 & 12.5 & 5.8 \\
        & 0.0001 & 2.163 & 96.0 & 93.0 & 87.5 & 79.0 & 65.3 & 41.1 & 12.5 & 5.8 \\
		\bottomrule
	\end{tabular}
}
\end{table}

In our main paper, all experiments are conducted using a step size of $0.01$ to conduct the line search for the optimal split position $v_m$.
In \cref{tab:ablation-binning-size}, we report results for search granularities from $4.0$ to $10^{-4}$ and observe that a step size of $0.1$ is sufficiently fine and reducing it further does not improve the performance of the obtained models.
This suggest that our approximate optimization based on line search comes very close to the finding the true optimal split position and thus jointly MLE optimal $v_m$ and $ \gamma$.

\subsection{Error Bounds}
\label{app:error-bounds}

\begin{table}[tp]
	\centering
	\small
	\centering
	\caption{Average certified accuracy (ACR) Certified accuracy (CA) $[\%]$ at various radii with respect to $\ell_1$- and $\ell_2$-norm bounded perturbations on various datasets. Larger is better.}
	\label{tab:error-bounds-extensive}
	\vspace{2mm}
	\resizebox{0.99\columnwidth}{!}{
    \begin{tabular}{cccccccc}
        \toprule
        \multirow{2.5}{*}{Perturbation} & \multirow{2.5}{*}{Dataset} & \multirow{2.5}{*}{ACR} & \multicolumn{5}{c}{Radius $r$}\\
        \cmidrule(lr){4-8}
        & & & 0.00 & 0.10 & 0.25 & 0.50 & 1.00 \\
        \midrule
        \multirow{6.5}{*}{$\ell_1$} & \breast & 1.396\textsubscript{$\pm$ 0.039}  & 95.2\textsubscript{$\pm$ 1.4}  & 94.1\textsubscript{$\pm$ 1.4}  & 92.1\textsubscript{$\pm$ 1.3}  & 87.1\textsubscript{$\pm$ 0.8}  & 72.8\textsubscript{$\pm$ 2.8} \\
        & \diabetes  &  0.153\textsubscript{$\pm$ 0.010}  & 73.7\textsubscript{$\pm$ 3.1}  & 58.3\textsubscript{$\pm$ 2.9}  & 30.1\textsubscript{$\pm$ 4.0}  & 0.0\textsubscript{$\pm$ 0.0}  & 0.0\textsubscript{$\pm$ 0.0} \\
        & \spambase  & 2.541\textsubscript{$\pm$ 0.042}  & 89.1\textsubscript{$\pm$ 0.5}  & 88.2\textsubscript{$\pm$ 0.4}  & 87.4\textsubscript{$\pm$ 0.6}  & 85.2\textsubscript{$\pm$ 0.8}  & 80.6\textsubscript{$\pm$ 1.2} \\
        & \fmnists & 2.731\textsubscript{$\pm$ 0.027}  & 84.4\textsubscript{$\pm$ 0.8}  & 83.8\textsubscript{$\pm$ 0.8}  & 83.1\textsubscript{$\pm$ 0.8}  & 81.7\textsubscript{$\pm$ 0.8}  & 79.1\textsubscript{$\pm$ 0.7} \\
        & \mnistof & 3.425\textsubscript{$\pm$ 0.015}  & 96.2\textsubscript{$\pm$ 0.4}  & 95.9\textsubscript{$\pm$ 0.5}  & 95.4\textsubscript{$\pm$ 0.5}  & 94.7\textsubscript{$\pm$ 0.5}  & 93.0\textsubscript{$\pm$ 0.5} \\
        & \mnistts &  3.243\textsubscript{$\pm$ 0.008}  & 95.7\textsubscript{$\pm$ 0.3}  & 95.5\textsubscript{$\pm$ 0.3}  & 95.1\textsubscript{$\pm$ 0.3}  & 94.5\textsubscript{$\pm$ 0.3}  & 92.8\textsubscript{$\pm$ 0.3} \\
        \midrule
        \multirow{6.5}{*}{$\ell_2$} & \breast & 0.653\textsubscript{$\pm$ 0.075}  & 93.3\textsubscript{$\pm$ 5.7}  & 90.6\textsubscript{$\pm$ 7.4}  & 87.3\textsubscript{$\pm$ 8.6}  & 73.8\textsubscript{$\pm$ 5.5}  & 15.5\textsubscript{$\pm$ 21.5} \\
        & \diabetes  & 0.124\textsubscript{$\pm$ 0.005}  & 72.7\textsubscript{$\pm$ 3.5}  & 53.0\textsubscript{$\pm$ 3.4}  & 15.6\textsubscript{$\pm$ 2.6}  & 0.0\textsubscript{$\pm$ 0.0}  & 0.0\textsubscript{$\pm$ 0.0} \\
        & \spambase  & 0.884\textsubscript{$\pm$ 0.006}  & 89.7\textsubscript{$\pm$ 1.0}  & 87.4\textsubscript{$\pm$ 1.2}  & 83.7\textsubscript{$\pm$ 1.1}  & 73.6\textsubscript{$\pm$ 1.0}  & 40.9\textsubscript{$\pm$ 0.8} \\
        & \fmnists & 1.334\textsubscript{$\pm$ 0.012}  & 85.0\textsubscript{$\pm$ 0.8}  & 83.7\textsubscript{$\pm$ 0.9}  & 81.5\textsubscript{$\pm$ 0.7}  & 78.1\textsubscript{$\pm$ 0.5}  & 68.6\textsubscript{$\pm$ 0.7} \\
        & \mnistof & 1.720\textsubscript{$\pm$ 0.006}  & 95.3\textsubscript{$\pm$ 0.4}  & 94.8\textsubscript{$\pm$ 0.4}  & 94.0\textsubscript{$\pm$ 0.5}  & 92.3\textsubscript{$\pm$ 0.6}  & 87.9\textsubscript{$\pm$ 0.3} \\
        & \mnistts & 1.613\textsubscript{$\pm$ 0.007}  & 95.5\textsubscript{$\pm$ 0.2}  & 94.9\textsubscript{$\pm$ 0.3}  & 93.9\textsubscript{$\pm$ 0.2}  & 91.7\textsubscript{$\pm$ 0.2}  & 84.9\textsubscript{$\pm$ 0.7} \\
		\bottomrule
	\end{tabular}
}
\vspace{-4mm}
\end{table}

In \cref{tab:error-bounds-extensive} we report the mean and standard deviation of the certified accuracies at various radii across a 5-fold cross validation for datasets including only numerical features.
We observe, that our results are very stable with standard deviations of less than $1.0\%$ on the computer vision datasets, which have large sample sizes.
On the tabular datasets, which consist of much fewer samples, the dependence on the train/test-split is slightly larger with standard deviations reaching around $4.0\%$ in some settings. 
Only where a large noise magnitude ($\sigma=4$), very small sample sizes, and large radii come together (\breast for $\ell_2$ perturbations of $r=1.0$) do we observe a large sensitivity to the train/test-split and thus a high standard deviations of up to $21\%$.

\section{(De-)Randomized Smoothing for Decision Tree Ensembles}
\label{sec:appendix-trees}
While we focus on decision stump ensembles in the main paper and in particular in \cref{sec:det_smoothing}, our approach can easily be extended to ensembles of decision trees with arbitrary depths which do not use the same features in distinct decision trees.
In particular, our approach can be easily extended to arbitrary individual decision trees.

Recall that the key idea of our approach is to group individual decision stumps into independent meta-stumps, allowing us to represent the output of the overall smoothed ensemble as the sum of independent terms.
We can apply the same idea here by constructing meta-stumps over all features used in an individual tree.
As we do not permit features to be reused in multiple trees, every tree is independent of all others.

For every leaf $j$ of a decision tree $m$ with prediction $\gamma_{m,j}$, we can accumulate the constraints along the path from the root to the leaf of that tree as 
\begin{equation}
	\psi_j(\vx) = \bigwedge_i x_i > v_{j,i}^- \wedge x_i \leq v_{j,i}^+.
\end{equation}
Note that if $x_i$ is not constrained (in one direction) on the path to leaf $j$, we can simply set the corresponding threshold $v^{\{+,-\}}$ to $\pm \infty$.
This allows us to formally define a smoothed decision tree as
\begin{equation}
	g_m(\vx) = \sum_j \gamma_{m,j} \, \prod_{i} \P_{\vx_i' \sim \dist(\vx_i)}[v_{j,i}^- < x'_i \leq v_{j,i}^+].
\end{equation}

As all features are perturbed independently under the randomization scheme $\dist$, we can compute the probability of a perturbed sample satisfying $\psi_j$ and thus landing in leaf $j$ by factorization as 
\begin{align}
	\P_{\vx' \sim \dist(\vx)}[\psi_j(\vx')] = &\prod_{i} \underbrace{\P_{\vx' \sim \dist(\vx)}[x'_i > v_{j,i}^- \wedge x'_i \leq v_{j,i}^+]}_{p_{j,i}} \\
	= &\prod_{i} \P_{\vx' \sim \dist(\vx)}[x'_i \leq v_{j,i}^+] - \P_{\vx' \sim \dist(\vx)}[x'_i \leq v_{j,i}^-].
\end{align}
For many common randomization schemes where the (dimension-wise) CDF is available, this expression can be evaluated efficiently in closed form, e.g., when using a Gaussian distribution as the randomization scheme $\dist(\vx) = \bc{N}(\vx, \sigma \mI)$, typically used for $\ell_2$-norm certificates (see \cref{tab:rs}), and given the inverse Gaussian CDF $\Phi^{-1}$, we obtain
\begin{equation}
	\P_{\vx' \sim \dist(\vx)}[\psi_j(\vx')] = \prod_{i} \Phi^{-1}\left(\frac{v_{j,i}^+ - x_i}{\sigma}\right) - \Phi^{-1}\left(\frac{v_{j,i}^- - x_i}{\sigma}\right).
\end{equation}
We can now construct a meta-stump equivalent per decision tree, where the piece-wise constant regions with output $\gamma_{m,j}$ are now simply defined over multiple variables instead of over a single variable. 
We illustrate this in \cref{fig:tree}, where we show a decision tree (\cref{fig:tree}a)) on features $x_1$ and $x_2$ (partially truncating depth to avoid clutter) and the corresponding output landscape (\cref{fig:tree}b)). We can now compute the probability of $\vx' \sim \dist(\mathbf{0})$ falling into the blue region as the product of the probabilities of $x'_1$ lying in $[0.5, 1)$, $p_{j,1}$, and $x'_2$ lying in $[0.5, \infty)$, $p_{j,2}$. Proceeding similarly for the other regions, we can instantiate \cref{alg:dp}, as for regular meta-stumps, only replacing the probability computation as discussed above.

This allows us to adapt \cref{alg:dp} to iterate over the ensembled decision trees instead of over features.

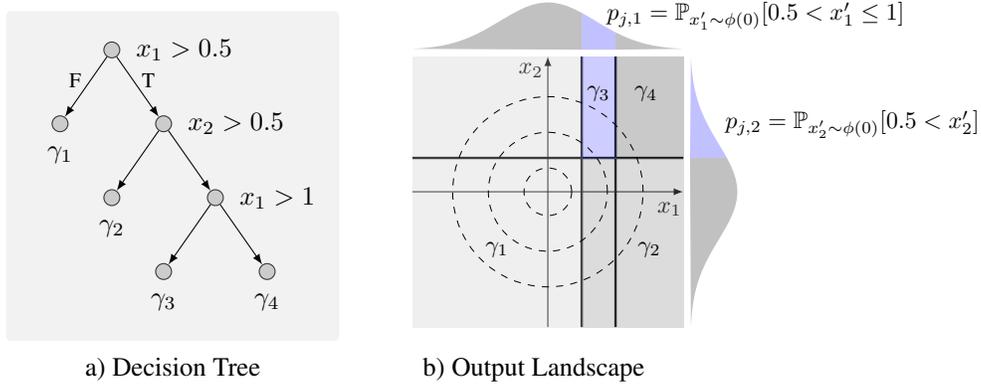
\begin{figure}
	\centering
	\begin{tikzpicture}
	\tikzset{>=latex}

	\def\x0{-3.5}
	\def\xx0{-0.0}
	\def\yy0{2.5}
	\def\dx{4.3}
	\def\ddx{1.5}
	\def\dddx{2.0}
	\def\ddy{0.9}

	\node (tree) [ fill=black!05, rectangle,
	minimum width=1.2cm, minimum height=0.80cm, align=center, scale=1.0, rounded corners=2pt,
	anchor=north] at (0, 0) {
		\begin{tikzpicture}[scale=1.0]
		\node (root)[draw=black!80, fill=black!20, circle, minimum size=6pt, inner sep=0pt, anchor=center] at (0, -1.0) {};	
		\node (A)[draw=black!80, fill=black!20, circle, minimum size=6pt, inner sep=0pt, anchor=center] at ($(-55:1.2) + (root)$) {};	
		\node (B)[draw=black!80, fill=black!20, circle, minimum size=6pt, inner sep=0pt, anchor=center] at ($(-125:1.2) + (root)$) {};	
		\node (AA)[draw=black!80, fill=black!20, circle, minimum size=6pt, inner sep=0pt, anchor=center] at ($(-55:1.2) + (A)$) {};
		\node (AB)[draw=black!80, fill=black!20, circle, minimum size=6pt, inner sep=0pt, anchor=center] at ($(-125:1.2) + (A)$) {};
		\node (AAA)[draw=black!80, fill=black!20, circle, minimum size=6pt, inner sep=0pt, anchor=center] at ($(-55:1.2) + (AA)$) {};
		\node (AAB)[draw=black!80, fill=black!20, circle, minimum size=6pt, inner sep=0pt, anchor=center] at ($(-125:1.2) + (AA)$) {};
		
		\draw[->]  (root) -- (A);
		\draw[->]  (root) -- (B);
		\draw[->]  (A) -- (AB);
		\draw[->]  (A) -- (AA);
		\draw[->]  (AA) -- (AAA);
		\draw[->]  (AA) -- (AAB);
		
		\node ()[anchor=west ] at ($(root) + (0.2,0)$) {$x_1 > 0.5$};
		\node ()[anchor=west ] at ($(A) + (0.2,0)$) {$x_2 > 0.5$};
		\node ()[anchor=west ] at ($(AA) + (0.2,0)$) {$x_1 > 1$};

		\node ()[anchor=west, scale=0.8] at ($(root) + (0.0,-0.4)$) {T};
		\node ()[anchor=east, scale=0.8] at ($(root) + (0.0,-0.4)$) {F};
		
		\node ()[anchor=north] at ($(B) + (0.0,0.0)$) {$\gamma_1$};
		\node ()[anchor=north] at ($(AB) + (0.0,0.0)$) {$\gamma_2$};
		\node ()[anchor=north] at ($(AAB) + (0.0,0.0)$) {$\gamma_3$};
		\node ()[anchor=north] at ($(AAA) + (0.0,0.0)$) {$\gamma_4$};
		\end{tikzpicture}
	};

	\node (region) [rectangle,	minimum width=1.2cm, minimum height=0.80cm, scale=0.9, rounded corners=2pt,
	anchor=north] at (7, 0.4) {
		\begin{tikzpicture}[scale=1.0]

		\def\a{2.0}
		\def\b{0.5}
		\def\c{1.0}
		
		\def\d{2.1}
		
		\def\fb{1/exp(((\b)^2)/2)}
		\def\fc{1/exp(((\c)^2)/2)}
		
		\def\normal{\x,{\d+0.7/exp(((\x)^2)/1)}}
		\fill [fill=black!60,rounded corners=0pt,opacity=0.4] plot[domain=-\a:\b] (\normal) -- ({\b},\d) -- cycle;
		\fill [fill=blue!60,rounded corners=0pt,opacity=0.4] ({\b},\d) -- plot[domain=\b:\c] (\normal) -- ({\c},\d) -- cycle;
		\fill [fill=black!60,rounded corners=0pt,opacity=0.4] ({\c},\d) -- plot[domain=\c:\a] (\normal) -- cycle;

		\def\normaly{{\d+0.7/exp(((\x)^2)/1)}, \x}
		\fill [fill=blue!60,rounded corners=0pt,opacity=0.4] (\d,\b) -- plot[domain=\b:\a] (\normaly) -- cycle;
		\fill [fill=black!60,rounded corners=0pt,opacity=0.4] plot[domain=-\a:\b] (\normaly) -- (\d,\b) -- cycle;
		
		\coordinate (xm) at ({-\a},{0});
		\coordinate (xp) at ({\a},{0});
		\coordinate (ym) at ({0},{-\a});
		\coordinate (yp) at ({0},{\a});
		
		\coordinate (pp) at ({\a},{\a});
		\coordinate (mm) at ({-\a},{-\a});
		\coordinate (mp) at ({-\a},{\a});
		\coordinate (pm) at ({\a},{-\a});
		
		\coordinate (bm) at ({\b},{-\a});
		\coordinate (bp) at ({\b},{\a});

		\coordinate (cm) at ({\c},{-\a});
		\coordinate (cp) at ({\c},{\a});
		
		\coordinate (mb) at ({-\a},{\b});
		\coordinate (pb) at ({\a},{\b});
		
		\coordinate (bb) at ({\b},{\b});
		\coordinate (cb) at ({\c},{\b});

		\draw[->]  (xm) -- (xp);
		\draw[->]  (ym) -- (yp);
		\node ()[] at ($(xp)+(-0.20,0.15)$) {$x_1$};
		\node ()[] at ($(yp)+(-0.25,0.2)$) {$x_2$};

		\draw[-, line width = 1pt]  (cm) -- (cp);
		\draw[-, line width = 1pt]  (bm) -- (bp);
		\draw[-, line width = 1pt]  (mb) -- (pb);

		\fill[fill=black!20,opacity=0.3,rounded corners=0pt] (mp) -- (bp) -- (bm) -- (mm) -- cycle;		
		\fill[fill=black!70,opacity=0.3,rounded corners=0pt] (pp) -- (cp) -- (cb) -- (pb) -- cycle;
		\fill[fill=blue!65,opacity=0.3,rounded corners=0pt] (bp) -- (bb) -- (cb) -- (cp) -- cycle;
		\fill[fill=black!45,opacity=0.3,rounded corners=0pt] (pb) -- (bb) -- (bm) -- (pm) -- cycle;
		
		\draw[style=dashed](0,0) circle (10pt);
		\draw[style=dashed](0,0) circle (25pt);
		\draw[style=dashed](0,0) circle (40pt);

		\node ()[anchor=center] at ($(mp)!0.50!(bm)+(0,-0.85)$) {$\gamma_1$};
		\node ()[anchor=center] at ($(pp)!0.50!(bb)+(0.2,0.2)$) {$\gamma_4$};
		\node ()[anchor=center] at ($(bp)!0.50!(cb)+(0,0.2)$) {$\gamma_3$};
		\node ()[anchor=center] at ($(pb)!0.50!(cm)+(0,-0.1)$) {$\gamma_2$};
		
		\node () [anchor=west] at ($({(\c+\b)/2},{\d + 0.5})$) {$p_{j,1} = \P_{x_1' \sim \dist(0)}[0.5 < x'_1 \leq 1]$};
		\node () [anchor=west] at ($({\d + 0.4},{\b+0.5})$) {$p_{j,2} = \P_{x_2' \sim \dist(0)}[0.5 < x'_2]$};

		\end{tikzpicture}
	};
  
  	\node (labela) [anchor=north] at ($(tree) + (.0,-2.3)$) {a) Decision Tree};
  	\node [anchor=north] at ($(labela.north) + (4.8,0)$) {b) Output Landscape};
  
\end{tikzpicture}
	\caption{Illustration of a meta-stump on a decision tree with feature re-use.}
	\label{fig:tree}
\end{figure}
}{}

\message{^^JLASTPAGE \thepage^^J}

\end{document}